\documentclass{article}

\PassOptionsToPackage{numbers,sort&compress}{natbib}
\usepackage[preprint]{Metake}

\usepackage[utf8]{inputenc} 
\usepackage[T1]{fontenc}    
\usepackage{hyperref}       
\usepackage{url}            
\usepackage{booktabs}       
\usepackage{amsfonts}       
\usepackage{nicefrac}       
\usepackage{microtype}      
\usepackage{xcolor}         
\usepackage{amsthm}
\usepackage{amssymb}
\usepackage{graphicx}
\newtheorem{theorem}{Theorem}
\newtheorem{corollary}{Corollary}
\newtheorem{proposition}{Proposition}

\usepackage{amsmath}
\usepackage{algorithm}
\usepackage{algpseudocode}
\usepackage{float}

\newtheorem{assumption}{Assumption}
\usepackage{multirow}
\usepackage{enumitem}
\usepackage{wrapfig}
\usepackage{array}
\usepackage[table]{xcolor}
\usepackage{colortbl}
\usepackage{soul}
\usepackage{booktabs}
\newcommand{\best}[1]{\textcolor{red}{#1}}
\newcommand{\second}[1]{\underline{#1}}
\definecolor{metablue}{RGB}{232,244,252}
\usepackage{calc}
\usepackage{tcolorbox}
\tcbuselibrary{breakable}
\definecolor{myblue}{RGB}{0,112,192}
\usepackage{tabularx}
\usepackage{ragged2e}
\usepackage{hyperref}       
\hypersetup{linkcolor=black}
\newcommand{\appendixentry}[2]{%
  \noindent
  \makebox[1.7em][l]{\ref*{#1}}#2%
  \dotfill
  \hyperref[#1]{\pageref*{#1}}%
  \par
}

\newcommand{\appendixsubentry}[2]{%
  \noindent\hspace{1.5em}%
  \makebox[2.4em][l]{\ref*{#1}}#2%
  \dotfill
  \hyperref[#1]{\pageref*{#1}}%
  \par
}

\newcommand{\appendixsubsubentry}[2]{%
  \noindent\hspace{3.0em}%
  \makebox[3.1em][l]{\ref*{#1}}#2%
  \dotfill
  \hyperref[#1]{\pageref*{#1}}%
  \par
}

\title{MetaKE: Meta-learning Aligned Knowledge Editing via Bi-level Optimization}

\author{
Shuxin Liu$^{1}$ \quad Di Gao$^{1}$ \quad Ou Wu$^{1}$\thanks{Corresponding author.} \\
\texttt{liushuxin25@mails.ucas.ac.cn} \quad
\texttt{gaodi25@ucas.ac.cn} \quad
\texttt{wuou@ucas.ac.cn} \\
$^{1}$Hangzhou Institute for Advanced Study\\ University of Chinese Academy of Sciences\\
  Hangzhou 310024, China
}

\begin{document}
\maketitle
\begin{abstract}
Existing locate-then-edit Knowledge Editing (KE) methods typically decompose editing into two stages: upstream target representation optimization and downstream constrained parameter optimization. The optimization across the two stages is disconnected: upstream applies uniform regularization without observing downstream realization of the planned residual, hindering a refined accuracy–editability trade-off. Since this realization is request-specific and depends on downstream constraints, uniform regularization can over-shrink high-association requests, causing
insufficient editing, while it can under-regularize low-association requests,
producing over-large planned residuals that reduce downstream editability. To bridge this disconnect, we propose \textbf{MetaKE} (\textbf{Meta}-learning for \textbf{K}nowledge \textbf{E}diting), a new framework that unifies upstream and downstream stages into a bi-level optimization problem. The inner level optimizes parameter updates for the target representation, while the outer level optimizes representation using feedback from downstream constraints, achieving a better semantic accuracy-editability trade-off.  To avoid costly multi-layer backpropagation, we introduce a Structural Gradient Proxy to approximate and propagate this feedback. Extensive experiments show that MetaKE outperforms strong baselines, offering a new perspective on KE.
\end{abstract}

\vspace{-0.2in}
\section{Introduction}
\vspace{-0.05in}
Large Language Models (LLMs) have become a core substrate for NLP systems \citep{yao2023editing,brown2020language,rae2021scaling}, yet their parametric memory inevitably contains factual errors and outdated knowledge \citep{zhang2024comprehensive,ji2023survey,ferrara2023should,mitchell2022memory,liska2022streamingqa,anderson1972simple,lazaridou2021mind}. KE aims to update specific knowledge in LLMs while preserving general capabilities \citep{de2021editing,hartvigsen2023aging}. Recent studies suggest that the \emph{editing} process in locate-then-edit methods (e.g., MEMIT \citep{mengmass}, AlphaEdit \citep{fangalphaedit}) consists of two optimization stages \citep{gupta2024unified,gupta-etal-2025-lifelong,baghel2025resolving,wang2025revealing,zhao2025fedleke}: an upstream stage that optimizes a target representation and induces a planned residual, and a downstream stage that realizes this residual through constrained parameter optimization. In the upstream stage, methods such as
PMET \citep{li2024pmet} and STEAM \citep{jeong-etal-2025-steam} apply uniform KL
regularization across different target knowledge to improve semantic accuracy
and reduce interference with preserved knowledge. In the downstream stage, MEMIT uses covariance constraints, while AlphaEdit introduces projection constraints to protect preserved knowledge.

Despite these advances, the optimization across the two stages remains disconnected,
which hinders a refined accuracy--editability trade-off. The upstream controls
the planned residual through the target representation, whereas the downstream stage realizes only a request-specific fraction of it,
governed by the request's association with the downstream constraints, i.e.,
how strongly the request interacts with those constraints. The shared upstream regularizer,
however, is association-blind: it scales the planned residual without observing
its request-specific realization. Since the realized update depends jointly on
this residual and the request's association with the downstream constraints, the
same regularization strength may fail to suit different requests:
high-association requests may be over-shrunk, causing insufficient editing,
while low-association requests may yield over-large planned residuals that are
difficult to realize under downstream constraints, thereby reducing downstream
editability. A request-specific regularization strength can rescale the planned
residual, but such a scalar adjustment changes the residual magnitude without
revealing to the upstream stage how much of that residual will be realized after
downstream optimization. Intuitively, if the upstream stage could receive
feedback on downstream constraints while optimizing the target representation,
it could achieve a better semantic accuracy--editability trade-off.

To bridge this disconnect, we propose \textbf{MetaKE} (\textbf{Meta}-learning
for \textbf{K}nowledge \textbf{E}diting), a new framework that unifies upstream
and downstream stages into a bi-level optimization problem. The inner level
optimizes the parameter update for a given target representation, while the
outer level optimizes the target representation using feedback from the
downstream constraints; the target representation sits at the outer level
because it specifies what the parameter update should achieve, while the update
is its constrained realization. To avoid expensive multi-layer backpropagation,
we introduce a \emph{Structural Gradient Proxy}, which efficiently approximates
and propagates feedback from downstream constraints without full differentiation
through the multi-layer process. Based on this mechanism, MetaKE first refines
the target representation in the bi-level phase using downstream-aware feedback,
and then performs the multi-layer parameter update based on the refined target.

Our contributions are summarized as follows:
\begin{itemize}[leftmargin=*,nosep]
    \item We analyze the disconnect between the two optimization stages and show
    that, under a local quadratic surrogate, downstream realization is
    request-specific while the shared upstream regularizer is association-blind, thereby hindering a refined accuracy--editability trade-off.
    \item We propose \textbf{MetaKE}, a new framework that unifies upstream and
    downstream stages into a bi-level optimization problem, enabling the upstream
    to perceive the downstream constraints and achieve a better semantic
    accuracy--editability trade-off.
    \item We introduce a \emph{Structural Gradient Proxy} that provides a
    tractable approximation of downstream constraints feedback, avoiding costly
    full differentiation through multi-layer updates.
    \item Experiments on GPT-2 XL (1.5B), GPT-J (6B), and LLaMA3 (8B) show that
    MetaKE achieves SOTA performance and consistently improves the
    accuracy-editability trade-off.

\end{itemize}

\vspace{-0.1in}
\section{Related Work}
\vspace{-0.1in}
\label{sec:related_work}
\textbf{Knowledge Editing with Constraints.}
The locate-then-edit paradigm is central for KE, where factual associations are
localized and modified through targeted updates
\citep{meng2022locating,dai2022knowledge,mengmass}. Knowledge Neurons
attributes factual knowledge to feed-forward neurons \citep{dai2022knowledge},
while ROME edits factual recall via rank-one updates
\citep{meng2022locating}. MEMIT extends this to multi-layer editing
with covariance-based preservation constraints \citep{mengmass}. EMMET unifies
ROME and MEMIT under a preservation--memorization objective and derives a
closed-form constrained update \citep{gupta2024unified}. AlphaEdit projects
perturbations onto the null space of preserved knowledge to reduce disruption
to preserved key-value associations \citep{fangalphaedit}, while LyapLock
studies preservation under sequential editing \citep{wang-etal-2025-lyaplock}. These
methods mainly strengthen downstream updates, constraints, or edit trajectories after
target optimization. In contrast, MetaKE studies how downstream constraints
provide feedback to upstream target optimization.

\textbf{Meta-Learning and Target-Centric Optimization for Knowledge Editing.}
Another line of work improves KE by learning update mappings or optimizing
targets. KE and MEND train hypernetworks to transform gradients into parameter
updates \citep{de2021editing,mitchellfast}, while MALMEN scales this paradigm
to batched edits via least-squares approximations \citep{tan2024massive}.
InstructEdit improves editor generalization with task-specific instructions,
and RLEdit formulates lifelong hypernetwork editing as reinforcement learning
\citep{zhang2024instructedit,li2025reinforced}. Target-centric methods such as
AdaEdit, AnyEdit, REP, and DAFNet refine editing targets, robust keys, or
sequence-level representations
\citep{li2025adaedit,jianganyedit,yan2025keys,zhang2024dafnet}. These methods
show the importance of learned update mappings and target optimization, but
their signals are mainly semantic, gradient-based, instructional, or
robustness-oriented. MetaKE instead couples upstream target refinement with
downstream constrained realization, refining targets using downstream feedback.

\vspace{-0.1in}
\section{Preliminaries}
\label{sec:background}
\vspace{-0.05in}

\vspace{-0.08in}
\subsection{Knowledge Editing as Linear Associative Memory}
\vspace{-0.05in}
KE typically treats Transformer FFNs as linear associative memories
\citep{geva2021transformer,kohonen1972correlation,anderson1972simple}. For an
edited layer \(l\), the FFN output matrix
\(\boldsymbol{W}^{l}_{\mathrm{out}}\in\mathbb{R}^{d_1\times d_0}\) maps a key
\(\boldsymbol{k}_l\in\mathbb{R}^{d_0}\), which encodes the subject-relation
pair \((s,r)\), to a value \(\boldsymbol{m}_l\in\mathbb{R}^{d_1}\), which
contributes to predicting the object \(o\):
\begin{equation}
\boldsymbol{m}_l \approx \boldsymbol{W}^{l}_{\mathrm{out}}\boldsymbol{k}_l,
\qquad
\boldsymbol{k}_l
=
\sigma\!(\boldsymbol{W}^{l}_{\mathrm{in}}\gamma(\cdot)),
\label{eq:bg:assoc}
\end{equation}
where \(\boldsymbol{W}^{l}_{\mathrm{in}}\) is the FFN input matrix,
\(\gamma(\cdot)\) denotes layer normalization, and \(\sigma(\cdot)\) is the
FFN activation function. In this paper, we use
\(\boldsymbol{W}_l:=\boldsymbol{W}^{l}_{\mathrm{out}}\). Editing seeks a
parameter update such that the post-edit memory output aligns with the
desired object \(o^*\).

\subsection{The Editing Process in Locate-then-Edit as Two Optimization Stages}
\label{sec:bg:l2e}

In this paper, \textbf{semantic accuracy} measures how well a target
representation \(\boldsymbol{v}^*\) encodes the desired knowledge. \textbf{Editability} denotes how well
\(\boldsymbol{v}^*\) can be translated into parameter updates under
downstream constraints. Recent analyses suggest that the editing process in locate-then-edit methods
can be viewed as two optimization stages
\citep{gupta2024unified,gupta-etal-2025-lifelong,baghel2025resolving,wang2025revealing,zhao2025fedleke}.

\textbf{Stage I: Upstream Target Representation Optimization.} MEMIT-style variants optimize \(\boldsymbol{v}^*\) for
the desired knowledge through a likelihood objective
with KL regularization
\citep{li2024pmet,park-etal-2025-context,wei2025setke,fei2026scaling}:
\begin{equation}
\small
\boldsymbol{v}^*
=
\boldsymbol{m}_L+
\operatorname{arg\,min}_{\boldsymbol{\delta}}
(
-\log
\mathbb{P}_{f_{L_{\mathrm{out}}}(\boldsymbol{m}_L+\boldsymbol{\delta})}
[o^*\mid s,r]
+
\omega
D_{\mathrm{KL}}
(
\mathbb{P}_{f}[\cdot\mid p']
\Vert
\mathbb{P}_{f_{L_{\mathrm{out}}}(\boldsymbol{m}_L+\boldsymbol{\delta})}
[\cdot\mid p']
)
),
\label{eq:bg:vstar}
\end{equation}
where \(\boldsymbol{m}_L\) is the original representation
at the last edited layer \(L\), \(f_{L_{\mathrm{out}}}(\cdot)\) denotes the remaining forward, and \(p'\) is the prompt template. The likelihood
term promotes semantic accuracy, while the KL term limits distributional shift.
The upstream stage optimizes the semantic target under a uniform regularization
strength without accounting for how editable it will be under downstream constraints.

\textbf{Stage II: Downstream Constrained Parameter Optimization.}
Given the target representation \(\boldsymbol{v}^*\), the downstream stage
optimizes a constrained parameter update. Let \(\boldsymbol{k}_1\) be the key of the to-be-updated
knowledge. The prescribed residual is
\(\boldsymbol{r}=\boldsymbol{v}^*-\boldsymbol{W}\boldsymbol{k}_1\). For AlphaEdit, the preserved keys \(\boldsymbol{K}_0\) are protected by a
null-space projection matrix \(\boldsymbol{P}\) satisfying
\(\boldsymbol{P}\boldsymbol{K}_0=\boldsymbol{0}\), while
\(\boldsymbol{K}_p\) denotes the keys of previously edited knowledge. The
objective can be written as
\begin{equation}
\vspace{-0.5em}
\operatorname{arg\,min}_{\widetilde{\boldsymbol{\Delta}}}
\left(
\|
(\boldsymbol{W}+\widetilde{\boldsymbol{\Delta}}\boldsymbol{P})
\boldsymbol{k}_1
-
\boldsymbol{v}^*
\|_2^2
+
\|
\widetilde{\boldsymbol{\Delta}}\boldsymbol{P}
\|_F^2
+
\|
\widetilde{\boldsymbol{\Delta}}\boldsymbol{P}\boldsymbol{K}_p
\|_F^2
\right).
\vspace{-0.6em}
\label{eq:bg:alpha_downstream}
\end{equation}
The applied update is
\begin{equation}
\vspace{-0.4em}
\boldsymbol{\Delta}^{*}_{\mathrm{AlphaEdit}}
=
\boldsymbol{r}\boldsymbol{k}_1^{T}\boldsymbol{P}
\left(
\boldsymbol{K}_p\boldsymbol{K}_p^{T}\boldsymbol{P}
+
\boldsymbol{k}_1\boldsymbol{k}_1^{T}\boldsymbol{P}
+
\boldsymbol{I}
\right)^{-1}.
\label{eq:bg:alpha_solution}
\end{equation}

\vspace{-0.2in}
\section{Methodology}
\label{sec:method}
We first analyze the disconnect between the two optimization stages, and then describe our proposed method MetaKE. Fig.~\ref{fig:metake_architecture} illustrates the overall architecture.

\vspace{-0.05in}
\subsection{The Disconnect Between the Two Optimization Stages}
\label{sec:analysis}
\vspace{-0.05in}

Stage~I optimizes \(\boldsymbol{v}^{*}\) under a uniform regularization
strength (e.g., the KL weight \(\omega\)), while Stage~II realizes this
target through a constrained update induced by the downstream
constraints. For the
current edit, the key \(\boldsymbol{k}_1\) is determined by the input
subject--relation pair through the forward pass and is not an upstream
variable; the quantity Stage~I effectively controls is the planned
residual
\(\boldsymbol{r}=\boldsymbol{v}^{*}-\boldsymbol{W}\boldsymbol{k}_1\).
However, what Stage~II writes into the model is not \(\boldsymbol{r}\)
itself.

\begin{proposition}[Realized-Residual Attenuation]
\label{prop:target_attenuation}
Consider a current edit with key \(\boldsymbol{k}_1\) and planned residual
\(\boldsymbol{r}\). Under the single-request local quadratic surrogate of
Stage~II with fixed keys and constraints, the realized residual satisfies
\(\boldsymbol{\delta}^{\mathrm{real}}=\beta\boldsymbol{r}\) with
\(\beta=\rho/(1+\rho)\in[0,1)\). We refer to \(\rho\) as the
request-specific \emph{association}, which quantifies how strongly
\(\boldsymbol{k}_1\) interacts with the downstream constraints. For MEMIT
covariance constraints,
\(\rho=\boldsymbol{k}_1^{T}(\boldsymbol{C}_0+\boldsymbol{K}_p
\boldsymbol{K}_p^{T})^{-1}\boldsymbol{k}_1\), where \(\boldsymbol{C}_0= \boldsymbol{K}_0\boldsymbol{K}_0^T\); for AlphaEdit projected
updates,
\(\rho=(\boldsymbol{P}\boldsymbol{k}_1)^{T}(\boldsymbol{I}+
\boldsymbol{P}\boldsymbol{K}_p\boldsymbol{K}_p^{T}\boldsymbol{P})^{-1}
\boldsymbol{P}\boldsymbol{k}_1\). For batch editing, the scalar attenuation
is replaced by a request-interaction matrix. Proof in
Appendix~\ref{app:spectral_proofs}.
\end{proposition}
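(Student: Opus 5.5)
The plan is to solve each surrogate Stage~II problem in closed form, apply the resulting update to \(\boldsymbol{k}_1\), and read off the scalar gain multiplying \(\boldsymbol{r}\) via the Sherman--Morrison identity.

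\textbf{MEMIT.} The single-request quadratic surrogate is
\[
\min_{\boldsymbol{\Delta}}\ \|(\boldsymbol{W}+\boldsymbol{\Delta})\boldsymbol{k}_1-\boldsymbol{v}^*\|_2^2+\operatorname{tr}\!\big(\boldsymbol{\Delta}\boldsymbol{A}\boldsymbol{\Delta}^{T}\big),
\qquad
\boldsymbol{A}=\boldsymbol{C}_0+\boldsymbol{K}_p\boldsymbol{K}_p^{T}.
\]
Here the regularizer is the penalty \(\|\boldsymbol{\Delta}\boldsymbol{K}_0\|_F^2+\|\boldsymbol{\Delta}\boldsymbol{K}_p\|_F^2\), and \(\boldsymbol{A}\) is assumed positive definite. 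Setting the gradient to zero gives \(\boldsymbol{\Delta}(\boldsymbol{A}+\boldsymbol{k}_1\boldsymbol{k}_1^{T})=\boldsymbol{r}\boldsymbol{k}_1^{T}\). Hence \(\boldsymbol{\Delta}^*=\boldsymbol{r}\boldsymbol{k}_1^{T}(\boldsymbol{A}+\boldsymbol{k}_1\boldsymbol{k}_1^{T})^{-1}\). The realized residual is \(\boldsymbol{\delta}^{\mathrm{real}}=\boldsymbol{\Delta}^*\boldsymbol{k}_1=\boldsymbol{r}\,\big(\boldsymbol{k}_1^{T}(\boldsymbol{A}+\boldsymbol{k}_1\boldsymbol{k}_1^{T})^{-1}\boldsymbol{k}_1\big)\). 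Sherman--Morrison gives \(\boldsymbol{k}^{T}(\boldsymbol{A}+\boldsymbol{k}\boldsymbol{k}^{T})^{-1}\boldsymbol{k}=\rho-\rho^2/(1+\rho)=\rho/(1+\rho)\), where \(\rho=\boldsymbol{k}_1^{T}\boldsymbol{A}^{-1}\boldsymbol{k}_1\). Since \(\boldsymbol{A}\succ 0\), we have \(\rho\ge 0\), so \(\beta\in[0,1)\).

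\textbf{AlphaEdit.} Starting from \eqref{eq:bg:alpha_downstream}, write \(\boldsymbol{D}=\widetilde{\boldsymbol{\Delta}}\boldsymbol{P}\), which restricts the row space to \(\mathrm{range}(\boldsymbol{P})\). Since \(\boldsymbol{P}\) is a symmetric idempotent projection, the objective becomes
\[
\|\boldsymbol{D}\boldsymbol{P}\boldsymbol{k}_1-\boldsymbol{r}\|^2+\operatorname{tr}\!\big(\boldsymbol{D}(\boldsymbol{P}+\boldsymbol{P}\boldsymbol{K}_p\boldsymbol{K}_p^{T}\boldsymbol{P})\boldsymbol{D}^{T}\big).
\]
On \(\mathrm{range}(\boldsymbol{P})\) this is a MEMIT-type problem with key \(\tilde{\boldsymbol{k}}=\boldsymbol{P}\boldsymbol{k}_1\) and Gram matrix \(\boldsymbol{B}=\boldsymbol{I}+\boldsymbol{P}\boldsymbol{K}_p\boldsymbol{K}_p^{T}\boldsymbol{P}\), restricted to that subspace. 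As a consistency check, one can instead start from the explicit solution \eqref{eq:bg:alpha_solution} and compute \(\boldsymbol{\Delta}^*\boldsymbol{k}_1\). The main obstacle is that \eqref{eq:bg:alpha_solution} is not symmetric, because \(\boldsymbol{P}\) appears only on the right of the products. I would handle it by using \(\boldsymbol{P}^2=\boldsymbol{P}\) and push-through identities \(\boldsymbol{P}(\boldsymbol{M}\boldsymbol{P}+\boldsymbol{I})^{-1}=(\boldsymbol{P}\boldsymbol{M}\boldsymbol{P}+\boldsymbol{I})^{-1}\boldsymbol{P}\), with \(\boldsymbol{M}=\boldsymbol{K}_p\boldsymbol{K}_p^{T}+\boldsymbol{k}_1\boldsymbol{k}_1^{T}\). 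This rewrites \(\boldsymbol{k}_1^{T}\boldsymbol{P}(\cdot)^{-1}\boldsymbol{k}_1\) as \(\tilde{\boldsymbol{k}}^{T}(\boldsymbol{B}+\tilde{\boldsymbol{k}}\tilde{\boldsymbol{k}}^{T})^{-1}\tilde{\boldsymbol{k}}\). Sherman--Morrison then gives \(\rho/(1+\rho)\) with \(\rho=\tilde{\boldsymbol{k}}^{T}\boldsymbol{B}^{-1}\tilde{\boldsymbol{k}}\), which is the stated AlphaEdit association. Since \(\boldsymbol{B}\succeq\boldsymbol{I}\), again \(\rho\ge0\) and \(\beta<1\).

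\textbf{Batch editing.} Replacing \(\boldsymbol{k}_1\) by a key matrix \(\boldsymbol{K}_1\), the same computation gives \(\boldsymbol{\Delta}^*\boldsymbol{K}_1=\boldsymbol{R}\,\boldsymbol{K}_1^{T}(\boldsymbol{A}+\boldsymbol{K}_1\boldsymbol{K}_1^{T})^{-1}\boldsymbol{K}_1\). By Woodbury this equals \(\boldsymbol{R}\,\boldsymbol{G}(\boldsymbol{I}+\boldsymbol{G})^{-1}\) with \(\boldsymbol{G}=\boldsymbol{K}_1^{T}\boldsymbol{A}^{-1}\boldsymbol{K}_1\). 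This is the request-interaction matrix that replaces the scalar \(\beta\).
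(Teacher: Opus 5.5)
Your proposal is correct and follows the paper's proof essentially step for step. You use the same first-order conditions and Sherman--Morrison for the single-request gain. For AlphaEdit you take the same reduction to a MEMIT-type problem in the applied update \(\widetilde{\boldsymbol{\Delta}}\boldsymbol{P}\), with key \(\boldsymbol{P}\boldsymbol{k}_1\) and Gram matrix \(\boldsymbol{I}+\boldsymbol{P}\boldsymbol{K}_p\boldsymbol{K}_p^{T}\boldsymbol{P}\), and you use Woodbury for the batch interaction matrix \(\boldsymbol{G}(\boldsymbol{I}+\boldsymbol{G})^{-1}\). Your push-through check against the explicit AlphaEdit closed form is a valid extra the paper omits, since it solves the projected problem directly, and the AlphaEdit batch case follows from your reduction by substituting \(\boldsymbol{P}\boldsymbol{K}_1\) for \(\boldsymbol{K}_1\).
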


Proposition~\ref{prop:target_attenuation} reveals that
Stage~II writes \(\beta\boldsymbol{r}\), not \(\boldsymbol{r}\) itself, with
\(\beta\) request-specific because \(\rho\) depends on the request key
and downstream constraints. To assess this request-specificity, we measure
\(\rho_i\) for each edit request. As shown in Fig.~\ref{rho}, the ratio
between the 95th and 5th percentiles of \(\rho_i\) reaches 48.9\(\times\) on
ZsRE and remains around 4--5\(\times\) on CounterFact. Since
\(\beta_i=\rho_i/(1+\rho_i)\), this dispersion induces request-specific
realized and unrealized components, \(\beta_i\boldsymbol{r}_i\) and
\((1-\beta_i)\boldsymbol{r}_i\). In high-\(\rho_i\) regimes, the variation
manifests more clearly in \(1-\beta_i=1/(1+\rho_i)\), which governs the
editability-side upper bound \(\tau/(1-\beta_i)\) in
Theorem~\ref{thm:regularization_disconnect}. Hence upstream choice of \(\boldsymbol{r}_i\) should account for both
realized and unrealized downstream components. The shared KL weight
\(\omega\), however, controls the planned residual scale: locally,
\(\|\boldsymbol{r}_i(\omega)\|\approx a_i/\omega\), where \(a_i\) depends on
the local upstream objective. The effective edit signal is
\(\beta_i\|\boldsymbol{r}_i(\omega)\|\), whereas the unrealized part is
\((1-\beta_i)\|\boldsymbol{r}_i(\omega)\|\). Thus, \(\omega\) is
association-blind: it scales the planned residual without observing this
downstream decomposition.

\begin{theorem}[Association-Dependent Editing Requirement]
\label{thm:regularization_disconnect}
Under Proposition~\ref{prop:target_attenuation}, suppose request \(i\)
requires \(\|\boldsymbol{\delta}^{\mathrm{real}}_i\|\ge m\) for semantic
success and
\(\|\boldsymbol{r}_i-\boldsymbol{\delta}^{\mathrm{real}}_i\|\le\tau\) for
downstream editability. Then any feasible planned residual satisfies
\(\frac{m}{\beta_i}\le\|\boldsymbol{r}_i\|\le\frac{\tau}{1-\beta_i}\), and
the feasibility interval depends on the downstream association through
\(\beta_i=\rho_i/(1+\rho_i)\). In particular, if
\(\beta_i<m/(m+\tau)\), no choice of \(\|\boldsymbol{r}_i\|\) meets both
requirements within this local model. Proof in
Appendix~\ref{app:static_trap_proof}.
\end{theorem}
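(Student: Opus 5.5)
The argument is a direct consequence of Proposition~\ref{prop:target_attenuation}, so the plan is to substitute the realized residual into the two requirements and solve for \(\|\boldsymbol{r}_i\|\). By Proposition~\ref{prop:target_attenuation}, for request \(i\) we have \(\boldsymbol{\delta}^{\mathrm{real}}_i=\beta_i\boldsymbol{r}_i\) with \(\beta_i=\rho_i/(1+\rho_i)\in[0,1)\). The unrealized component is therefore \(\boldsymbol{r}_i-\boldsymbol{\delta}^{\mathrm{real}}_i=(1-\beta_i)\boldsymbol{r}_i\). Taking norms and using \(\beta_i\ge 0\) and \(1-\beta_i>0\), the two requirements become
\[
\beta_i\|\boldsymbol{r}_i\|\ge m,\qquad (1-\beta_i)\|\boldsymbol{r}_i\|\le\tau .
\]

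Next I will rearrange each inequality. The second one gives \(\|\boldsymbol{r}_i\|\le \tau/(1-\beta_i)\) directly, since \(1-\beta_i>0\). For the first, I will split on \(\beta_i\). If \(\beta_i>0\) (equivalently \(\rho_i>0\)), then \(\|\boldsymbol{r}_i\|\ge m/\beta_i\). If \(\beta_i=0\) and \(m>0\), the requirement is infeasible; this is consistent with reading \(m/\beta_i=+\infty\), and the case is also covered by the final claim. This yields the interval \(m/\beta_i\le\|\boldsymbol{r}_i\|\le\tau/(1-\beta_i)\). Both endpoints depend on the request only through \(\beta_i\), hence through \(\rho_i\) via \(\beta_i=\rho_i/(1+\rho_i)\), which is the association-dependence claim.

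For the emptiness criterion, the interval is nonempty iff \(m/\beta_i\le\tau/(1-\beta_i)\). Multiplying by the positive quantity \(\beta_i(1-\beta_i)\), this is equivalent to \(m(1-\beta_i)\le\tau\beta_i\), i.e. \(\beta_i\ge m/(m+\tau)\), assuming \(m,\tau\ge 0\) and \(m+\tau>0\). So if \(\beta_i<m/(m+\tau)\), no value of \(\|\boldsymbol{r}_i\|\) satisfies both requirements. The statement only concerns norms, and both constraints depend on \(\boldsymbol{r}_i\) only through \(\|\boldsymbol{r}_i\|\), so the direction of \(\boldsymbol{r}_i\) never enters.

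There is no real obstacle here. The only care needed is in the edge cases: \(\beta_i=0\), the strict inequality \(\beta_i<1\) (guaranteed by Proposition~\ref{prop:target_attenuation}), and the signs of \(m,\tau\) when dividing and cross-multiplying. I will also state explicitly that everything holds within the single-request local quadratic surrogate, where \(\beta_i\) is a scalar. In the batch setting, \(\beta_i\) is replaced by a matrix and the argument would need operator-norm bounds instead.
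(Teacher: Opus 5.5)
Your proposal is correct and follows the paper's proof essentially step for step: substitute $\boldsymbol{\delta}^{\mathrm{real}}_i=\beta_i\boldsymbol{r}_i$, set aside the degenerate case $\beta_i=0$, derive the bounds $m/\beta_i$ and $\tau/(1-\beta_i)$, and multiply through by $\beta_i(1-\beta_i)>0$ to reach the emptiness condition $\beta_i<m/(m+\tau)$. Your added remarks on the signs of $m$ and $\tau$ and on the batch (matrix) case are harmless refinements; the paper leaves them implicit.
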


Theorem~\ref{thm:regularization_disconnect} shows that the admissible scale of
\(\boldsymbol{r}_i\) is governed by the residual decomposition induced by
\(\rho_i\): semantic success imposes the lower bound \(m/\beta_i\), whereas
editability imposes the upper bound \(\tau/(1-\beta_i)\). A shared
\(\omega\), however, imposes a shrinkage rule that does not observe this
downstream-dependent decomposition. For high-association requests,
\(\boldsymbol{r}_i\) is largely realized, so excessive shrinkage may suppress
otherwise feasible edits. For lower-association requests, a larger unrealized
component can make an over-large planned residual difficult to realize under
downstream constraints. These regimes impose different requirements on the
residual scale, so a shared \(\omega\) can be miscalibrated across requests. Appendix~\ref{app:bilevel_proofs} further shows
that, in the bi-level formulation detailed in Sec.~\ref{sec:method:setup}, upstream target optimization should account
for downstream realization rather than being analyzed as an isolated optimization objective.

\begin{figure}[t]
	\centering
	    \vspace{-0.1in}\includegraphics[width=\linewidth]{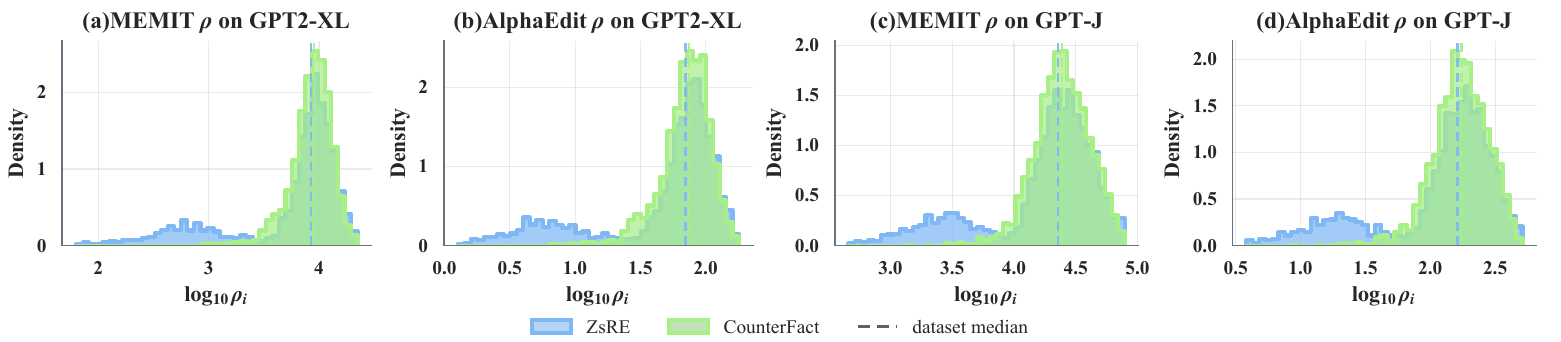}
            \vspace{-0.26in}
	\caption{Distribution of request-specific downstream association
\(\rho_i\) over \(500\) sampled edit requests. Dashed lines mark medians.}
	\label{rho}
    \vspace{-0.2in}
\end{figure}

\subsection{Bi-level Optimization for Downstream-Aware Editing}
\label{sec:method:setup}

Sec.~\ref{sec:analysis} concludes that a better accuracy--editability
trade-off requires \(\boldsymbol{v}^{*}\) to be optimized with access to the
downstream constraints,
so that \(\boldsymbol{r}_i\) can anticipate, per request, how much of
itself will survive the downstream constraints. We grant this access by
recasting KE as a \textbf{Bi-level Optimization (BLO)} problem with
\(\boldsymbol{v}^{*}\) as a learnable \emph{meta-parameter}. Let
\(\mathcal{W}:=\{\boldsymbol{W}_l\}_{l\in\mathcal{L}}\) denote the
key-layer weights and
\(\boldsymbol{\Delta}\mathcal{W}^{*}(\boldsymbol{v}^{*}):=
\{\boldsymbol{\Delta}_l^{*}(\boldsymbol{v}^{*})\}_{l\in\mathcal{L}}\) the
multi-layer downstream updates. The BLO objective is
\begin{equation}
\label{eq:blo_overall}
\begin{aligned}
\boldsymbol{v}^{*}_{\mathrm{meta}}
=
\operatorname{arg\,min}_{\boldsymbol{v}^*}
\quad
&
\mathcal{L}_{\mathrm{meta}}
\big(
f(\cdot;\mathcal{W}+\boldsymbol{\Delta}\mathcal{W}^{*}(\boldsymbol{v}^*))
\big)
\\
\mathrm{s.t.}
\quad
&
\boldsymbol{\Delta}\mathcal{W}^{*}(\boldsymbol{v}^*)
=
\operatorname{arg\,min}_{\widetilde{\boldsymbol{\Delta}}\mathcal{W}}
\;
\mathcal{J}_{\mathrm{inner}}
(
\widetilde{\boldsymbol{\Delta}}\mathcal{W};
\boldsymbol{v}^*
).
\end{aligned}
\end{equation}
where
\(\widehat{\mathcal{W}}=\mathcal{W}+\boldsymbol{\Delta}\mathcal{W}^{*}
(\boldsymbol{v}^{*})\) denotes the virtual post-edit weights. The nesting
embeds the downstream stage as a constraint of the upstream optimization,
so that \(\boldsymbol{v}^{*}\) is both semantically accurate and editable
under the downstream constraints.

\textbf{Inner Level.}
Given \(\boldsymbol{v}^{*}\), the inner level reproduces the standard
downstream constrained update by allocating a layer-specific target
\(\boldsymbol{t}_l(\boldsymbol{v}^{*})\) detailed in
Appendix~\ref{app:layer_target_allocation} and minimizing
\begin{equation}
\label{eq:lower_level}
\mathcal{J}_{\mathrm{inner}}
=\sum\nolimits_{l\in\mathcal{L}}\!\Big[
\|(\boldsymbol{W}_l+\widetilde{\boldsymbol{\Delta}}_l\boldsymbol{P}_l)
\boldsymbol{k}_1^l-\boldsymbol{t}_l(\boldsymbol{v}^{*})\|_2^2
+\|\widetilde{\boldsymbol{\Delta}}_l\boldsymbol{P}_l\|_F^2
+\|\widetilde{\boldsymbol{\Delta}}_l\boldsymbol{P}_l\boldsymbol{K}_p^l\|_F^2
\!\Big],
\end{equation}
where \(\boldsymbol{P}_l\) is the projection induced by
\(\boldsymbol{K}_0^l\) and \(\boldsymbol{K}_p^l\) collects previously
edited keys. The minimizer
\(\boldsymbol{\Delta}_l^{*}(\boldsymbol{v}^{*})\) is an \emph{implicit function} of
\(\boldsymbol{v}^{*}\); the inner level introduces no new objective beyond
the existing downstream stage but exposes its realization to the upstream.

\textbf{Outer level.}
The outer level refines \(\boldsymbol{v}^{*}\) on the virtual post-edit
model:
\begin{equation}
\label{eq:meta_loss}
\mathcal{L}_{\mathrm{meta}}(\boldsymbol{v}^{*})
=
\mathcal{L}_{\mathrm{edit}}(x_e,y_e;\widehat{\mathcal{W}})
+
\mathcal{L}_{\mathrm{loc}}
(x_{\mathrm{loc}};\widehat{\mathcal{W}},\mathcal{W})
+
\|\boldsymbol{v}^{*}-\boldsymbol{v}_{\mathrm{init}}\|_2^2 ,
\end{equation}
The edit loss encourages the post-edit model to predict the target output,
the locality loss penalizes unintended drift on unrelated prompts, and the
meta-regularization keeps the refined target close to the initial semantic plan
\(\boldsymbol{v}_{\mathrm{init}}\). All three terms are used with unit
coefficients in our experiments.

\begin{figure}[t]
\centering
\vspace{-0.1in}
\includegraphics[width=1\linewidth]{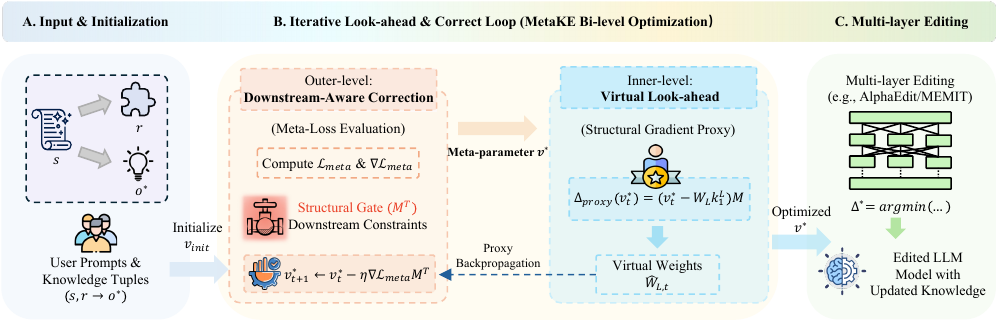}
\vspace{-0.2in}
\caption{The architecture of the proposed method MetaKE.}
\label{fig:metake_architecture}
\vspace{-0.2in}
\end{figure}

\subsection{Structural Gradient Proxy}
\label{sec:method:proxy}

Direct differentiation through the implicit map
\(\boldsymbol{\Delta}\mathcal{W}^{*}(\boldsymbol{v}^{*})\) is
computationally prohibitive, since each outer step requires either nested
optimization or a layer-coupled linear solve. To make Eq.~\eqref{eq:blo_overall}
tractable, we introduce a \textbf{Structural Gradient Proxy} that
approximates the downstream-aware feedback through a single closed-form
surrogate. The proxy rests on a local \emph{Structural Consistency
Hypothesis}: across the edited layers \(\mathcal{L}\), the projection
\(\boldsymbol{P}_l\) and the previous-edit Gram
\(\boldsymbol{K}_p^l(\boldsymbol{K}_p^l)^{T}\) are aligned with their
final-layer counterparts up to a bounded drift. Under this hypothesis, the
dominant downstream signal is captured by the closed form at the
final edited layer \(L\):
\begin{equation}
\label{eq:proxy_definition}
\boldsymbol{\Delta}_{\mathrm{proxy}}(\boldsymbol{v}^{*})
=(\boldsymbol{v}^{*}-\boldsymbol{W}_L\boldsymbol{k}_1^L)\,\boldsymbol{M},
\;\;
\boldsymbol{M}
=(\boldsymbol{k}_1^L)^{T}\boldsymbol{P}_L
\!\big(
\boldsymbol{K}_p^L(\boldsymbol{K}_p^L)^{T}\boldsymbol{P}_L
+\boldsymbol{k}_1^L(\boldsymbol{k}_1^L)^{T}\boldsymbol{P}_L
+\boldsymbol{I}\big)^{-1}.
\end{equation}
Here \(\boldsymbol{\Delta}_{\mathrm{proxy}}\) is already the
update applied to the virtual final layer, and \(\boldsymbol{M}\) is
request-specific through \(\boldsymbol{k}_1^L\). Substituting
\(\boldsymbol{\Delta}_{\mathrm{proxy}}\) into the chain rule yields a
tractable hypergradient:
\begin{equation}
\label{eq:proxy_gradient}
\nabla_{\boldsymbol{v}^{*}}\mathcal{L}_{\mathrm{meta}}
=
\frac{\partial\mathcal{L}_{\mathrm{meta}}}{\partial f}
\cdot
\frac{\partial f}{\partial\boldsymbol{\Delta}_{\mathrm{proxy}}}
\cdot
\frac{\partial\boldsymbol{\Delta}_{\mathrm{proxy}}}{\partial\boldsymbol{v}^{*}}
+
\left.
\frac{\partial\mathcal{L}_{\mathrm{meta}}}{\partial\boldsymbol{v}^{*}}
\right|_{f}
=
\nabla_{\boldsymbol{\Delta}_{\mathrm{proxy}}}
\mathcal{L}_{\mathrm{meta}}
\cdot
\boldsymbol{M}^{T}
+
2(\boldsymbol{v}^{*}-\boldsymbol{v}_{\mathrm{init}}).
\end{equation}
For readability, Eqs.~\eqref{eq:proxy_definition}--\eqref{eq:proxy_gradient}
use single-request notation. In implementation, current-batch keys and target
residuals are column-stacked and updated in batched matrix form. The Task
Signal denotes the downstream-mediated gradient induced by the semantic-accuracy
and locality terms. The
Structural Gate filters this signal through \(\boldsymbol{P}_L\) and
\(\boldsymbol{K}_p^L(\boldsymbol{K}_p^L)^{T}\), propagating approximate
downstream constraints back to \(\boldsymbol{v}^{*}\). We instantiate the gate
with the AlphaEdit-style projected update, whose local response admits the
closed form in Eq.~\eqref{eq:proxy_definition}. For future downstream editing
operators, the same bi-level interface applies once their local response to the
target residual is available.

\vspace{-0.5em}
{\setlength{\textfloatsep}{-2em}
\begin{algorithm}[htp]
\caption{MetaKE}
\label{alg:MetaKE}
\small
\begin{algorithmic}[1]
\Require LR \(\eta\), iterations \(T\), data \((x_e,y_e,x_{\mathrm{loc}})\),
initial target \(\boldsymbol{v}_{\mathrm{init}}\)
\Ensure Edited weights \(\widehat{\boldsymbol{W}}\)
\State Precompute Structural Gate \(\boldsymbol{M}\)
(Eq.~\ref{eq:proxy_definition}) and initialize
\(\boldsymbol{v}^{*}\leftarrow\boldsymbol{v}_{\mathrm{init}}\)
\For{\(t=0\) \textbf{to} \(T-1\)}
\State \(\widehat{\boldsymbol{W}}_{L,t}\leftarrow
\boldsymbol{W}_L+(\boldsymbol{v}^{*}-\boldsymbol{W}_L\boldsymbol{k}_1^L)
\boldsymbol{M}\)
\Comment{Phase 1: Virtual Look-ahead}
\State \(\boldsymbol{g}_t\leftarrow
\nabla_{\boldsymbol{v}^{*}}\mathcal{L}_{\mathrm{meta}}
(\widehat{\boldsymbol{W}}_{L,t})\)
\Comment{Phase 2: hypergradient via Eq.~\ref{eq:proxy_gradient}}
\State \(\boldsymbol{v}^{*}\leftarrow\boldsymbol{v}^{*}-\eta\,\boldsymbol{g}_t\)
\Comment{Refine target with downstream-aware feedback}
\EndFor
\State \(\widehat{\boldsymbol{W}}\leftarrow
\mathrm{Standard~multi\mbox{-}layer~editing}
(\boldsymbol{W},\boldsymbol{v}_T^{*};\mathcal{L})\)
\Comment{Final downstream update (e.g., AlphaEdit)}
\State \textbf{return} \(\widehat{\boldsymbol{W}}\)
\end{algorithmic}
\end{algorithm}
\setlength{\textfloatsep}{-6em}}

\vspace{-0.7em}
\subsection{Optimization Algorithm: MetaKE}
\label{sec:method:algo}

MetaKE refines the target \(\boldsymbol{v}^*\) via an iterative
\emph{Look-ahead and Correct} loop.

\textbf{Phase 1: Virtual Look-ahead.}
At iteration \(t\), MetaKE simulates the edit effect by forming virtual weights
for the final edited layer as
\(\widehat{\boldsymbol{W}}_{L,t}=\boldsymbol{W}_L+
\boldsymbol{\Delta}_{\mathrm{proxy}}(\boldsymbol{v}_t^*)\). This step assesses
how the current target would be realized under the AlphaEdit-style projection
and previous-edit preservation structure, without differentiating through the
full multi-layer editing process.

\textbf{Phase 2: Downstream-Aware Feedback Correction.}
MetaKE evaluates \(\mathcal{L}_{\mathrm{meta}}\) under the virtual weights and
backpropagates the signal through the structural gate \(\boldsymbol{M}^{T}\).
The resulting update refines \(\boldsymbol{v}^*\) using an approximate
downstream-aware feedback signal. After \(T\) iterations, the refined target is
passed to the standard multi-layer editing procedure to perform the final
parameter update across the edited layers \(\mathcal{L}\). Under
standard local regularity assumptions, the convergence analysis of the
proxy-driven iteration is deferred to Appendix~\ref{app:convergence}.

\section{Theoretical Justification}
\label{sec:theory}

Sec.~\ref{sec:analysis} identifies that the trade-off bottleneck stems from
optimizing \(\boldsymbol{v}^{*}\) without access to the downstream
constraints that determine each \(\beta_i\). MetaKE grants this access
through the implicit map \(\boldsymbol{\Delta}_l^{*}(\boldsymbol{v}^{*})\):
in Eq.~\eqref{eq:proxy_gradient}, the Task Signal carries the requirements
of Theorem~\ref{thm:regularization_disconnect}, while the Structural Gate
\(\boldsymbol{M}^{T}\) exposes the request-specific association at layer
\(L\) through \(\boldsymbol{P}_L\) and
\(\boldsymbol{K}_p^L(\boldsymbol{K}_p^L)^{T}\). The resulting feedback is no
longer association-blind, and the residual scale is adapted per request.

\begin{figure}[t]
	\centering
	    \vspace{-0.1in}\includegraphics[width=0.9\linewidth]{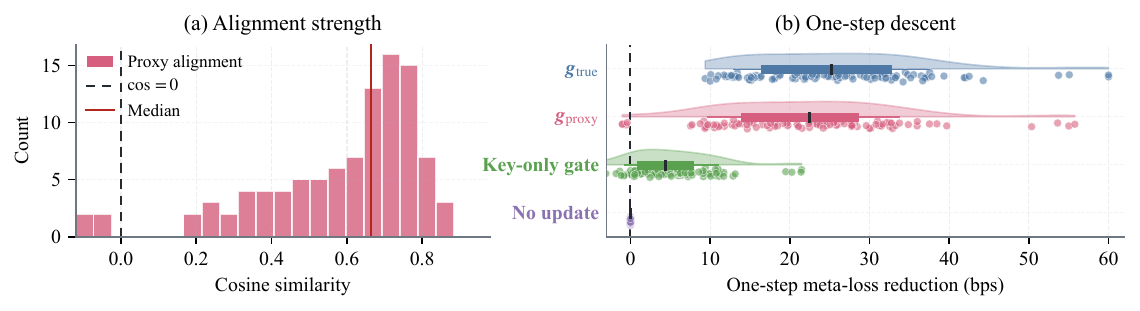}    \vspace{-0.15in}
\caption{Fidelity of the Structural Gradient Proxy.
(a) Cosine similarity between \(\boldsymbol{g}_{\mathrm{proxy}}\) and
\(\boldsymbol{g}_{\mathrm{true}}\). \(\boldsymbol{g}_{\mathrm{proxy}}\) aligns with
\(\boldsymbol{g}_{\mathrm{true}}\) in \(96\%\) of \(N=100\) edits, with
\(72\%\) above \(\cos>0.5\).
(b) One-step meta-loss reduction. The proxy achieves \(22.2\) bps reduction,
recovering \(85\%\) of the \(\boldsymbol{g}_{\mathrm{true}}\) descent effect.}
    \label{fig:proxy_validation}
    \vspace{-0.2in}
\end{figure}

\textbf{Proxy fidelity.}
We next formalize when the structural proxy preserves the orientation of the
full multi-layer hypergradient. Let \(\boldsymbol{g}_{\mathrm{true}}\) denote
the exact full hypergradient of the outer objective, including the direct
anchoring derivative, and let \(\boldsymbol{g}_{\mathrm{proxy}}\) be the
corresponding full proxy hypergradient induced by the frozen structural gate.
The following result shows that the proxy remains a descent direction when the
non-final-layer contribution and final-layer proxy gap are jointly controlled.

\begin{theorem}[Conditional Fidelity of the Structural Gradient Proxy]
\label{thm:fidelity}
Let \(F(\boldsymbol{v}^{*})\) be the exact outer objective, with exact
hypergradient \(\boldsymbol{g}_{\mathrm{true}}\) and structural proxy gradient
\(\boldsymbol{g}_{\mathrm{proxy}}\). Under local fidelity conditions, if
\(\xi+\frac{K(1+\xi)}{\alpha_0}\varepsilon_{\mathrm{AE}}
<\chi(\boldsymbol{v}^{*})\), where
\(\chi(\boldsymbol{v}^{*})=
\|\boldsymbol{g}_{\mathrm{proxy}}\|_2/
\|\boldsymbol{h}_{\mathrm{proxy}}\|_2\) and
\(\boldsymbol{h}_{\mathrm{proxy}}\) is the downstream-mediated component of
\(\boldsymbol{g}_{\mathrm{proxy}}\), then
\(\langle\boldsymbol{g}_{\mathrm{true}},
\boldsymbol{g}_{\mathrm{proxy}}\rangle>0\). Thus
\(-\boldsymbol{g}_{\mathrm{proxy}}\) is a descent direction for \(F\); if
\(F\) is locally smooth, small-enough steps along it decrease \(F\).
Proof in Appendix~\ref{app:proxy_fidelity}.
\end{theorem}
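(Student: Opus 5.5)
The plan is to split both hypergradients into the shared anchoring derivative and a downstream-mediated part, and to bound the gap between the downstream-mediated parts by a perturbation argument. Write
\[
\boldsymbol{g}_{\mathrm{true}}=\boldsymbol{h}_{\mathrm{true}}+\boldsymbol{a},\qquad
\boldsymbol{g}_{\mathrm{proxy}}=\boldsymbol{h}_{\mathrm{proxy}}+\boldsymbol{a},
\]
where \(\boldsymbol{a}=2(\boldsymbol{v}^{*}-\boldsymbol{v}_{\mathrm{init}})\) is the direct anchoring derivative from Eq.~\eqref{eq:meta_loss}. The anchor cancels in the difference, so
\[
\boldsymbol{g}_{\mathrm{true}}-\boldsymbol{g}_{\mathrm{proxy}}=\boldsymbol{h}_{\mathrm{true}}-\boldsymbol{h}_{\mathrm{proxy}}.
\]
It then suffices to show
\[
\|\boldsymbol{h}_{\mathrm{true}}-\boldsymbol{h}_{\mathrm{proxy}}\|_2<\|\boldsymbol{g}_{\mathrm{proxy}}\|_2,
\]
because
\[
\langle\boldsymbol{g}_{\mathrm{true}},\boldsymbol{g}_{\mathrm{proxy}}\rangle
=\|\boldsymbol{g}_{\mathrm{proxy}}\|_2^2+\langle\boldsymbol{h}_{\mathrm{true}}-\boldsymbol{h}_{\mathrm{proxy}},\boldsymbol{g}_{\mathrm{proxy}}\rangle
\ge\|\boldsymbol{g}_{\mathrm{proxy}}\|_2\bigl(\|\boldsymbol{g}_{\mathrm{proxy}}\|_2-\|\boldsymbol{h}_{\mathrm{true}}-\boldsymbol{h}_{\mathrm{proxy}}\|_2\bigr)
\]
by Cauchy--Schwarz. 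Dividing through by \(\|\boldsymbol{h}_{\mathrm{proxy}}\|_2\), the target inequality is equivalent to
\[
\|\boldsymbol{h}_{\mathrm{true}}-\boldsymbol{h}_{\mathrm{proxy}}\|_2\le\Bigl(\xi+\tfrac{K(1+\xi)}{\alpha_0}\varepsilon_{\mathrm{AE}}\Bigr)\|\boldsymbol{h}_{\mathrm{proxy}}\|_2 ,
\]
and the hypothesis \(\xi+\frac{K(1+\xi)}{\alpha_0}\varepsilon_{\mathrm{AE}}<\chi\) then closes the argument.

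To get this relative bound, I would make the ``local fidelity conditions'' explicit as three assumptions.

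\textbf{(i) Non-final layers.} The exact downstream-mediated gradient splits as \(\boldsymbol{h}_{\mathrm{true}}=\boldsymbol{h}_L+\boldsymbol{h}_{<L}\). The contribution of the non-final layers is assumed to satisfy \(\|\boldsymbol{h}_{<L}\|\le\xi\|\boldsymbol{h}_L\|\).

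\textbf{(ii) Final-layer Jacobian.} At layer \(L\), the exact implicit Jacobian \(\partial\boldsymbol{\Delta}_L^{*}/\partial\boldsymbol{v}^{*}\) differs from the proxy's frozen gate \(\boldsymbol{M}\) by an operator-norm error \(\varepsilon_{\mathrm{AE}}\). This error comes from the structural drift in the Structural Consistency Hypothesis and from the multi-layer target allocation \(\boldsymbol{t}_l\).

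\textbf{(iii) Upstream gradient and gate.} The upstream gradient \(\nabla_{\boldsymbol{\Delta}}\mathcal{L}_{\mathrm{meta}}\) is bounded by \(K\) in the relevant norm. The gate is nondegenerate on the task signal: \(\|\boldsymbol{h}_{\mathrm{proxy}}\|\ge\alpha_0\|\nabla_{\boldsymbol{\Delta}}\mathcal{L}_{\mathrm{meta}}\|\), or alternatively \(\|\boldsymbol{h}_{\mathrm{proxy}}\|\ge\alpha_0\) after normalization. In the single-request case, \(\alpha_0\) is tied to \(\|\boldsymbol{M}\|\) and hence to the association \(\rho\) of Proposition~\ref{prop:target_attenuation}.

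Using these, the chain rule gives
\[
\|\boldsymbol{h}_L-\boldsymbol{h}_{\mathrm{proxy}}\|\le K\varepsilon_{\mathrm{AE}} .
\]
Then
\[
\|\boldsymbol{h}_{\mathrm{true}}-\boldsymbol{h}_{\mathrm{proxy}}\|
\le\|\boldsymbol{h}_{<L}\|+\|\boldsymbol{h}_L-\boldsymbol{h}_{\mathrm{proxy}}\|
\le\xi\|\boldsymbol{h}_L\|+K\varepsilon_{\mathrm{AE}} .
\]
I then bound \(\|\boldsymbol{h}_L\|\le\|\boldsymbol{h}_{\mathrm{proxy}}\|+K\varepsilon_{\mathrm{AE}}\), which gives
\[
\|\boldsymbol{h}_{\mathrm{true}}-\boldsymbol{h}_{\mathrm{proxy}}\|\le\xi\|\boldsymbol{h}_{\mathrm{proxy}}\|+(1+\xi)K\varepsilon_{\mathrm{AE}} .
\]
Finally, \(K\varepsilon_{\mathrm{AE}}\le\frac{K}{\alpha_0}\varepsilon_{\mathrm{AE}}\|\boldsymbol{h}_{\mathrm{proxy}}\|\) under the normalization in (iii), which yields exactly the constant \(\xi+\frac{K(1+\xi)}{\alpha_0}\varepsilon_{\mathrm{AE}}\). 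This bookkeeping reproducing the factor \((1+\xi)\) is what tells me the assumptions are ordered correctly.

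The descent claim is then standard. If \(F\) is \(L_F\)-smooth, then \(F(\boldsymbol{v}-\eta\boldsymbol{g}_{\mathrm{proxy}})\le F(\boldsymbol{v})-\eta\langle\boldsymbol{g}_{\mathrm{true}},\boldsymbol{g}_{\mathrm{proxy}}\rangle+\tfrac{L_F\eta^2}{2}\|\boldsymbol{g}_{\mathrm{proxy}}\|^2\), which decreases \(F\) for \(\eta<2\langle\boldsymbol{g}_{\mathrm{true}},\boldsymbol{g}_{\mathrm{proxy}}\rangle/(L_F\|\boldsymbol{g}_{\mathrm{proxy}}\|^2)\).

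The main obstacle is step (ii): justifying that the exact final-layer implicit Jacobian is \(\varepsilon_{\mathrm{AE}}\)-close to \(\boldsymbol{M}\). The inner problem Eq.~\eqref{eq:lower_level} decouples across layers given the targets \(\boldsymbol{t}_l(\boldsymbol{v}^{*})\), so each \(\boldsymbol{\Delta}_l^{*}\) is the closed form Eq.~\eqref{eq:bg:alpha_solution} applied to a residual that is affine in \(\boldsymbol{v}^{*}\). I would try to express the discrepancy through the allocation coefficient and the drift of \(\boldsymbol{P}_l\) and \(\boldsymbol{K}_p^l(\boldsymbol{K}_p^l)^{T}\). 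If that proves too delicate, I would state \(\varepsilon_{\mathrm{AE}}\) directly as an assumed bound within the local fidelity conditions.
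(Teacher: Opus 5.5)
Your skeleton matches the paper's proof almost step for step. You use the same split of both hypergradients into a downstream-mediated part plus the shared anchor $2(\boldsymbol{v}^{*}-\boldsymbol{v}_{\mathrm{init}})$, and the same triangle inequality through $\boldsymbol{h}_L$ with final-layer dominance. You then bound $\|\boldsymbol{h}_L\|_2$ by $\|\boldsymbol{h}_{\mathrm{proxy}}\|_2$ plus the final-layer gap, which produces the factor $(1+\xi)$, and use a non-degeneracy constant to turn that gap into a multiple of $\|\boldsymbol{h}_{\mathrm{proxy}}\|_2$. The Cauchy--Schwarz finish is also the same. One small correction: the relative bound together with $c<\chi$ is \emph{sufficient} for $\|\boldsymbol{h}_{\mathrm{true}}-\boldsymbol{h}_{\mathrm{proxy}}\|_2<\|\boldsymbol{g}_{\mathrm{proxy}}\|_2$; it is not equivalent to it.

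The step that does not follow from your stated assumptions is ``the chain rule gives $\|\boldsymbol{h}_L-\boldsymbol{h}_{\mathrm{proxy}}\|\le K\varepsilon_{\mathrm{AE}}$''. The two vectors backpropagate different loss gradients. $\boldsymbol{h}_L=\boldsymbol{J}_L^{T}\boldsymbol{s}_L^{\flat}$ uses $\nabla_{\boldsymbol{\Delta}_L}\mathcal{L}_{\mathrm{meta}}$ evaluated at the fully edited multi-layer model. $\boldsymbol{h}_{\mathrm{proxy}}=\boldsymbol{S}_{\mathrm{p}}\widetilde{\boldsymbol{M}}^{T}$ uses $\boldsymbol{S}_{\mathrm{p}}$ evaluated at the virtual model, where only layer $L$ is changed and by a different update. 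Writing $\boldsymbol{J}_{\mathrm{p}}=\widetilde{\boldsymbol{M}}^{T}\otimes\boldsymbol{I}_{d_1}$,
\[
\boldsymbol{h}_L-\boldsymbol{h}_{\mathrm{proxy}}
=(\boldsymbol{J}_L-\boldsymbol{J}_{\mathrm{p}})^{T}\boldsymbol{s}_L^{\flat}
+\boldsymbol{J}_{\mathrm{p}}^{T}\bigl(\boldsymbol{s}_L^{\flat}-\operatorname{vec}(\boldsymbol{S}_{\mathrm{p}})\bigr).
\]
Your Jacobian assumption (ii) controls only the first term. Nothing in (i)--(iii) bounds the sensitivity mismatch in the second.

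The paper sidesteps this by making the final-layer gradient gap itself the hypothesis. Its Condition~2 assumes $\|\boldsymbol{h}_L-\boldsymbol{h}_{\mathrm{proxy}}\|_2\le K\varepsilon_{\mathrm{AE}}\|\boldsymbol{S}_{\mathrm{p}}\|_F\|\widetilde{\boldsymbol{M}}\|_2$. There $\varepsilon_{\mathrm{AE}}$ explicitly absorbs gate freezing, the discarded gate-Jacobian term, and the sensitivity drift between the exact and proxy paths; the diagnostics measure $K\varepsilon_{\mathrm{AE}}$ as exactly this normalized gap. Condition~3 is the scale-free ratio $\|\boldsymbol{h}_{\mathrm{proxy}}\|_2/(\|\boldsymbol{S}_{\mathrm{p}}\|_F\|\widetilde{\boldsymbol{M}}\|_2)\ge\alpha_0$, in place of your absolute pair $\|\nabla_{\boldsymbol{\Delta}}\mathcal{L}_{\mathrm{meta}}\|\le K$, $\|\boldsymbol{h}_{\mathrm{proxy}}\|\ge\alpha_0$. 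With those two conditions your bookkeeping goes through unchanged. Alternatively, add a separate bound on $\|\boldsymbol{s}_L^{\flat}-\operatorname{vec}(\boldsymbol{S}_{\mathrm{p}})\|$ and carry the extra term, which changes the constant.

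Your fallback of simply assuming Jacobian closeness does not repair the gap. The plan for deriving that closeness is also inaccurate. The residual assigned to layer $l$ is $(\boldsymbol{v}^{*}-\boldsymbol{h}_{1,L}^{(<l)})/|\mathcal{L}_{\ge l}|$, and both $\boldsymbol{h}_{1,L}^{(<l)}$ and $\boldsymbol{k}_1^{l}$ are recomputed after the earlier-layer updates. So beyond the first edited layer the residual is not affine in $\boldsymbol{v}^{*}$, and the layers do not decouple as functions of $\boldsymbol{v}^{*}$.
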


The condition couples the two error sources: stronger non-final-layer leakage
leaves less room for final-layer proxy error, with the admissible range
decreasing as \((1-\xi)/(1+\xi)\), which gives a quantitative form of the
Structural Consistency Hypothesis in Sec.~\ref{sec:method:proxy}.
Fig.~\ref{fig:proxy_validation} evaluates the descent-direction implication of Theorem~\ref{thm:fidelity}, across the tested edits,
\(\boldsymbol{g}_{\mathrm{proxy}}\) is positively aligned with the true
multi-layer hypergradient \(\boldsymbol{g}_{\mathrm{true}}\) and yields a
positive one-step reduction of \(\mathcal{L}_{\mathrm{meta}}\)
(Appendix~\ref{app:proxy_validation_details}).

\vspace{-0.5em}
\section{Experiments}
\label{sec:experiments}
\vspace{-0.5em}
In this section, we aim to answer three questions:
(1) Does MetaKE improve editing performance over strong baselines?
(2) Does MetaKE mitigate the disconnect between the two optimization stages, and which components are responsible for the gains?
(3) Does MetaKE remain stable under long-horizon sequential editing while better preserving general capabilities and internal representation geometry?

\begin{table*}[t]
\caption{\label{main-results}
  Comparison of MetaKE with editing methods on the knowledge editing task. The best results are highlighted in \textcolor{red}{red}, while the second-best results are \underline{underlined}. 
  }
  \label{table 1}
\renewcommand{\arraystretch}{0.95}
  \centering
  \resizebox{\textwidth}{!}{
    \begin{tabular}{cc|ccccc|ccc}
    \toprule
      \multirow{2}{*}{\textbf{Model}} & \multirow{2}{*}{\textbf{Method}} & \multicolumn{5}{c}{\textbf{CounterFact}} & \multicolumn{3}{c}{\textbf{ZsRE}} \\
      \cmidrule(lr){3-7} \cmidrule(lr){8-10}
       & & \textbf{Eff.}$\uparrow$ & \textbf{Gen.}$\uparrow$ & \textbf{Spe.}$\uparrow$ & \textbf{Flu.}$\uparrow$ & \textbf{Consis.}$\uparrow$ & \textbf{Eff.}$\uparrow$ & \textbf{Gen.}$\uparrow$ & \textbf{Spe.}$\uparrow$ \\
      \midrule
      \multirow{11}{*}{\rotatebox{90}{LLaMA3~(8B)}} 
       & MEND & $63.18_{\pm 0.34}$ & $61.26_{\pm 0.32}$ & $45.42_{\pm 0.41}$ & $372.08_{\pm 0.75}$ & $4.19_{\pm 0.06}$ & $0.88_{\pm 0.05}$ & $1.06_{\pm 0.05}$ & $0.51_{\pm 0.03}$ \\
       & MALMEN & $38.82_{\pm 0.34}$ & $49.01_{\pm 0.41}$ & $56.64_{\pm 0.37}$ & $524.48_{\pm 0.45}$ & $19.83_{\pm 0.13}$ & $39.65_{\pm 0.21}$ & $40.56_{\pm 0.28}$ & $30.47_{\pm 0.24}$ \\
       & ROME & ${66.18}_{\pm 0.47}$ & ${63.47}_{\pm 0.41}$ & ${48.61}_{\pm 0.25}$ & ${458.75}_{\pm 0.23}$ & ${2.05}_{\pm 0.06}$ & ${1.28}_{\pm 0.04}$ & ${1.32}_{\pm 0.03}$ & ${0.64}_{\pm 0.03}$ \\
       & MEMIT & $65.52_{\pm 0.35}$ & $63.94_{\pm 0.38}$ & $51.26_{\pm 0.41}$ & $452.55_{\pm 1.24}$ & $5.94_{\pm 0.15}$ & $36.15_{\pm 0.31}$ & $31.42_{\pm 0.28}$ & $18.57_{\pm 0.25}$ \\
       & PRUNE & $66.82_{\pm 0.42}$ & $64.64_{\pm 0.37}$ & $49.88_{\pm 0.25}$ & $440.67_{\pm 1.12}$ & $6.12_{\pm 0.09}$ & $0.52_{\pm 0.05}$ & $0.58_{\pm 0.07}$ & $1.34_{\pm 0.04}$ \\
       & RECT & $64.54_{\pm 0.35}$ & $62.35_{\pm 0.44}$ & $60.78_{\pm 0.31}$ & $520.94_{\pm 0.36}$ & $19.21_{\pm 0.15}$ & $87.41_{\pm 0.22}$ & $82.35_{\pm 0.24}$ & $32.18_{\pm 0.22}$ \\
       & PMET & ${99.40}_{\pm 0.21}$ & ${93.24}_{\pm 0.79}$ & $\second{{77.42}_{\pm 0.80}}$ & $\second{{625.61}_{\pm 0.52}}$ & ${32.25}_{\pm 0.35}$ & ${94.93}_{\pm 0.27}$ & ${90.47}_{\pm 0.54}$ & ${32.61}_{\pm 0.66}$ \\
       & AlphaEdit & $99.23_{\pm 0.15}$ & $86.21_{\pm 0.18}$ & $77.16_{\pm 0.21}$ & $624.95_{\pm 0.24}$ & $30.25_{\pm 0.12}$ & $94.27_{\pm 0.15}$ & $88.35_{\pm 0.12}$ & $\second{32.65_{\pm 0.14}}$ \\
       & $\text{AlphaEdit}_{\text{BLUE}}$ & $\second{99.92_{\pm 0.05}}$ & $\second{97.55_{\pm 0.46}}$ & $77.12_{\pm 0.86}$ & $624.92_{\pm 0.24}$ & $\second{33.84_{\pm 0.42}}$ & $\second{95.82_{\pm 0.46}}$ & $\second{91.85_{\pm 0.42}}$ & $32.10_{\pm 0.68}$ \\
       & SPHERE  & $99.75_{\pm 0.22}$&$88.38_{\pm 0.32}$ &$74.84_{\pm 0.16}$ &$616.53_{\pm 0.34}$  &$28.67_{\pm 0.24}$  &  $94.25_{\pm 0.32}$&  $88.35_{\pm 0.38}$&$32.02_{\pm 0.12}$\\
       \rowcolor{metablue}
       & MetaKE & $\best{99.97}_{\pm 0.13}$ & $\best{97.95}_{\pm 0.21}$ & $\best{78.75}_{\pm 0.35}$ & $\best{626.54}_{\pm 0.42}$ & $\best{34.93}_{\pm 0.35}$ & $\best{96.84}_{\pm 0.14}$ & $\best{92.45}_{\pm 0.16}$ & $\best{33.12}_{\pm 0.21}$ \\
      \midrule \midrule
      \multirow{11}{*}{\rotatebox{90}{GPT-J~(6B)}} 
       & MEND & $46.15_{\pm 0.50}$ & $46.22_{\pm 0.51}$ & $53.9_{\pm 0.48}$ & $242.24_{\pm 0.39}$ & $3.94_{\pm 0.04}$ & $0.71_{\pm 0.04}$ & $0.71_{\pm 0.04}$ & $0.52_{\pm 0.03}$ \\
       & MALMEN & $63.37_{\pm0.29}$ & $59.87_{\pm0.36}$ & $41.40_{\pm0.44}$ & $229.56_{\pm0.27}$ & $20.35_{\pm0.08}$ & $33.3_{\pm 0.41}$ & $31.68_{\pm 0.43}$ & $3.81_{\pm 0.04}$ \\
       & ROME & $52.32_{\pm 0.24}$ & $50.67_{\pm 0.32}$ & $51.05_{\pm 0.65}$ & $581.57_{\pm 0.22}$ & $1.54_{\pm 0.04}$ & $55.62_{\pm 0.24}$ & $52.71_{\pm 0.33}$ & $13.34_{\pm 0.26}$ \\
       & MEMIT & $98.12_{\pm 0.18}$ & $95.71_{\pm 0.15}$ & $63.74_{\pm 0.28}$ & $594.15_{\pm 0.64}$ & $39.33_{\pm 0.24}$ & $96.85_{\pm 0.18}$ & $93.52_{\pm 0.25}$ & $\best{31.42}_{\pm 0.29}$ \\
       & PRUNE & $84.44_{\pm 0.29}$ & $87.54_{\pm 0.26}$ & $53.35_{\pm 0.34}$ & $460.27_{\pm 0.65}$ & $23.34_{\pm 0.21}$ & $25.64_{\pm 0.16}$ & $24.08_{\pm 0.22}$ & $20.45_{\pm 0.13}$ \\
       & RECT & $98.83_{\pm 0.11}$ & $86.25_{\pm 0.26}$ & $72.54_{\pm 0.22}$ & $616.68_{\pm 0.17}$ & $41.43_{\pm 0.19}$ & $96.18_{\pm 0.24}$ & $91.75_{\pm 0.21}$ & $28.25_{\pm 0.26}$ \\
       & PMET & ${99.75}_{\pm 0.16}$ & ${94.78}_{\pm 0.53}$ & ${75.18}_{\pm 0.78}$ & ${618.59}_{\pm 0.49}$ & ${41.40}_{\pm 0.32}$ & ${99.36}_{\pm 0.19}$ & ${96.23}_{\pm 0.51}$ & ${29.53}_{\pm 0.80}$ \\
       & AlphaEdit & $99.75_{\pm 0.14}$ & $96.47_{\pm 0.21}$ & $\second{76.14_{\pm 0.26}}$ & $618.41_{\pm 0.23}$ & $42.36_{\pm 0.15}$ & $99.56_{\pm 0.15}$ & $95.84_{\pm 0.26}$ & $28.31_{\pm 0.22}$ \\
       & $\text{AlphaEdit}_{\text{BLUE}}$ & $99.72_{\pm 0.17}$ & $\second{97.75_{\pm 0.53}}$ & $76.12_{\pm 0.86}$ & $\best{621.54}_{\pm 0.75}$ & $41.54_{\pm 0.35}$ & $99.45_{\pm 0.25}$ & $93.64_{\pm 0.48}$ & $28.41_{\pm 0.72}$ \\
       & SPHERE  & $\second{99.80_{\pm 0.23}}$&$95.40_{\pm 0.35}$ &$76.12_{\pm 0.74}$ &$611.21_{\pm 0.54}$ &$\second{43.20_{\pm 0.32}}$& $\second{99.69_{\pm 0.08}}$& $\second{97.10_{\pm 0.15}}$& $28.09_{\pm 0.19}$\\
       \rowcolor{metablue}
       & MetaKE & $\best{99.88}_{\pm 0.11}$ & $\best{98.68}_{\pm 0.24}$ & $\best{76.47}_{\pm 0.38}$ & $\second{620.98_{\pm 0.35}}$ & $\best{43.34}_{\pm 0.26}$ & $\best{99.82}_{\pm 0.12}$ & $\best{97.37}_{\pm 0.15}$ & $\second{29.73_{\pm 0.20}}$ \\
      \midrule \midrule
      \multirow{11}{*}{\rotatebox{90}{GPT2-XL~(1.5B)}} 
       & MEND & $49.70_{\pm 0.49}$ & $49.62_{\pm 0.46}$ & $50.42_{\pm 0.52}$ & $513.08_{\pm 0.25}$ & $26.22_{\pm 0.08}$ & $0.24_{\pm 0.05}$ & $0.24_{\pm 0.05}$ & $2.07_{\pm 0.03}$ \\
       & MALMEN & $52.61_{\pm0.33}$ & $50.99_{\pm0.38}$ & $49.76_{\pm0.41}$ & $536.67_{\pm0.34}$ & $12.84_{\pm0.09}$ & $32.99_{\pm 0.38}$ & $33.00_{\pm 0.03}$ & $2.62_{\pm 0.03}$ \\
       & ROME & $51.74_{\pm 0.38}$ & $49.08_{\pm 0.21}$ & $50.87_{\pm 0.24}$ & $527.96_{\pm 0.89}$ & $1.06_{\pm 0.05}$ & $41.25_{\pm 0.45}$ & $37.62_{\pm 0.38}$ & $12.35_{\pm 0.22}$ \\
       & MEMIT & $94.83_{\pm 0.15}$ & $85.62_{\pm 0.31}$ & $60.21_{\pm 0.33}$ & $472.14_{\pm 0.45}$ & $22.08_{\pm 0.15}$ & $79.58_{\pm 0.25}$ & $72.45_{\pm 0.31}$ & $26.12_{\pm 0.18}$ \\
       & PRUNE & $75.74_{\pm 0.28}$ & $73.75_{\pm 0.21}$ & $52.27_{\pm 0.16}$ & $528.87_{\pm 0.19}$ & $13.84_{\pm 0.21}$ & $28.42_{\pm 0.26}$ & $25.35_{\pm 0.24}$ & $14.52_{\pm 0.18}$ \\
       & RECT & $92.44_{\pm 0.13}$ & $81.35_{\pm 0.27}$ & $65.06_{\pm 0.31}$ & $483.17_{\pm 0.85}$ & $20.46_{\pm 0.09}$ & $83.45_{\pm 0.38}$ & $75.28_{\pm 0.35}$ & $24.93_{\pm 0.24}$ \\
       & PMET & ${96.55}_{\pm 0.58}$ & ${89.95}_{\pm 0.72}$ & ${64.41}_{\pm 0.66}$ & ${560.97}_{\pm 1.74}$ & ${33.90}_{\pm 0.49}$ & ${94.51}_{\pm 0.38}$ & ${87.79}_{\pm 0.62}$ & ${26.47}_{\pm 0.68}$ \\
       & AlphaEdit & $\second{99.42_{\pm 0.25}}$ & $\second{94.35_{\pm 0.24}}$ & $66.05_{\pm 0.22}$ & $592.17_{\pm 0.28}$ & $38.85_{\pm 0.16}$ & $93.52_{\pm 0.28}$ & $83.65_{\pm 0.36}$ & $25.85_{\pm 0.22}$ \\
       & $\text{AlphaEdit}_{\text{BLUE}}$ & $99.13_{\pm 0.18}$ & $92.04_{\pm 0.52}$ & $69.14_{\pm 0.69}$ & $612.51_{\pm 0.68}$ & $\second{39.63_{\pm 0.35}}$ & $\second{97.42_{\pm 0.45}}$ & $\second{88.75_{\pm 0.82}}$ & $26.58_{\pm 0.75}$\\
       & SPHERE  & $99.30_{\pm 0.19}$& $89.10_{\pm 0.36}$& $\second{72.35_{\pm 0.28}}$& $\second{616.41_{\pm 0.86}}$& $39.43_{\pm 0.25}$& $91.41_{\pm 0.16}$& $81.92_{\pm 0.24}$ &$\second{26.74_{\pm 0.21}}$ \\
       \rowcolor{metablue}
       & MetaKE & $\best{99.70}_{\pm 0.18}$ & $\best{94.88}_{\pm 0.31}$ & $\best{72.64}_{\pm 0.26}$ & $\best{617.57}_{\pm 0.25}$ & $\best{41.05}_{\pm 0.28}$ & $\best{98.12}_{\pm 0.18}$ & $\best{91.26}_{\pm 0.32}$ & $\best{27.23}_{\pm 0.25}$ \\
      \bottomrule
    \end{tabular}
   }
   \vspace{-0.2in}
\end{table*}

\subsection{Experimental Setup}
\textbf{Datasets \& Evaluation Metrics.}
We evaluate our method on two benchmark datasets: ZsRE \citep{levy-etal-2017-zero} and CounterFact \citep{meng2022locating}. We report five key metrics:
Efficacy (success rate),
Generalization (paraphrase robustness),
Specificity (locality preservation),
Fluency (generation entropy),
and Consistency (reference mapping) to comprehensively assess the performance of MetaKE.

\textbf{Baselines \& Implementation Details.}
We compare MetaKE against KE baselines: ROME \citep{meng2022locating}, MEMIT \citep{mengmass}, PRUNE \citep{ma2025perturbationrestrained}, RECT \citep{gu2024model}, PMET \citep{li2024pmet}, AlphaEdit \citep{fangalphaedit}, BLUE \citep{li2025rethinking}, and SPHERE \citep{liu2026energyregularized}. We also include meta-learning baselines MEND~\citep{mitchellfast} and MALMEN~\citep{tan2024massive}. Experiments are conducted on GPT2-XL (1.5B) \citep{radford2019language}, GPT-J (6B) \citep{gpt-j}, and
LLaMA3 (8B) \citep{llama3modelcard}.
Unless otherwise noted, we use hyperparameters: shared
settings $T=15$, $\eta = 5\times10^{-3}$ for three models.
The main results are averaged over three random seeds, and experiments are on a single NVIDIA A100 GPU.

\subsection{Main Results}
\label{sec:main_results}
We assess MetaKE under the same sequential editing protocol established by \citep{fangalphaedit}, randomly sampling 2,000 editing instances with a batch size of 100.
As shown in Table~\ref{table 1}, MetaKE outperforms evaluated methods across CounterFact and ZsRE for all three evaluated models.
On ZsRE with GPT2-XL, MetaKE improves Generalization by 9.10\% relative to AlphaEdit, indicating stronger robustness to paraphrased edit queries while maintaining competitive preservation performance. Furthermore, MetaKE attains notable gains on LLaMA3 over AlphaEdit\textsubscript{BLUE}, improving Consistency by 3.22\% and Specificity by 2.11\% while preserving a near-perfect Efficacy of 99.97\%. Overall, by aligning planned targets with the model's feasible geometry, MetaKE achieves stronger generalization while better preserving specificity, yielding a better trade-off between semantic accuracy and editability.

\subsection{Analysis of the Two-Stage Optimization Disconnect}
\label{sec:disconnect_validation}

\setlength{\columnsep}{0pt}
\begin{wrapfigure}[11]{r}{0.48\columnwidth}
\vspace{-2em} 
\centering
\includegraphics[width=0.47\columnwidth]{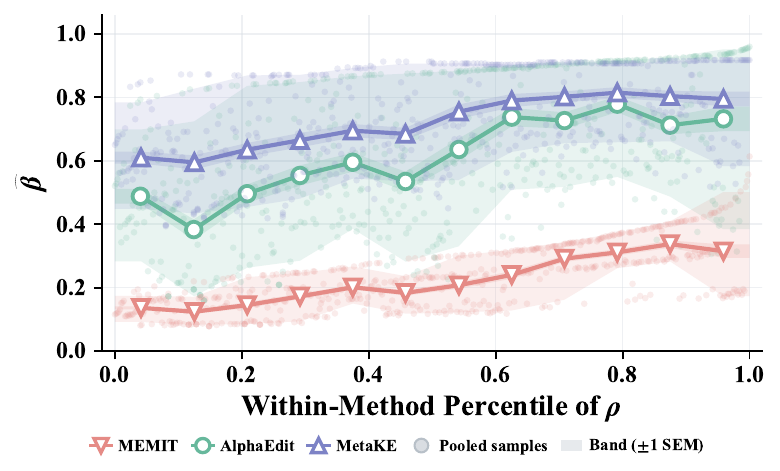}
\vspace{-0.8em}
\begin{minipage}{0.42\columnwidth}
\vspace{-0.1in}
\caption{
Empirical realization coefficient \(\widehat{\beta}\) across association \(\rho\).
}
\label{fig:validation}
\vspace{-0.1in}
\end{minipage}
\end{wrapfigure}

To assess the two-stage optimization disconnect in Sec.~\ref{sec:analysis}, we analyze association-dependent residual realization and compare high- and low-\(\rho\) efficacy curves under upstream KL sweeps. Details are provided in Appendix~\ref{app:disconnect_protocol}.

\textbf{Realized-residual attenuation.}
For each method, we sample 500 edit requests, compute the request-specific association \(\rho_i\), and measure empirical realized coefficient
\(\widehat{\beta}_i\), which is the realized fraction of the planned residual after editing. Fig.~\ref{fig:validation} shows that MEMIT exhibits strong attenuation, especially in
low-\(\rho\) regions; AlphaEdit improves the retained signal but leaves a clear realization gap. MetaKE achieves higher \(\widehat{\beta}\) across association percentiles, suggesting that downstream-aware target optimization mitigates residual attenuation under downstream constraints.

\textbf{Association-dependent requirements on KL strength.}
We further test whether requests with different downstream associations require different upstream KL strengths. For each method, we split 2,000 requests into high-\(\rho\) and low-\(\rho\) groups of 400 requests each, and sweep \(\omega\in\{0.008,0.016,0.031,0.0625,0.125,0.25,0.5\}\). As shown in Fig.~\ref{fig:theorem2}, MEMIT attains its best high-\(\rho\) Efficacy at a much smaller KL value than its best low-\(\rho\) Efficacy, indicating different favorable planning strengths. AlphaEdit reduces this discrepancy but still shows distinct group-wise optima. In contrast, MetaKE makes the two groups' favorable KL regions substantially overlap and maintains higher Efficacy across most KL values. Together with the residual-realization results above, this indicates that since the feasible residual scale depends on \(\beta_i=\rho_i/(1+\rho_i)\), different association regimes favor different effective KL ranges; downstream-aware target optimization alleviates this limitation of association-blind planning, leading to a more favorable accuracy--editability trade-off.

\begin{figure}[t]
	\centering
      \vspace{-0.12in}
	\includegraphics[width=\linewidth]{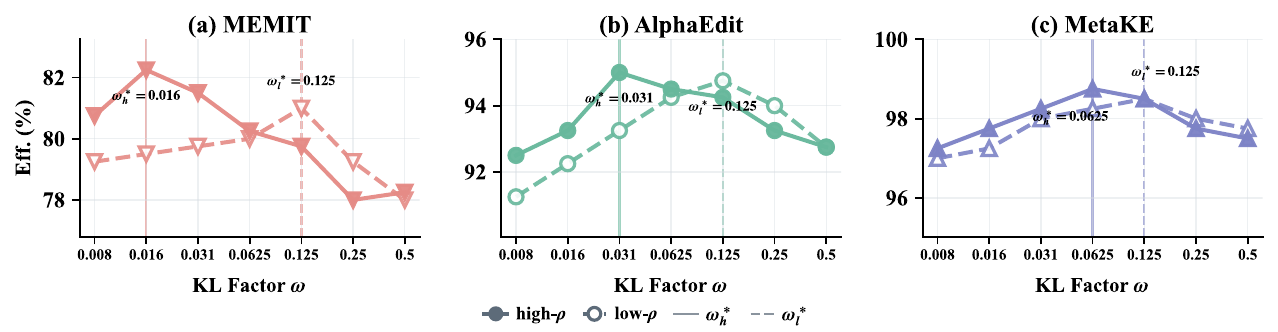}
    \vspace{-0.25in}
	\caption{Efficacy of high- and low-\(\rho\) requests under different KL strengths.}
	\label{fig:theorem2}
          \vspace{-0.22in}
\end{figure}

\subsection{Ablation Study}
\label{sec:ablation}

We ablate MetaKE on LLaMA3 with ZsRE to identify its gain sources. We compare four variants: (i) w/o Bi-level Optimization, which directly uses
\(\boldsymbol{v}_{\mathrm{init}}\) for final editing and removes iterative
target refinement; (ii) w/o Structural Gate, which removes downstream
structural filtering by replacing \(\boldsymbol{M}\) with a
dimension-compatible key-only gate obtained by dropping \(\boldsymbol{P}_L\)
and \(\boldsymbol{K}_p^L(\boldsymbol{K}_p^L)^T\) from
Eq.~\eqref{eq:proxy_definition}; (iii) Stop-Grad Feedback, which keeps the
virtual look-ahead forward pass but detaches
\(\boldsymbol{\Delta}_{\mathrm{proxy}}(\boldsymbol{v}^{*})\) before computing
the edit and locality losses; and (iv) MetaKE.

\setlength{\columnsep}{4pt} 

\begin{wraptable}{r}{0.57\textwidth} 
\vspace{-1.7em}
\centering
\small
\renewcommand{\arraystretch}{1.1}
\setlength{\tabcolsep}{1.5mm}
\caption{\label{tab:ablation} Ablation study on MetaKE.}
\begin{tabular}{l|ccc}
\toprule
\textbf{Variant} & \textbf{Eff.}$\uparrow$ & \textbf{Gen.}$\uparrow$ & \textbf{Spe.}$\uparrow$ \\
\midrule
MetaKE                  & \textbf{96.84} & \textbf{92.45} & \textbf{33.12} \\
\quad - w/o Bi-level Optimization        & 94.27 & 88.35 & 32.65 \\
\quad - w/o Structural Gate     & 95.88 & 90.75 & 32.14 \\
\quad - Stop-Grad Feedback         & 94.31 & 88.24 & 32.48 \\
\bottomrule
\end{tabular}
\vspace{-0.1in}
\end{wraptable}

As shown in Table~\ref{tab:ablation}, removing bi-level optimization reduces
Efficacy and Generalization, indicating that target refinement benefits from
downstream constraints feedback. Stop-Grad Feedback performs close
to w/o Bi-level Optimization, showing that
virtual look-ahead alone is insufficient without differentiable downstream-aware
feedback. Using the key-only gate also degrades all metrics, with a particularly notable relative drop in Specificity,
suggesting that structural filtering through downstream constraints helps align the outer update with
downstream constraints. Overall, MetaKE's gains come from target refinement
driven by differentiable downstream-realization feedback and filtered by the
Structural Gate. We further analyze the runtime and memory cost of this feedback
path in Appendix~\ref{app:runtime} and Appendix~\ref{app:memory}; the results
show that MetaKE improves the accuracy--editability trade-off with manageable
overhead, while remaining within the same cost scale as existing locate-then-edit
methods.

\begin{figure}[htb]
	\centering
 \vspace{-0.1in}
	\includegraphics[width=\linewidth]{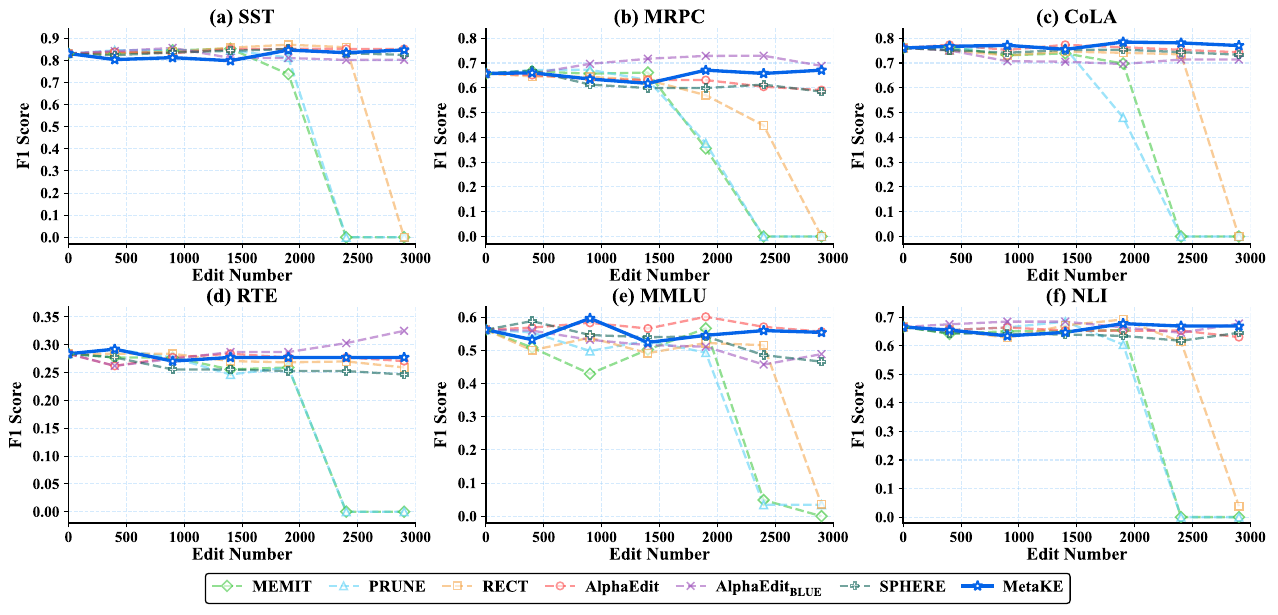}
\vspace{-0.25in}
	\caption{General-capability retention under long-horizon sequential editing on LLaMA3 (8B).}
 \label{fig:sequential}
\vspace{-0.25in}
\end{figure}

\subsection{Sustainability of General Capabilities under Sequential Editing}
\label{sec:exp:sustain}

To evaluate stability under sequential editing, we measure the post-edited model's general capability on six tasks from the GLUE
benchmark \citep{wang2018glue}, including SST \citep{socher2013recursive},
MRPC \citep{dolan2005automatically}, CoLA \citep{warstadt2019neural},
RTE \citep{bentivogli2009fifth}, NLI \citep{williams2018broad}, and MMLU \citep{hendrycks2021measuring}. We perform 3,000 sequential edit requests on LLaMA3, applying batch size 100 and evaluating the general capability score every 500 edits. As shown in Fig.~\ref{fig:sequential}, MEMIT, PRUNE, and RECT all undergo sharp degradation, and in multiple tasks collapse to near-zero performance in the late editing stage. AlphaEdit, AlphaEdit\textsubscript{BLUE} and SPHERE remain considerably more stable. In contrast, MetaKE maintains nearly flat performance trajectories, and remains competitive with or better than the strongest baselines on most benchmarks.

\subsection{Hidden Representations Analysis} 
The two-stage optimization disconnect can also manifest as shifts in internal representation geometry after editing. To examine this effect, we compare the hidden-state distributions of pre- and post-edit models. We extract hidden states for 1,000 randomly sampled factual queries and project them to a 2D plane using t-SNE~\citep{maaten2008visualizing}. As shown in Fig.~\ref{fig:hidden}, AlphaEdit exhibits a visible shift between pre- and post-edit representations across different backbones. In contrast, MetaKE produces a smaller distributional shift, suggesting that downstream-aware target optimization helps preserve the model's internal representation geometry while updating the target knowledge.

\begin{figure}[htb]
	\centering
	\includegraphics[width=\linewidth]{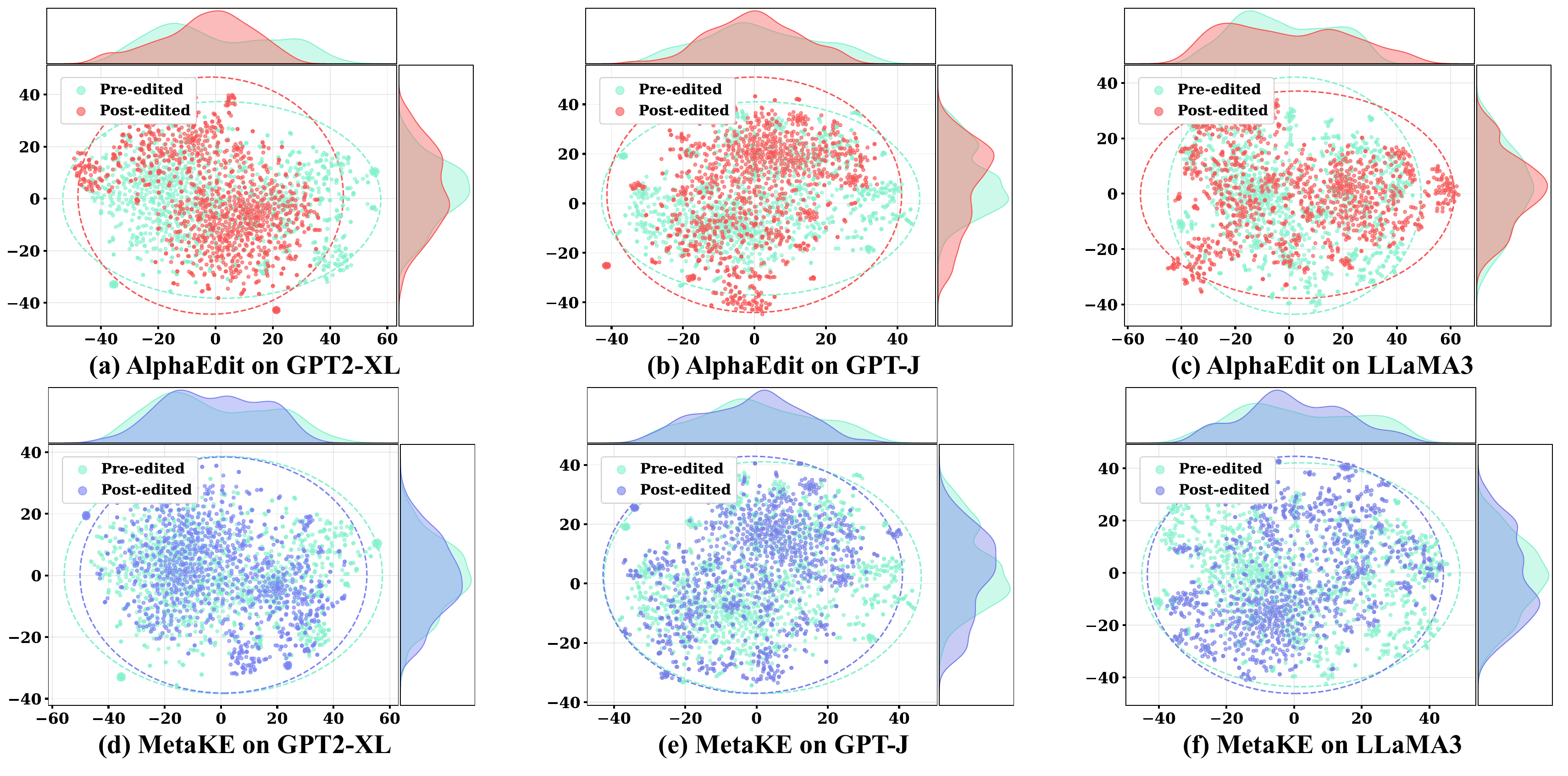}
 \vspace{-0.25in}
	\caption{The distribution of hidden representations of pre-edited and post-edited LLMs.}
 \label{fig:hidden}
 \vspace{-0.2in}
\end{figure}

\section{Conclusion}
\label{sec:conclusion}

We identify a two-stage optimization disconnect in locate-then-edit KE, where upstream target optimization plans a residual without observing request-specific downstream constraints. MetaKE addresses this by refining the target representation with downstream-aware feedback, made tractable by a Structural Gradient Proxy. Experiments show that MetaKE mitigates residual attenuation, reduces association-blind KL limitations, and improves the accuracy--editability trade-off. Future work will extend this principle to more diverse and general nonlinear editing settings.

\newpage
{
\small
\bibliographystyle{IEEEtran}
\bibliography{custom}

@inproceedings{yao2023editing,
  title={Editing large language models: Problems, methods, and opportunities},
  author={Yao, Yunzhi and Wang, Peng and Tian, Bozhong and Cheng, Siyuan and Li, Zhoubo and Deng, Shumin and Chen, Huajun and Zhang, Ningyu},
  booktitle={Proceedings of the 2023 Conference on Empirical Methods in Natural Language Processing},
  pages={10222--10240},
  year={2023}
}

@article{zhang2024comprehensive,
  title={A comprehensive study of knowledge editing for large language models},
  author={Zhang, Ningyu and Yao, Yunzhi and Tian, Bozhong and Wang, Peng and Deng, Shumin and Wang, Mengru and Xi, Zekun and Mao, Shengyu and Zhang, Jintian and Ni, Yuansheng and others},
  journal={arXiv preprint arXiv:2401.01286},
  year={2024}
}

@article{meng2022locating,
  title={Locating and editing factual associations in gpt},
  author={Meng, Kevin and Bau, David and Andonian, Alex and Belinkov, Yonatan},
  journal={Advances in neural information processing systems},
  volume={35},
  pages={17359--17372},
  year={2022}
}

@inproceedings{mengmass,
  title={Mass-Editing Memory in a Transformer},
  author={Meng, Kevin and Sharma, Arnab Sen and Andonian, Alex J and Belinkov, Yonatan and Bau, David},
  booktitle={The Eleventh International Conference on Learning Representations},
    year={2023}
}

@inproceedings{mitchellfast,
  title={Fast Model Editing at Scale},
  author={Mitchell, Eric and Lin, Charles and Bosselut, Antoine and Finn, Chelsea and Manning, Christopher D},
  booktitle={International Conference on Learning Representations},
  year={2022}
}

@inproceedings{fangalphaedit,
  title={AlphaEdit: Null-Space Constrained Knowledge Editing for Language Models},
  author={Fang, Junfeng and Jiang, Houcheng and Wang, Kun and Ma, Yunshan and Shi, Jie and Wang, Xiang and He, Xiangnan and Chua, Tat-Seng},
  booktitle={The Thirteenth International Conference on Learning Representations},
    year={2025}
}

@inproceedings{
tan2024massive,
title={Massive Editing for Large Language Models via Meta Learning},
author={Chenmien Tan and Ge Zhang and Jie Fu},
booktitle={The Twelfth International Conference on Learning Representations},
year={2024},
url={https://openreview.net/forum?id=L6L1CJQ2PE}
}

@inproceedings{geva2021transformer,
  title={Transformer feed-forward layers are key-value memories},
  author={Geva, Mor and Schuster, Roei and Berant, Jonathan and Levy, Omer},
  booktitle={Proceedings of the 2021 Conference on Empirical Methods in Natural Language Processing},
  pages={5484--5495},
  year={2021}
}

@inproceedings{dai2022knowledge,
  title={Knowledge neurons in pretrained transformers},
  author={Dai, Damai and Dong, Li and Hao, Yaru and Sui, Zhifang and Chang, Baobao and Wei, Furu},
  booktitle={Proceedings of the 60th Annual Meeting of the Association for Computational Linguistics (Volume 1: Long Papers)},
  pages={8493--8502},
  year={2022}
}

@inproceedings{li2024pmet,
  title={PMET: Precise model editing in a transformer},
  author={Li, Xiaopeng and Li, Shasha and Song, Shezheng and Yang, Jing and Ma, Jun and Yu, Jie},
  booktitle={Proceedings of the AAAI Conference on Artificial Intelligence},
  volume={38},
  number={17},
  pages={18564--18572},
  year={2024}
}

@inproceedings{gupta2024unified,
  title={A Unified Framework for Model Editing},
  author={Gupta, Akshat and Sajnani, Dev and Anumanchipalli, Gopala},
  booktitle={Findings of the Association for Computational Linguistics: EMNLP 2024},
  pages={15403--15418},
  year={2024}
}

@inproceedings{de2021editing,
  title={Editing Factual Knowledge in Language Models},
  author={De Cao, Nicola and Aziz, Wilker and Titov, Ivan},
  booktitle={Proceedings of the 2021 Conference on Empirical Methods in Natural Language Processing},
  pages={6491--6506},
  year={2021}
}

@inproceedings{li2025adaedit,
  title={AdaEdit: Advancing Continuous Knowledge Editing For Large Language Models},
  author={Li, Qi and Chu, Xiaowen},
  booktitle={Proceedings of the 63rd Annual Meeting of the Association for Computational Linguistics (Volume 1: Long Papers)},
  pages={4127--4149},
  year={2025}
}

@inproceedings{jianganyedit,
  title={AnyEdit: Edit Any Knowledge Encoded in Language Models},
  author={Jiang, Houcheng and Fang, Junfeng and Zhang, Ningyu and Wan, Mingyang and Ma, Guojun and Wang, Xiang and He, Xiangnan and Chua, Tat-Seng},
  booktitle={Forty-second International Conference on Machine Learning},
  year = {2025}
}

@inproceedings{yan2025keys,
  title={Keys to robust edits: From theoretical insights to practical advances},
  author={Yan, Jianhao and Wang, Futing and Luo, Yun and Li, Yafu and Zhang, Yue},
  booktitle={Proceedings of the 63rd Annual Meeting of the Association for Computational Linguistics (Volume 1: Long Papers)},
  pages={22545--22560},
  year={2025}
}

@inproceedings{gu2024model,
  title={Model editing harms general abilities of large language models: Regularization to the rescue},
  author={Gu, Jia-Chen and Xu, Hao-Xiang and Ma, Jun-Yu and Lu, Pan and Ling, Zhen-Hua and Chang, Kai-Wei and Peng, Nanyun},
  booktitle={Proceedings of the 2024 Conference on Empirical Methods in Natural Language Processing},
  pages={16801--16819},
  year={2024}
}

@article{brown2020language,
  title={Language models are few-shot learners},
  author={Brown, Tom and Mann, Benjamin and Ryder, Nick and Subbiah, Melanie and Kaplan, Jared D and Dhariwal, Prafulla and Neelakantan, Arvind and Shyam, Pranav and Sastry, Girish and Askell, Amanda and others},
  journal={Advances in neural information processing systems},
  volume={33},
  pages={1877--1901},
  year={2020}
}

@article{rae2021scaling,
  title={Scaling language models: Methods, analysis \& insights from training gopher},
  author={Rae, Jack W and Borgeaud, Sebastian and Cai, Trevor and Millican, Katie and Hoffmann, Jordan and Song, Francis and Aslanides, John and Henderson, Sarah and Ring, Roman and Young, Susannah and others},
  journal={arXiv preprint arXiv:2112.11446},
  year={2021}
}

@article{ji2023survey,
  title={Survey of hallucination in natural language generation},
  author={Ji, Ziwei and Lee, Nayeon and Frieske, Rita and Yu, Tiezheng and Su, Dan and Xu, Yan and Ishii, Etsuko and Bang, Ye Jin and Madotto, Andrea and Fung, Pascale},
  journal={ACM computing surveys},
  volume={55},
  number={12},
  pages={1--38},
  year={2023},
  publisher={ACM New York, NY}
}

@article{ferrara2023should,
  title={Should ChatGPT be Biased? Challenges and Risks of Bias in Large Language Models},
  author={Ferrara, Emilio},
  journal={Challenges and Risks of Bias in Large Language Models (October 26, 2023)},
  year={2023}
}

@article{hartvigsen2023aging,
  title={Aging with grace: Lifelong model editing with discrete key-value adaptors},
  author={Hartvigsen, Tom and Sankaranarayanan, Swami and Palangi, Hamid and Kim, Yoon and Ghassemi, Marzyeh},
  journal={Advances in Neural Information Processing Systems},
  volume={36},
  pages={47934--47959},
  year={2023}
}

@article{kohonen1972correlation,
  title={Correlation matrix memories},
  author={Kohonen, Teuvo},
  journal={IEEE transactions on computers},
  volume={100},
  number={4},
  pages={353--359},
  year={1972},
  publisher={IEEE}
}

@article{anderson1972simple,
  title={A simple neural network generating an interactive memory},
  author={Anderson, James A},
  journal={Mathematical biosciences},
  volume={14},
  number={3-4},
  pages={197--220},
  year={1972},
  publisher={Elsevier}
}

@inproceedings{levy-etal-2017-zero,
    title = "Zero-Shot Relation Extraction via Reading Comprehension",
    author = "Levy, Omer  and
      Seo, Minjoon  and
      Choi, Eunsol  and
      Zettlemoyer, Luke",
    editor = "Levy, Roger  and
      Specia, Lucia",
    booktitle = "Proceedings of the 21st Conference on Computational Natural Language Learning ({C}o{NLL} 2017)",
    month = aug,
    year = "2017",
    address = "Vancouver, Canada",
    publisher = "Association for Computational Linguistics",
    url = "https://aclanthology.org/K17-1034/",
    doi = "10.18653/v1/K17-1034",
    pages = "333--342"
}

@article{radford2019language,
    title   = {Language models are unsupervised multitask learners},
    author  = {Radford, Alec and Wu, Jeffrey and Child, Rewon and Luan, David and Amodei, Dario and Sutskever, Ilya and others},
    journal = {OpenAI blog},
    volume  = {1},
    number  = {8},
    pages   = {9},
    year    = {2019},
}

@misc{llama3modelcard,
  author = {{Meta}},
  title = {{Llama 3}},
  year = {2024},
  url = {https://llama.meta.com/llama3/},
  urldate = {2024-05-21}, 
  note = {Large language model release}
}

@inproceedings{
ma2025perturbationrestrained,
title={Perturbation-Restrained Sequential Model Editing},
author={Jun-Yu Ma and Hong Wang and Hao-Xiang Xu and Zhen-Hua Ling and Jia-Chen Gu},
booktitle={The Thirteenth International Conference on Learning Representations},
year={2025},
url={https://openreview.net/forum?id=bfI8cp8qmk}
}

@inproceedings{
li2025rethinking,
title={Rethinking Residual Distribution in Locate-then-Edit Model Editing},
author={Xiaopeng Li and Shangwen Wang and Shasha Li and Shezheng Song and Bin Ji and Ma Jun and Jie Yu},
booktitle={The Thirty-ninth Annual Conference on Neural Information Processing Systems},
year={2025},
url={https://openreview.net/forum?id=P9gY05BDkW}
}

@misc{gpt-j,
    author       = {Wang, Ben and Komatsuzaki, Aran},
    title        = {{GPT-J-6B: A 6 billion parameter autoregressive language model}},
    year         = {2021},
    month        = {May},
}

@inproceedings{wang2018glue,
  title={GLUE: A multi-task benchmark and analysis platform for natural language understanding},
  author={Wang, Alex and Singh, Amanpreet and Michael, Julian and Hill, Felix and Levy, Omer and Bowman, Samuel},
  booktitle={Proceedings of the 2018 EMNLP workshop BlackboxNLP: Analyzing and interpreting neural networks for NLP},
  pages={353--355},
  year={2018}
}

@inproceedings{socher2013recursive,
  title={Recursive deep models for semantic compositionality over a sentiment treebank},
  author={Socher, Richard and Perelygin, Alex and Wu, Jean and Chuang, Jason and Manning, Christopher D and Ng, Andrew Y and Potts, Christopher},
  booktitle={Proceedings of the 2013 conference on empirical methods in natural language processing},
  pages={1631--1642},
  year={2013}
}

@inproceedings{dolan2005automatically,
  title={Automatically constructing a corpus of sentential paraphrases},
  author={Dolan, William B and Brockett, Chris},
  booktitle={Proceedings of the third international workshop on paraphrasing (IWP2005)},
  year={2005}
}

@article{warstadt2019neural,
  title={Neural network acceptability judgments},
  author={Warstadt, Alex and Singh, Amanpreet and Bowman, Samuel R},
  journal={Transactions of the Association for Computational Linguistics},
  volume={7},
  pages={625--641},
  year={2019}
}

@article{bentivogli2009fifth,
  title={The Fifth PASCAL Recognizing Textual Entailment Challenge.},
  author={Bentivogli, Luisa and Clark, Peter and Dagan, Ido and Giampiccolo, Danilo},
  journal={TAC},
  volume={7},
  number={8},
  pages={1},
  year={2009}
}

@inproceedings{williams2018broad,
  title={A broad-coverage challenge corpus for sentence understanding through inference},
  author={Williams, Adina and Nangia, Nikita and Bowman, Samuel},
  booktitle={Proceedings of the 2018 conference of the North American chapter of the association for computational linguistics: human language technologies, volume 1 (long papers)},
  pages={1112--1122},
  year={2018}
}

@inproceedings{
hendrycks2021measuring,
title={Measuring Massive Multitask Language Understanding},
author={Dan Hendrycks and Collin Burns and Steven Basart and Andy Zou and Mantas Mazeika and Dawn Song and Jacob Steinhardt},
booktitle={International Conference on Learning Representations},
year={2021},
url={https://openreview.net/forum?id=d7KBjmI3GmQ}
}

@inproceedings{park-etal-2025-context,
    title = "Context-Robust Knowledge Editing for Language Models",
    author = "Park, Haewon  and
      Choi, Gyubin  and
      Kim, Minjun  and
      Jo, Yohan",
    editor = "Che, Wanxiang  and
      Nabende, Joyce  and
      Shutova, Ekaterina  and
      Pilehvar, Mohammad Taher",
    booktitle = "Findings of the Association for Computational Linguistics: ACL 2025",
    month = jul,
    year = "2025",
    address = "Vienna, Austria",
    publisher = "Association for Computational Linguistics",
    url = "https://aclanthology.org/2025.findings-acl.540/",
    doi = "10.18653/v1/2025.findings-acl.540",
    pages = "10360--10385",
    ISBN = "979-8-89176-256-5"
}

@article{maaten2008visualizing,
  title={Visualizing data using t-SNE},
  author={Maaten, Laurens van der and Hinton, Geoffrey},
  journal={Journal of machine learning research},
  volume={9},
  pages={2579--2605},
  year={2008},
month = nov
}

@inproceedings{
liu2026energyregularized,
title={Energy-Regularized Sequential Model Editing on Hyperspheres},
author={Qingyuan Liu and Jia-Chen Gu and Yunzhi Yao and Hong Wang and Nanyun Peng},
booktitle={The Fourteenth International Conference on Learning Representations},
year={2026},
url={https://openreview.net/forum?id=CHsdtzCip6}
}

@article{li2023textbooks,
  title={Textbooks are all you need ii: phi-1.5 technical report},
  author={Li, Yuanzhi and Bubeck, S{\'e}bastien and Eldan, Ronen and Del Giorno, Allie and Gunasekar, Suriya and Lee, Yin Tat},
  journal={arXiv preprint arXiv:2309.05463},
  year={2023}
}

@misc{qwen3technicalreport,
      title={Qwen3 Technical Report}, 
      author={Qwen Team},
      year={2025},
      eprint={2505.09388},
      archivePrefix={arXiv},
      primaryClass={cs.CL},
      url={https://arxiv.org/abs/2505.09388}, 
}

@inproceedings{mitchell2022memory,
  title={Memory-based model editing at scale},
  author={Mitchell, Eric and Lin, Charles and Bosselut, Antoine and Manning, Christopher D and Finn, Chelsea},
  booktitle={International Conference on Machine Learning},
  pages={15817--15831},
  year={2022},
  organization={PMLR}
}

@inproceedings{liska2022streamingqa,
  title={StreamingQA: A benchmark for adaptation to new knowledge over time in question answering models},
  author={Liska, Adam and Kocisky, Tomas and Gribovskaya, Elena and Terzi, Tayfun and Sezener, Eren and Agrawal, Devang and D’Autume, Cyprien De Masson and Scholtes, Tim and Zaheer, Manzil and Young, Susannah and others},
  booktitle={International Conference on Machine Learning},
  pages={13604--13622},
  year={2022},
  organization={PMLR}
}

@article{lazaridou2021mind,
  title={Mind the gap: Assessing temporal generalization in neural language models},
  author={Lazaridou, Angeliki and Kuncoro, Adhi and Gribovskaya, Elena and Agrawal, Devang and Liska, Adam and Terzi, Tayfun and Gimenez, Mai and de Masson d'Autume, Cyprien and Kocisky, Tomas and Ruder, Sebastian and others},
  journal={Advances in Neural Information Processing Systems},
  volume={34},
  pages={29348--29363},
  year={2021}
}

@inproceedings{jeong-etal-2025-steam,
    title = "{STEAM}: A Semantic-Level Knowledge Editing Framework for Large Language Models",
    author = "Jeong, Geunyeong  and
      Sun, Juoh  and
      Lee, Seonghee  and
      Kim, Harksoo",
    editor = "Christodoulopoulos, Christos  and
      Chakraborty, Tanmoy  and
      Rose, Carolyn  and
      Peng, Violet",
    booktitle = "Findings of the Association for Computational Linguistics: EMNLP 2025",
    month = nov,
    year = "2025",
    address = "Suzhou, China",
    publisher = "Association for Computational Linguistics",
    url = "https://aclanthology.org/2025.findings-emnlp.585/",
    doi = "10.18653/v1/2025.findings-emnlp.585",
    pages = "11008--11023",
    ISBN = "979-8-89176-335-7"
}

@inproceedings{wei2025setke,
  title={SetKE: knowledge editing for knowledge elements overlap},
  author={Wei, Yifan and Yu, Xiaoyan and Song, Ran and Peng, Hao and Li, Angsheng},
  booktitle={Proceedings of the Thirty-Fourth International Joint Conference on Artificial Intelligence},
  pages={8295--8303},
  year={2025}
}

@inproceedings{
fei2026scaling,
title={Scaling Knowledge Editing in {LLM}s to 100,000 Facts with Neural {KV} Database},
author={Weizhi Fei and Hao Shi and Jing Xu and Jingchen Peng and Jiazheng Li and Jingzhao Zhang and Bo Bai and Wei Han and Zhenyuan Chen and Xueyan Niu},
booktitle={The Fourteenth International Conference on Learning Representations},
year={2026},
url={https://openreview.net/forum?id=Z0CX62CSJQ}
}

@inproceedings{
wang2025revealing,
title={Revealing and Mitigating Over-Attention in Knowledge Editing},
author={Pinzheng Wang and Zecheng Tang and Keyan Zhou and Juntao Li and Qiaoming Zhu and Min Zhang},
booktitle={The Thirteenth International Conference on Learning Representations},
year={2025},
url={https://openreview.net/forum?id=4l3AH8Bhmt}
}

@inproceedings{gupta-etal-2025-lifelong,
    title = "Lifelong Knowledge Editing requires Better Regularization",
    author = "Gupta, Akshat  and
      Prateepamornkul, Phudish  and
      Lu, Maochuan  and
      Alaa, Ahmed  and
      Hartvigsen, Thomas  and
      Anumanchipalli, Gopala",
    editor = "Christodoulopoulos, Christos  and
      Chakraborty, Tanmoy  and
      Rose, Carolyn  and
      Peng, Violet",
    booktitle = "Findings of the Association for Computational Linguistics: EMNLP 2025",
    month = nov,
    year = "2025",
    address = "Suzhou, China",
    publisher = "Association for Computational Linguistics",
    url = "https://aclanthology.org/2025.findings-emnlp.1234/",
    doi = "10.18653/v1/2025.findings-emnlp.1234",
    pages = "22653--22675",
    ISBN = "979-8-89176-335-7"
}

@inproceedings{baghel2025resolving,
  title={Resolving UnderEdit \& OverEdit with Iterative \& Neighbor-Assisted Model Editing},
  author={Baghel, Bhiman Kumar and Jordan, Emma and Shi, Zheyuan Ryan and Li, Xiang Lorraine},
  booktitle={Findings of the Association for Computational Linguistics: EMNLP 2025},
  pages={14786--14808},
  year={2025}
}

@inproceedings{zhao2025fedleke,
  title={FedLEKE: Federated Locate-then-Edit Knowledge Editing for Multi-Client Collaboration},
  author={Zhao, Zongkai and Xu, Guozeng and Li, Xiuhua and Wei, Kaiwen and Zhong, Jiang},
  booktitle={Findings of the Association for Computational Linguistics: ACL 2025},
  pages={14247--14258},
  year={2025}
}

@inproceedings{wang-etal-2025-lyaplock,
    title = "{L}yap{L}ock: Bounded Knowledge Preservation in Sequential Large Language Model Editing",
    author = "Wang, Peng  and
      Zhou, Biyu  and
      Tang, Xuehai  and
      Han, Jizhong  and
      Hu, Songlin",
    editor = "Christodoulopoulos, Christos  and
      Chakraborty, Tanmoy  and
      Rose, Carolyn  and
      Peng, Violet",
    booktitle = "Proceedings of the 2025 Conference on Empirical Methods in Natural Language Processing",
    month = nov,
    year = "2025",
    address = "Suzhou, China",
    publisher = "Association for Computational Linguistics",
    url = "https://aclanthology.org/2025.emnlp-main.327/",
    doi = "10.18653/v1/2025.emnlp-main.327",
    pages = "6434--6459",
    ISBN = "979-8-89176-332-6"
}

@inproceedings{zhang2024instructedit,
  title={InstructEdit: instruction-based knowledge editing for large language models},
  author={Zhang, Ningyu and Tian, Bozhong and Cheng, Siyuan and Liang, Xiaozhuan and Hu, Yi and Xue, Kouying and Gou, Yanjie and Chen, Xi and Chen, Huajun},
  booktitle={Proceedings of the Thirty-Third International Joint Conference on Artificial Intelligence},
  pages={6633--6641},
  year={2024}
}

@inproceedings{li2025reinforced,
  title={Reinforced Lifelong Editing for Language Models},
  author={Li, Zherui and Jiang, Houcheng and Chen, Hao and Bi, Baolong and Zhou, Zhenhong and Sun, Fei and Fang, Junfeng and Wang, Xiang},
  booktitle={International Conference on Machine Learning},
  pages={34920--34942},
  year={2025},
  organization={PMLR}
}

@inproceedings{zhang2024dafnet,
  title={Dafnet: Dynamic auxiliary fusion for sequential model editing in large language models},
  author={Zhang, Taolin and Chen, Qizhou and Li, Dongyang and Wang, Chengyu and He, Xiaofeng and Huang, Longtao and Huang, Jun and others},
  booktitle={Findings of the Association for Computational Linguistics: ACL 2024},
  pages={1588--1602},
  year={2024}
}
}


\clearpage
\section*{Appendix Contents}
\appendix

\appendixentry{app:theory}{Theoretical Analysis}
\appendixsubentry{app:spectral_proofs}{Proof of Proposition~\ref*{prop:target_attenuation}}
\appendixsubentry{app:static_trap_proof}{Proof of Theorem~\ref*{thm:regularization_disconnect}}
\appendixsubentry{app:bilevel_proofs}{A Conditional Necessity Analysis of Downstream-Aware Target Optimization}
\appendixsubentry{app:convergence}{Optimization Properties of Algorithm~\ref*{alg:MetaKE}}
\appendixsubsubentry{app:virtual_lookahead_objective}{Virtual Look-ahead Objective}
\appendixsubsubentry{app:feedback_correction_convergence}{Convergence of Downstream-Aware Feedback Correction}
\appendixsubsubentry{app:connection_original_bilevel}{Connection to the Original Multi-layer Bi-level Objective}
\appendixsubentry{app:proxy_fidelity}{Proof for Theorem~\ref*{thm:fidelity}}

\vspace{0.25em}
\appendixentry{app:method_details}{Methodology Details}
\appendixsubentry{app:target_optimization}{Target Optimization}
\appendixsubentry{app:layer_target_allocation}{Layer-Specific Target Allocation}
\appendixsubentry{app:batched_virtual_lookahead}{Batched Virtual Look-ahead}

\vspace{0.25em}
\appendixentry{sec:appendix_exp}{Experimental Setup}
\appendixsubentry{sec:appendix_datasets}{Datasets}
\appendixsubentry{sec:appendix_metrics}{Evaluation Metrics}
\appendixsubsubentry{sec:appendix_metrics_zsre}{Metrics on ZsRE}
\appendixsubsubentry{sec:appendix_metrics_counterfact}{Metrics on CounterFact}
\appendixsubentry{sec:appendix_baselines}{Baseline Methods}
\appendixsubentry{sec:appendix_general_capability}{Evaluation of General Capability}
\appendixsubentry{app:proxy_validation_details}{Empirical Details for Structural Proxy Validation}
\appendixsubentry{app:disconnect_protocol}{Empirical Details for Sec.~\ref*{sec:disconnect_validation}}

\vspace{0.25em}
\appendixentry{app:more_experimental_results}{More Experimental Results}
\appendixsubentry{app:hyperparameter_sensitivity}{Hyperparameter Sensitivity Analysis}
\appendixsubentry{app:phi_qwen}{Additional Results on Smaller and Reasoning-Oriented Models}
\appendixsubentry{sec:appendix_seq_robustness}{Experimental Verification of Sequential Robustness}
\appendixsubentry{app:runtime}{Runtime Evaluation}
\appendixsubentry{app:memory}{Memory Evaluation}
\appendixsubentry{sec:appendix_case_study}{Case Study}

\vspace{0.25em}
\appendixentry{appendix:limitations}{Limitations}

\vspace{0.25em}
\appendixentry{appendix:broader}{Broader impacts}

\vspace{0.25em}
\appendixentry{app:reproducibility}{Reproducibility Details}

\vspace{0.25em}
\appendixentry{app:licenses}{Assets and Licenses}

\vspace{0.25em}

\clearpage
\section{Theoretical Analysis}
\label{app:theory}
\subsection{Proof of Proposition~\ref{prop:target_attenuation}}
\label{app:spectral_proofs}
Let
\(\boldsymbol{K}_1\) denote the keys of the current to-be-updated knowledge,
\(\boldsymbol{K}_0\) the keys of preserved knowledge, and
\(\boldsymbol{K}_p\) the keys of previously updated knowledge. For a
current edit, let \(\boldsymbol{k}_1\in\boldsymbol{K}_1\) be the current edit
key and let \(\boldsymbol{r}\) be the planned residual induced by Stage~I. \(\boldsymbol{r}=\boldsymbol{v}^{*}-\boldsymbol{W}\boldsymbol{k}_1\),
or, under multi-layer residual allocation,
\(\boldsymbol{r}\) denotes the effective residual assigned to the current
edited layer.

\paragraph{Scope of the surrogate.}
The quadratic surrogate fixes current keys, planned residual, and
constraint statistics, and analyzes the local downstream realization step
rather than the full nonlinear Transformer. Nonlinearities affect the result
through locally evaluated keys and higher-order remainders, so
Proposition~\ref{prop:target_attenuation} should be read as a first-order
local characterization. When hidden states and keys vary smoothly with the
update, the deviation from the local surrogate can be written as a nonlinear
residual \(\boldsymbol{e}_{\mathrm{nl}}\), with
\(\|\boldsymbol{e}_{\mathrm{nl}}\|=O(\|\boldsymbol{\Delta}\|_F^2)\) in a
sufficiently small neighborhood. Thus, \(\beta=\rho/(1+\rho)\)
captures the leading-order constrained realization, while activation
nonlinearities such as SwiGLU are absorbed into the locally evaluated keys
and the higher-order residual term.

\paragraph{MEMIT covariance constraints.}
For a current edit, consider the MEMIT local quadratic surrogate
\begin{equation}
\boldsymbol{\Delta}^{*}
=
\operatorname*{arg\,min}_{\boldsymbol{\Delta}}
\;
\|
\boldsymbol{\Delta}\boldsymbol{k}_1-\boldsymbol{r}
\|_2^2
+
\operatorname{tr}
(
\boldsymbol{\Delta}
\boldsymbol{C}_0
\boldsymbol{\Delta}^{T}
)
+
\|
\boldsymbol{\Delta}\boldsymbol{K}_p
\|_F^2 ,
\end{equation}
where \(\boldsymbol{C}_0\) is the covariance surrogate of preserved knowledge.
Since
\begin{equation}
\|
\boldsymbol{\Delta}\boldsymbol{K}_p
\|_F^2
=
\operatorname{tr}
(
\boldsymbol{\Delta}
\boldsymbol{K}_p\boldsymbol{K}_p^{T}
\boldsymbol{\Delta}^{T}
),
\end{equation}
define
\begin{equation}
\boldsymbol{A}_{\mathrm{M}}
:=
\boldsymbol{C}_0
+
\boldsymbol{K}_p\boldsymbol{K}_p^{T}.
\end{equation}
Assume that \(\boldsymbol{A}_{\mathrm{M}}\) is nonsingular on the considered
subspace. The objective becomes
\begin{equation}
\boldsymbol{\Delta}^{*}
=
\operatorname*{arg\,min}_{\boldsymbol{\Delta}}
\;
\|
\boldsymbol{\Delta}\boldsymbol{k}_1-\boldsymbol{r}
\|_2^2
+
\operatorname{tr}
(
\boldsymbol{\Delta}
\boldsymbol{A}_{\mathrm{M}}
\boldsymbol{\Delta}^{T}
).
\end{equation}
The first-order condition is
\begin{equation}
(
\boldsymbol{\Delta}\boldsymbol{k}_1-\boldsymbol{r}
)
\boldsymbol{k}_1^{T}
+
\boldsymbol{\Delta}
\boldsymbol{A}_{\mathrm{M}}
=
\boldsymbol{0}.
\end{equation}
Therefore,
\begin{equation}
\boldsymbol{\Delta}^{*}
=
\boldsymbol{r}\boldsymbol{k}_1^{T}
\left(
\boldsymbol{A}_{\mathrm{M}}
+
\boldsymbol{k}_1\boldsymbol{k}_1^{T}
\right)^{-1}.
\end{equation}
The realized residual is
\begin{equation}
\boldsymbol{\delta}^{\mathrm{real}}
=
\boldsymbol{\Delta}^{*}\boldsymbol{k}_1
=
\boldsymbol{r}\,
\boldsymbol{k}_1^{T}
\left(
\boldsymbol{A}_{\mathrm{M}}
+
\boldsymbol{k}_1\boldsymbol{k}_1^{T}
\right)^{-1}
\boldsymbol{k}_1 .
\end{equation}
Define
\begin{equation}
\rho_{\mathrm{M}}
=
\boldsymbol{k}_1^{T}
\boldsymbol{A}_{\mathrm{M}}^{-1}
\boldsymbol{k}_1 .
\end{equation}
By the Sherman--Morrison identity,
\begin{equation}
\boldsymbol{k}_1^{T}
\left(
\boldsymbol{A}_{\mathrm{M}}
+
\boldsymbol{k}_1\boldsymbol{k}_1^{T}
\right)^{-1}
\boldsymbol{k}_1
=
\frac{
\rho_{\mathrm{M}}
}{
1+
\rho_{\mathrm{M}}
}.
\end{equation}
Thus,
\begin{equation}
\boldsymbol{\delta}^{\mathrm{real}}
=
\frac{
\rho_{\mathrm{M}}
}{
1+
\rho_{\mathrm{M}}
}
\boldsymbol{r}.
\end{equation}

\paragraph{AlphaEdit projected constraints.}
For AlphaEdit, preserved knowledge is handled by the null-space projector
\(\boldsymbol{P}\), which satisfies
\(\boldsymbol{P}^{2}=\boldsymbol{P}\) and
\(\boldsymbol{P}^{T}=\boldsymbol{P}\). Let
\(\boldsymbol{B}\) denote the AlphaEdit update applied to the edited weights,
i.e.,
\[
\boldsymbol{B}:=\boldsymbol{\Delta}_{\mathrm{AE}} .
\]
Following the projected form of AlphaEdit, this applied update lies in the
right-projected subspace, so
\[
\boldsymbol{B}=\boldsymbol{B}\boldsymbol{P}.
\]
This property implies
\(\boldsymbol{B}\boldsymbol{K}_0=\boldsymbol{0}\), and therefore the preserved
keys are unchanged by the applied perturbation.

Define the projected current-edit key as
\[
\boldsymbol{q}_1:=\boldsymbol{P}\boldsymbol{k}_1 .
\]
Since \(\boldsymbol{B}=\boldsymbol{B}\boldsymbol{P}\), we have
\[
\boldsymbol{B}\boldsymbol{k}_1
=
\boldsymbol{B}\boldsymbol{P}\boldsymbol{k}_1
=
\boldsymbol{B}\boldsymbol{q}_1 .
\]
The single-request local objective can therefore be written over the applied
\begin{equation}
\boldsymbol{B}^{*}
=
\operatorname*{arg\,min}_{\boldsymbol{B}=\boldsymbol{B}\boldsymbol{P}}
\;
\|
\boldsymbol{B}\boldsymbol{q}_1-\boldsymbol{r}
\|_2^2
+
\|
\boldsymbol{B}
\|_F^2
+
\|
\boldsymbol{B}\boldsymbol{K}_p
\|_F^2 .
\end{equation}
\begin{equation}
\|
\boldsymbol{B}\boldsymbol{K}_p
\|_F^2
=
\operatorname{tr}
\left(
\boldsymbol{B}
\boldsymbol{P}\boldsymbol{K}_p\boldsymbol{K}_p^{T}\boldsymbol{P}
\boldsymbol{B}^{T}
\right).
\end{equation}
Define
\begin{equation}
\boldsymbol{A}_{\mathrm{AE}}
:=
\boldsymbol{I}
+
\boldsymbol{P}\boldsymbol{K}_p\boldsymbol{K}_p^{T}\boldsymbol{P}.
\end{equation}
Then
\begin{equation}
\boldsymbol{B}^{*}
=
\operatorname*{arg\,min}_{\boldsymbol{B}=\boldsymbol{B}\boldsymbol{P}}
\;
\|
\boldsymbol{B}\boldsymbol{q}_1-\boldsymbol{r}
\|_2^2
+
\operatorname{tr}
(
\boldsymbol{B}
\boldsymbol{A}_{\mathrm{AE}}
\boldsymbol{B}^{T}
).
\end{equation}
The first-order condition in the projected subspace is
\begin{equation}
(
\boldsymbol{B}\boldsymbol{q}_1-\boldsymbol{r}
)
\boldsymbol{q}_1^{T}
+
\boldsymbol{B}
\boldsymbol{A}_{\mathrm{AE}}
=
\boldsymbol{0}.
\end{equation}
Hence,
\begin{equation}
\boldsymbol{B}^{*}
=
\boldsymbol{r}\boldsymbol{q}_1^{T}
\left(
\boldsymbol{A}_{\mathrm{AE}}
+
\boldsymbol{q}_1\boldsymbol{q}_1^{T}
\right)^{-1}.
\end{equation}
The realized residual on the original current edit key is
\begin{equation}
\boldsymbol{\delta}^{\mathrm{real}}
=
\boldsymbol{B}^{*}\boldsymbol{k}_1
=
\boldsymbol{B}^{*}\boldsymbol{P}\boldsymbol{k}_1
=
\boldsymbol{B}^{*}\boldsymbol{q}_1 .
\end{equation}
Therefore,
\begin{equation}
\boldsymbol{\delta}^{\mathrm{real}}
=
\boldsymbol{r}\,
\boldsymbol{q}_1^{T}
\left(
\boldsymbol{A}_{\mathrm{AE}}
+
\boldsymbol{q}_1\boldsymbol{q}_1^{T}
\right)^{-1}
\boldsymbol{q}_1 .
\end{equation}
Define
\begin{equation}
\rho_{\mathrm{AE}}
=
\boldsymbol{q}_1^{T}
\boldsymbol{A}_{\mathrm{AE}}^{-1}
\boldsymbol{q}_1
=
(\boldsymbol{P}\boldsymbol{k}_1)^{T}
\left(
\boldsymbol{I}
+
\boldsymbol{P}\boldsymbol{K}_p\boldsymbol{K}_p^{T}\boldsymbol{P}
\right)^{-1}
\boldsymbol{P}\boldsymbol{k}_1 .
\end{equation}
By the Sherman--Morrison identity,
\begin{equation}
\boldsymbol{q}_1^{T}
\left(
\boldsymbol{A}_{\mathrm{AE}}
+
\boldsymbol{q}_1\boldsymbol{q}_1^{T}
\right)^{-1}
\boldsymbol{q}_1
=
\frac{
\rho_{\mathrm{AE}}
}{
1+
\rho_{\mathrm{AE}}
}.
\end{equation}
Thus,
\begin{equation}
\boldsymbol{\delta}^{\mathrm{real}}
=
\frac{
\rho_{\mathrm{AE}}
}{
1+
\rho_{\mathrm{AE}}
}
\boldsymbol{r}.
\end{equation}

\paragraph{Remaining unrealized residual.}
Combining MEMIT and AlphaEdit derivations, both single-request
updates satisfy
\begin{equation}
\boldsymbol{\delta}^{\mathrm{real}}
=
\beta\boldsymbol{r},
\qquad
\beta=\frac{\rho}{1+\rho}\in[0,1).
\end{equation}
Consequently, the unrealized residual is
\begin{equation}
\boldsymbol{r}^{\mathrm{rem}}
:=
\boldsymbol{r}
-
\boldsymbol{\delta}^{\mathrm{real}}
=
(1-\beta)\boldsymbol{r}
=
\frac{1}{1+\rho}\boldsymbol{r}.
\end{equation}

\paragraph{Batch-edit extension.}
For MEMIT-style constraints, let
\(\boldsymbol{K}_1=[\boldsymbol{k}_{1,1},\ldots,\boldsymbol{k}_{1,b}]\) and
\(\boldsymbol{R}\) collect the planned residuals of the current batch. The
batch objective is
\begin{equation}
\boldsymbol{\Delta}^{*}
=
\operatorname*{arg\,min}_{\boldsymbol{\Delta}}
\;
\|
\boldsymbol{\Delta}\boldsymbol{K}_1-\boldsymbol{R}
\|_F^2
+
\operatorname{tr}
(
\boldsymbol{\Delta}
\boldsymbol{A}_{\mathrm{M}}
\boldsymbol{\Delta}^{T}
).
\end{equation}
The solution is
\begin{equation}
\boldsymbol{\Delta}^{*}
=
\boldsymbol{R}
\boldsymbol{K}_1^{T}
\left(
\boldsymbol{A}_{\mathrm{M}}
+
\boldsymbol{K}_1\boldsymbol{K}_1^{T}
\right)^{-1}.
\end{equation}
Thus the realized batch residual is
\begin{equation}
\boldsymbol{\Delta}^{*}\boldsymbol{K}_1
=
\boldsymbol{R}
\boldsymbol{T}_{\mathrm{M}},
\end{equation}
where
\begin{equation}
\boldsymbol{T}_{\mathrm{M}}
=
\boldsymbol{K}_1^{T}
\left(
\boldsymbol{A}_{\mathrm{M}}
+
\boldsymbol{K}_1\boldsymbol{K}_1^{T}
\right)^{-1}
\boldsymbol{K}_1 .
\end{equation}
Let
\begin{equation}
\boldsymbol{S}_{\mathrm{M}}
=
\boldsymbol{K}_1^{T}
\boldsymbol{A}_{\mathrm{M}}^{-1}
\boldsymbol{K}_1 .
\end{equation}
By the Woodbury identity,
\begin{equation}
\boldsymbol{T}_{\mathrm{M}}
=
\boldsymbol{S}_{\mathrm{M}}
(
\boldsymbol{I}
+
\boldsymbol{S}_{\mathrm{M}}
)^{-1}.
\end{equation}

For AlphaEdit-style constraints, define
\(\boldsymbol{Q}_1:=\boldsymbol{P}\boldsymbol{K}_1\). The batch projected
solution is
\begin{equation}
\boldsymbol{B}^{*}
=
\boldsymbol{R}
\boldsymbol{Q}_1^{T}
\left(
\boldsymbol{A}_{\mathrm{AE}}
+
\boldsymbol{Q}_1\boldsymbol{Q}_1^{T}
\right)^{-1}.
\end{equation}
The realized batch residual is
\begin{equation}
\boldsymbol{B}^{*}\boldsymbol{K}_1
=
\boldsymbol{B}^{*}\boldsymbol{Q}_1
=
\boldsymbol{R}
\boldsymbol{T}_{\mathrm{AE}},
\end{equation}
where
\begin{equation}
\boldsymbol{T}_{\mathrm{AE}}
=
\boldsymbol{Q}_1^{T}
\left(
\boldsymbol{A}_{\mathrm{AE}}
+
\boldsymbol{Q}_1\boldsymbol{Q}_1^{T}
\right)^{-1}
\boldsymbol{Q}_1 .
\end{equation}
Let
\begin{equation}
\boldsymbol{S}_{\mathrm{AE}}
=
\boldsymbol{Q}_1^{T}
\boldsymbol{A}_{\mathrm{AE}}^{-1}
\boldsymbol{Q}_1 .
\end{equation}
By the Woodbury identity,
\begin{equation}
\boldsymbol{T}_{\mathrm{AE}}
=
\boldsymbol{S}_{\mathrm{AE}}
(
\boldsymbol{I}
+
\boldsymbol{S}_{\mathrm{AE}}
)^{-1}.
\end{equation}
Thus, batch editing realizes planned residuals through a request-interaction
matrix rather than independent scalar factors. When the batch contains a single
current edit, the matrix form reduces to the scalar attenuation factor in
Proposition~\ref{prop:target_attenuation}. \qed

\subsection{Proof of Theorem~\ref{thm:regularization_disconnect}}
\label{app:static_trap_proof}

By Proposition~\ref{prop:target_attenuation}, the realized residual of request
\(i\) satisfies
\begin{equation}
\boldsymbol{\delta}^{\mathrm{real}}_i
=
\beta_i\boldsymbol{r}_i,
\qquad
\beta_i=\frac{\rho_i}{1+\rho_i}.
\end{equation}
Since \(\rho_i\ge 0\), we have \(\beta_i\in[0,1)\). If \(\beta_i=0\) and
\(m>0\), then
\(\|\boldsymbol{\delta}^{\mathrm{real}}_i\|=0<m\) for any finite
\(\|\boldsymbol{r}_i\|\), so semantic success is impossible. We therefore
consider the non-degenerate case \(\beta_i\in(0,1)\).

Semantic success requires
\begin{equation}
\|\boldsymbol{\delta}^{\mathrm{real}}_i\|\ge m.
\end{equation}
Substituting
\(\boldsymbol{\delta}^{\mathrm{real}}_i=\beta_i\boldsymbol{r}_i\) gives
\begin{equation}
\beta_i\|\boldsymbol{r}_i\|\ge m,
\end{equation}
and hence
\begin{equation}
\|\boldsymbol{r}_i\|
\ge
\frac{m}{\beta_i}.
\end{equation}
This gives the lower bound on the planned residual required for semantic
success after downstream realization.

Downstream editability requires
\begin{equation}
\|\boldsymbol{r}_i-\boldsymbol{\delta}^{\mathrm{real}}_i\|\le\tau.
\end{equation}
Using
\(\boldsymbol{\delta}^{\mathrm{real}}_i=\beta_i\boldsymbol{r}_i\), we have
\begin{equation}
\boldsymbol{r}_i-\boldsymbol{\delta}^{\mathrm{real}}_i
=
(1-\beta_i)\boldsymbol{r}_i .
\end{equation}
Therefore,
\begin{equation}
\|\boldsymbol{r}_i-\boldsymbol{\delta}^{\mathrm{real}}_i\|
=
(1-\beta_i)\|\boldsymbol{r}_i\|.
\end{equation}
The downstream editability condition then implies
\begin{equation}
(1-\beta_i)\|\boldsymbol{r}_i\|\le\tau,
\end{equation}
and since \(\beta_i<1\),
\begin{equation}
\|\boldsymbol{r}_i\|
\le
\frac{\tau}{1-\beta_i}.
\end{equation}
This gives the upper bound on the planned residual imposed by downstream
editability.

Combining the two requirements, any feasible planned residual must satisfy
\begin{equation}
\frac{m}{\beta_i}
\le
\|\boldsymbol{r}_i\|
\le
\frac{\tau}{1-\beta_i}.
\end{equation}
Thus, the feasible scale of \(\boldsymbol{r}_i\) depends on
\(\beta_i\), and hence on the downstream association
\(\rho_i\) through \(\beta_i=\rho_i/(1+\rho_i)\).

It remains to derive the infeasibility condition. The interval above is
non-empty only if
\begin{equation}
\frac{m}{\beta_i}
\le
\frac{\tau}{1-\beta_i}.
\end{equation}
Since \(\beta_i\in(0,1)\), multiplying both sides by
\(\beta_i(1-\beta_i)>0\) gives
\begin{equation}
m(1-\beta_i)\le \tau\beta_i.
\end{equation}
Equivalently,
\begin{equation}
m\le (m+\tau)\beta_i,
\end{equation}
which yields
\begin{equation}
\beta_i\ge \frac{m}{m+\tau}.
\end{equation}
Therefore, if
\begin{equation}
\beta_i<\frac{m}{m+\tau},
\end{equation}
no finite choice of \(\|\boldsymbol{r}_i\|\) can satisfy both semantic success
and downstream editability under this local model. This completes the proof.
\qed

\paragraph{Implication for shared KL regularization.}
If the shared upstream KL coefficient locally induces
\(\|\boldsymbol{r}_i(\omega)\|\approx a_i/\omega\), with \(a_i>0\) and
\(\omega>0\), then Theorem~\ref{thm:regularization_disconnect} implies
\begin{equation}
\frac{m}{\beta_i}
\le
\frac{a_i}{\omega}
\le
\frac{\tau}{1-\beta_i}.
\end{equation}
Equivalently, the admissible range of \(\omega\) for request \(i\) is
\begin{equation}
\Omega_i
=
\left[
L_i,U_i
\right]
=
\left[
\frac{a_i(1-\beta_i)}{\tau},
\frac{a_i\beta_i}{m}
\right].
\end{equation}
If \(\omega>U_i\), then
\begin{equation}
\|\boldsymbol{r}_i(\omega)\|
<
\frac{m}{\beta_i},
\end{equation}
so the realized residual satisfies
\begin{equation}
\|\boldsymbol{\delta}^{\mathrm{real}}_i\|
=
\beta_i\|\boldsymbol{r}_i(\omega)\|
<
m.
\end{equation}
Thus the planned residual is over-shrunk and semantic editing becomes
insufficient.

If \(\omega<L_i\), then
\begin{equation}
\|\boldsymbol{r}_i(\omega)\|
>
\frac{\tau}{1-\beta_i}.
\end{equation}
In this case, the upstream stage produces an over-large planned residual
relative to the downstream realization factor. Since
\begin{equation}
\boldsymbol{r}_i-\boldsymbol{\delta}^{\mathrm{real}}_i
=
(1-\beta_i)\boldsymbol{r}_i,
\end{equation}
we obtain
\begin{equation}
\|\boldsymbol{r}_i-\boldsymbol{\delta}^{\mathrm{real}}_i\|
=
(1-\beta_i)\|\boldsymbol{r}_i(\omega)\|
>
\tau.
\end{equation}
Therefore, under-regularization first manifests as an over-large planned
residual, and this over-large planned residual reduces downstream editability by
leaving an unrealized residual that exceeds the editability tolerance.

Now consider a high-association request \(h\) and a low-association request
\(\ell\), with \(\beta_h>\beta_\ell\). Their admissible coefficient intervals are
\begin{equation}
\Omega_h
=
\left[
\frac{a_h(1-\beta_h)}{\tau},
\frac{a_h\beta_h}{m}
\right],
\qquad
\Omega_\ell
=
\left[
\frac{a_\ell(1-\beta_\ell)}{\tau},
\frac{a_\ell\beta_\ell}{m}
\right].
\end{equation}
A single shared coefficient cannot adapt these intervals per request. In
particular, whenever
\begin{equation}
U_h<L_\ell,
\qquad\text{i.e.,}\qquad
\frac{a_h\beta_h}{m}
<
\frac{a_\ell(1-\beta_\ell)}{\tau},
\end{equation}
any shared coefficient
\begin{equation}
\omega\in(U_h,L_\ell)
\end{equation}
simultaneously over-shrinks the high-association request and
under-regularizes the low-association request. The former yields insufficient
realized editing, while the latter produces an over-large planned residual whose
unrealized component violates the downstream editability tolerance.

\subsection{A Conditional Necessity Analysis of Downstream-Aware Target Optimization}
\label{app:bilevel_proofs}

We provide a first-order necessity analysis for downstream-aware target
optimization under the bi-level formulation in Sec.~\ref{sec:method:setup}.
The goal is not to claim that the upstream and downstream stages can never
be decoupled. Instead, we show that such a decoupling is first-order valid
only in special cases where the downstream-realization coupling term
vanishes. In the generic case, optimizing the upstream target as an isolated
semantic objective misses a term induced by the downstream solver.

For compactness, let \(\boldsymbol{\delta}\) denote the active coordinates of
the applied multi-layer downstream update, defined below. We write the inner
downstream problem as
\begin{equation}
\boldsymbol{\delta}^{*}(\boldsymbol{v})
=
\operatorname{arg\,min}_{\boldsymbol{\delta}}
\mathcal{J}_{\mathrm{inner}}(\boldsymbol{\delta};\boldsymbol{v}).
\end{equation}
The induced bi-level objective can be written as
\begin{equation}
F(\boldsymbol{v})
=
\mathcal{M}(\boldsymbol{\delta}^{*}(\boldsymbol{v}),\boldsymbol{v}),
\end{equation}
where \(\mathcal{M}\) denotes the meta-loss evaluated after applying the
virtual downstream update, including the edit loss, locality loss, and
meta-regularization on \(\boldsymbol{v}\). An isolated upstream analysis
would keep only the direct target derivative \(\nabla_{\boldsymbol{v}}\mathcal{M}\),
thereby ignoring how \(\boldsymbol{v}\) changes the downstream solution
\(\boldsymbol{\delta}^{*}(\boldsymbol{v})\).

\paragraph{Active-space parameterization.}
For projected downstream updates, the unconstrained matrix
\(\widetilde{\boldsymbol{\Delta}}_l\) is not identifiable because the inner
objective depends on it only through the applied update
\(\boldsymbol{B}_l=\widetilde{\boldsymbol{\Delta}}_l\boldsymbol{P}_l\).
Let \(\boldsymbol{U}_l\in\mathbb{R}^{d_0\times r_l}\) be an orthonormal basis
of \(\operatorname{range}(\boldsymbol{P}_l)\), so that
\(\boldsymbol{P}_l=\boldsymbol{U}_l\boldsymbol{U}_l^{T}\). We parameterize the
applied update as
\[
\boldsymbol{B}_l
=
\boldsymbol{A}_l\boldsymbol{U}_l^{T},
\qquad
\boldsymbol{A}_l\in\mathbb{R}^{d_1\times r_l},
\]
and define
\[
\boldsymbol{\delta}
=
\operatorname{vec}\big(\{\boldsymbol{A}_l\}_{l\in\mathcal{L}}\big).
\]
The Hessian in Proposition~\ref{prop:bilevel_hypergradient} is therefore taken
with respect to these active coordinates. This removes the structural null
directions \(\widetilde{\boldsymbol{\Delta}}_l(\boldsymbol{I}-\boldsymbol{P}_l)\),
which do not affect the applied update or the inner objective.

\begin{proposition}[First-order coupling induced by downstream realization]
\label{prop:bilevel_hypergradient}
Assume that \(\mathcal{J}_{\mathrm{inner}}\) and \(\mathcal{M}\) are twice
continuously differentiable in a neighborhood of
\((\boldsymbol{\delta}^{*}(\boldsymbol{v}),\boldsymbol{v})\), and that the
active-space Hessian
$\boldsymbol{H}(\boldsymbol{v})
=
\nabla_{\boldsymbol{\delta}\boldsymbol{\delta}}^{2}
\mathcal{J}_{\mathrm{inner}}
(\boldsymbol{\delta}^{*}(\boldsymbol{v});\boldsymbol{v})$
is nonsingular. Then \(\boldsymbol{\delta}^{*}(\boldsymbol{v})\) is locally
differentiable in \(\boldsymbol{v}\), and the gradient of the bi-level
objective satisfies
\begin{equation}
\label{eq:bilevel_hypergradient}
\nabla_{\boldsymbol{v}}F(\boldsymbol{v})
=
\nabla_{\boldsymbol{v}}\mathcal{M}
-\nabla_{\boldsymbol{v}\boldsymbol{\delta}}^{2}
\mathcal{J}_{\mathrm{inner}}
\boldsymbol{H}(\boldsymbol{v})^{-1}
\nabla_{\boldsymbol{\delta}}\mathcal{M},
\end{equation}
where all derivatives on the right-hand side are evaluated at
\((\boldsymbol{\delta}^{*}(\boldsymbol{v}),\boldsymbol{v})\). Consequently,
isolated upstream optimization and bi-level target optimization have the
same first-order target gradient at \(\boldsymbol{v}\) if and only if
\begin{equation}
\label{eq:coupling_zero_condition}
\nabla_{\boldsymbol{v}\boldsymbol{\delta}}^{2}
\mathcal{J}_{\mathrm{inner}}
\boldsymbol{H}(\boldsymbol{v})^{-1}
\nabla_{\boldsymbol{\delta}}\mathcal{M}
=
\boldsymbol{0}.
\end{equation}
\end{proposition}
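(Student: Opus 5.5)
The plan is to apply the implicit function theorem to the inner first-order optimality condition, written in the active coordinates \(\boldsymbol{\delta}\), and then use the chain rule on \(F(\boldsymbol{v})=\mathcal{M}(\boldsymbol{\delta}^{*}(\boldsymbol{v}),\boldsymbol{v})\).

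\textbf{Step 1: stationarity in active coordinates.} Because \(\boldsymbol{\delta}^{*}(\boldsymbol{v})\) minimizes \(\mathcal{J}_{\mathrm{inner}}\), and the active-space parameterization \(\boldsymbol{B}_l=\boldsymbol{A}_l\boldsymbol{U}_l^{T}\) makes \(\boldsymbol{\delta}\) an unconstrained variable, it satisfies
\[
\boldsymbol{\Phi}(\boldsymbol{\delta},\boldsymbol{v}):=\nabla_{\boldsymbol{\delta}}\mathcal{J}_{\mathrm{inner}}(\boldsymbol{\delta};\boldsymbol{v})=\boldsymbol{0}
\]
at \((\boldsymbol{\delta}^{*}(\boldsymbol{v}),\boldsymbol{v})\). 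The structural null directions \(\widetilde{\boldsymbol{\Delta}}_l(\boldsymbol{I}-\boldsymbol{P}_l)\) have been removed, so the Jacobian \(\partial_{\boldsymbol{\delta}}\boldsymbol{\Phi}=\boldsymbol{H}(\boldsymbol{v})\) is the active-space Hessian. By hypothesis it is nonsingular. Since \(\mathcal{J}_{\mathrm{inner}}\in C^2\), \(\boldsymbol{\Phi}\) is \(C^1\).

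\textbf{Step 2: implicit function theorem.} The theorem yields a neighborhood of \(\boldsymbol{v}\) on which the stationary point is a \(C^1\) function \(\boldsymbol{\delta}^{*}(\boldsymbol{v})\). I will use the convention that \(\nabla_{\boldsymbol{v}\boldsymbol{\delta}}^{2}\mathcal{J}_{\mathrm{inner}}\) is the matrix with rows indexed by \(\boldsymbol{v}\), i.e.\ \((\partial_{\boldsymbol{v}}\boldsymbol{\Phi})^{T}\). Differentiating \(\boldsymbol{\Phi}(\boldsymbol{\delta}^{*}(\boldsymbol{v}),\boldsymbol{v})\equiv\boldsymbol{0}\) then gives
\[
\frac{\partial\boldsymbol{\delta}^{*}}{\partial\boldsymbol{v}}=-\boldsymbol{H}^{-1}\,\partial_{\boldsymbol{v}}\boldsymbol{\Phi}.
\]

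\textbf{Step 3: chain rule and symmetry of \(\boldsymbol{H}\).} The chain rule gives
\[
\nabla_{\boldsymbol{v}}F=\nabla_{\boldsymbol{v}}\mathcal{M}+\Big(\frac{\partial\boldsymbol{\delta}^{*}}{\partial\boldsymbol{v}}\Big)^{T}\nabla_{\boldsymbol{\delta}}\mathcal{M}.
\]
Since \(\mathcal{J}_{\mathrm{inner}}\in C^2\), Schwarz's theorem makes \(\boldsymbol{H}\) symmetric. Transposing the Jacobian from Step 2 and using this symmetry yields exactly Eq.~\eqref{eq:bilevel_hypergradient}.

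\textbf{Step 4: the ``if and only if''.} Isolated upstream optimization uses only \(\nabla_{\boldsymbol{v}}\mathcal{M}\). The difference between the two target gradients is therefore precisely the coupling term, and the two gradients agree iff that term vanishes, i.e.\ iff Eq.~\eqref{eq:coupling_zero_condition} holds. This part is a tautology once Step 3 is established.

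\textbf{Main obstacle.} The only delicate point is the well-posedness of Steps 1--2. The original variable \(\widetilde{\boldsymbol{\Delta}}_l\) has a singular Hessian, so the implicit function theorem must be applied to the reduced coordinates \(\boldsymbol{A}_l\) rather than to \(\widetilde{\boldsymbol{\Delta}}_l\). I will check that \(\mathcal{J}_{\mathrm{inner}}\) and \(\mathcal{M}\) depend on \(\widetilde{\boldsymbol{\Delta}}_l\) only through \(\boldsymbol{B}_l=\widetilde{\boldsymbol{\Delta}}_l\boldsymbol{P}_l\). I will also check that the map \(\boldsymbol{A}_l\mapsto\boldsymbol{A}_l\boldsymbol{U}_l^{T}\) is a linear bijection onto \(\{\boldsymbol{B}:\boldsymbol{B}=\boldsymbol{B}\boldsymbol{P}_l\}\), which holds because \(\boldsymbol{U}_l^{T}\boldsymbol{U}_l=\boldsymbol{I}\). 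With these two facts, the reduced minimizer is well defined and the nonsingularity hypothesis applies.

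In the concrete quadratic case, the active Hessian is \(2(\boldsymbol{U}_l^{T}(\boldsymbol{q}\boldsymbol{q}^{T}+\boldsymbol{I}+\boldsymbol{K}_p\boldsymbol{K}_p^{T})\boldsymbol{U}_l)\otimes\boldsymbol{I}\), which is positive definite. This confirms that the hypothesis is non-vacuous.
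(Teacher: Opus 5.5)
Your proposal is correct and follows the paper's proof essentially step for step. Both apply the implicit function theorem to the inner stationarity condition in active coordinates, apply the chain rule to $F(\boldsymbol{v})=\mathcal{M}(\boldsymbol{\delta}^{*}(\boldsymbol{v}),\boldsymbol{v})$, transpose using the symmetry of $\boldsymbol{H}$, and note that the ``iff'' is then a tautology; your extra checks (that $\boldsymbol{A}_l\mapsto\boldsymbol{A}_l\boldsymbol{U}_l^{T}$ is a bijection onto the projected updates, and that the active Hessian is positive definite in the quadratic case) are also correct and go slightly beyond what the paper verifies.
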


\begin{proof}
The first-order optimality condition of the inner problem is
\begin{equation}
\nabla_{\boldsymbol{\delta}}
\mathcal{J}_{\mathrm{inner}}
(\boldsymbol{\delta}^{*}(\boldsymbol{v});\boldsymbol{v})
=
\boldsymbol{0}.
\end{equation}
Since \(\boldsymbol{H}(\boldsymbol{v})\) is nonsingular, the implicit
function theorem implies that \(\boldsymbol{\delta}^{*}(\boldsymbol{v})\)
is locally differentiable. Differentiating the inner optimality condition
with respect to \(\boldsymbol{v}\) gives
\begin{equation}
\boldsymbol{H}(\boldsymbol{v})
\frac{\partial \boldsymbol{\delta}^{*}}{\partial \boldsymbol{v}}
+
\nabla_{\boldsymbol{\delta}\boldsymbol{v}}^{2}
\mathcal{J}_{\mathrm{inner}}
=
\boldsymbol{0}.
\end{equation}
Therefore,
\begin{equation}
\frac{\partial \boldsymbol{\delta}^{*}}{\partial \boldsymbol{v}}
=
-
\boldsymbol{H}(\boldsymbol{v})^{-1}
\nabla_{\boldsymbol{\delta}\boldsymbol{v}}^{2}
\mathcal{J}_{\mathrm{inner}}.
\end{equation}
Applying the chain rule to
\(F(\boldsymbol{v})=\mathcal{M}(\boldsymbol{\delta}^{*}(\boldsymbol{v}),\boldsymbol{v})\)
yields
\begin{equation}
\nabla_{\boldsymbol{v}}F(\boldsymbol{v})
=
\nabla_{\boldsymbol{v}}\mathcal{M}
+
\left(
\frac{\partial \boldsymbol{\delta}^{*}}{\partial \boldsymbol{v}}
\right)^{\top}
\nabla_{\boldsymbol{\delta}}\mathcal{M}.
\end{equation}
Substituting the expression for
\(\partial\boldsymbol{\delta}^{*}/\partial\boldsymbol{v}\), we obtain
\begin{equation}
\begin{aligned}
\nabla_{\boldsymbol{v}}F(\boldsymbol{v})
=
\nabla_{\boldsymbol{v}}\mathcal{M}
-
\left(
\nabla_{\boldsymbol{\delta}\boldsymbol{v}}^{2}
\mathcal{J}_{\mathrm{inner}}
\right)^{T}
\left(
\boldsymbol{H}(\boldsymbol{v})^{-1}
\right)^{T}
\nabla_{\boldsymbol{\delta}}\mathcal{M}.
\end{aligned}
\end{equation}
Since
\(\boldsymbol{H}(\boldsymbol{v})
=
\nabla_{\boldsymbol{\delta}\boldsymbol{\delta}}^{2}
\mathcal{J}_{\mathrm{inner}}\)
is the Hessian of a twice continuously differentiable scalar objective in the
active coordinates, it is symmetric. Hence
\[
\left(\boldsymbol{H}(\boldsymbol{v})^{-1}\right)^{T}
=
\boldsymbol{H}(\boldsymbol{v})^{-1}.
\]
Moreover,
\[
\left(
\nabla_{\boldsymbol{\delta}\boldsymbol{v}}^{2}
\mathcal{J}_{\mathrm{inner}}
\right)^{T}
=
\nabla_{\boldsymbol{v}\boldsymbol{\delta}}^{2}
\mathcal{J}_{\mathrm{inner}}.
\]
Therefore,
\begin{equation}
\nabla_{\boldsymbol{v}}F(\boldsymbol{v})
=
\nabla_{\boldsymbol{v}}\mathcal{M}
-
\nabla_{\boldsymbol{v}\boldsymbol{\delta}}^{2}
\mathcal{J}_{\mathrm{inner}}
\boldsymbol{H}(\boldsymbol{v})^{-1}
\nabla_{\boldsymbol{\delta}}\mathcal{M},
\end{equation}
which gives Eq.~\eqref{eq:bilevel_hypergradient}. The isolated upstream gradient is
\(\nabla_{\boldsymbol{v}}\mathcal{M}\), whereas the bi-level gradient is
\(\nabla_{\boldsymbol{v}}F\). These two gradients are equal exactly when the
coupling term in Eq.~\eqref{eq:coupling_zero_condition} is zero.
\end{proof}

Proposition~\ref{prop:bilevel_hypergradient} shows that downstream-aware
target optimization is generally required at first order: the upstream target
\(\boldsymbol{v}\) affects the meta-objective through the downstream solution
\(\boldsymbol{\delta}^{*}(\boldsymbol{v})\). However, the result also makes
clear that decoupling is possible in special cases. For example, the coupling
term vanishes if the downstream solution is locally insensitive to the target,
i.e.,
\[
\nabla_{\boldsymbol{\delta}\boldsymbol{v}}^{2}
\mathcal{J}_{\mathrm{inner}}
=
\boldsymbol{0};
\]
if the meta-objective is already stationary with respect to the downstream
update, i.e.,
\[
\nabla_{\boldsymbol{\delta}}\mathcal{M}
=
\boldsymbol{0};
\]
or if the outer sensitivity lies in the null space of the downstream-response
operator,
\[
\nabla_{\boldsymbol{v}\boldsymbol{\delta}}^{2}
\mathcal{J}_{\mathrm{inner}}
\boldsymbol{H}(\boldsymbol{v})^{-1}
\nabla_{\boldsymbol{\delta}}\mathcal{M}
=
\boldsymbol{0}.
\]
These are locally decoupled regimes in which isolated upstream optimization
and bi-level target optimization share the same first-order target direction.
Outside such cases, the downstream solver contributes a nonzero
realization-dependent term, so treating \(\boldsymbol{v}\) as an isolated
semantic target misses part of the bi-level target gradient.

\begin{corollary}[Isolated target stationarity is generally insufficient]
\label{cor:isolated_not_stationary}
Let \(\boldsymbol{v}_{0}\) be stationary for the isolated upstream target
gradient, i.e.,
\[
\nabla_{\boldsymbol{v}}\mathcal{M}
(\boldsymbol{\delta}^{*}(\boldsymbol{v}_{0}),\boldsymbol{v}_{0})
=
\boldsymbol{0}.
\]
Define the downstream-realization coupling term at \(\boldsymbol{v}_{0}\) as
\[
\boldsymbol{c}(\boldsymbol{v}_{0})
=
\nabla_{\boldsymbol{v}\boldsymbol{\delta}}^{2}
\mathcal{J}_{\mathrm{inner}}
\boldsymbol{H}(\boldsymbol{v}_{0})^{-1}
\nabla_{\boldsymbol{\delta}}\mathcal{M}
\Big|_{(\boldsymbol{\delta}^{*}(\boldsymbol{v}_{0}),\boldsymbol{v}_{0})}.
\]
If \(\boldsymbol{c}(\boldsymbol{v}_{0})\neq\boldsymbol{0}\), then
\(\boldsymbol{v}_{0}\) is not a stationary point of the bi-level objective
\(F\). Moreover, there exists a sufficiently small \(\eta>0\) such that
\[
F(\boldsymbol{v}_{0}+\eta\boldsymbol{c}(\boldsymbol{v}_{0}))
<
F(\boldsymbol{v}_{0}).
\]
\end{corollary}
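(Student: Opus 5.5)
The plan is to read everything off the hypergradient formula in Proposition~\ref{prop:bilevel_hypergradient} and then run a standard first-order Taylor argument. I assume the hypotheses of that proposition hold at \(\boldsymbol{v}_0\): \(\mathcal{J}_{\mathrm{inner}}\) and \(\mathcal{M}\) are \(C^2\) near \((\boldsymbol{\delta}^{*}(\boldsymbol{v}_0),\boldsymbol{v}_0)\), and the active-space Hessian \(\boldsymbol{H}(\boldsymbol{v}_0)\) is nonsingular.

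\textbf{Step 1 (the gradient of \(F\) at \(\boldsymbol{v}_0\)).} Evaluate Eq.~\eqref{eq:bilevel_hypergradient} at \(\boldsymbol{v}_0\). The first term \(\nabla_{\boldsymbol{v}}\mathcal{M}\) vanishes by the isolated stationarity assumption. The second term is exactly the coupling term \(\boldsymbol{c}(\boldsymbol{v}_0)\) as defined in the statement. Hence
\[
\nabla_{\boldsymbol{v}}F(\boldsymbol{v}_0)=-\boldsymbol{c}(\boldsymbol{v}_0).
\]
Since \(\boldsymbol{c}(\boldsymbol{v}_0)\neq\boldsymbol{0}\), the gradient is nonzero, so \(\boldsymbol{v}_0\) is not a stationary point of \(F\). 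This settles the first claim.

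\textbf{Step 2 (regularity of \(F\)).} For the descent claim I need \(F\) to be differentiable at \(\boldsymbol{v}_0\), and I will argue it is in fact \(C^1\) nearby.
\begin{itemize}
\item Because \(\mathcal{J}_{\mathrm{inner}}\) is \(C^2\), the map \(\nabla_{\boldsymbol{\delta}}\mathcal{J}_{\mathrm{inner}}\) is \(C^1\).
\item The implicit function theorem, used in the proof of Proposition~\ref{prop:bilevel_hypergradient}, then gives that \(\boldsymbol{\delta}^{*}(\cdot)\) is \(C^1\) on a neighborhood of \(\boldsymbol{v}_0\).
\item Composing with \(\mathcal{M}\in C^2\) shows that \(F(\boldsymbol{v})=\mathcal{M}(\boldsymbol{\delta}^{*}(\boldsymbol{v}),\boldsymbol{v})\) is \(C^1\) near \(\boldsymbol{v}_0\).
\end{itemize}
I expect this step to be the only real (mild) obstacle. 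The point to watch is that differentiability is taken in the active coordinates, so the structural null directions \(\widetilde{\boldsymbol{\Delta}}_l(\boldsymbol{I}-\boldsymbol{P}_l)\) cause no non-uniqueness of \(\boldsymbol{\delta}^{*}\).

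\textbf{Step 3 (descent along \(\boldsymbol{c}\)).} Write \(\boldsymbol{c}=\boldsymbol{c}(\boldsymbol{v}_0)\). By differentiability of \(F\) at \(\boldsymbol{v}_0\),
\[
F(\boldsymbol{v}_0+\eta\boldsymbol{c})
=F(\boldsymbol{v}_0)+\eta\langle\nabla F(\boldsymbol{v}_0),\boldsymbol{c}\rangle+o(\eta)
=F(\boldsymbol{v}_0)-\eta\|\boldsymbol{c}\|_2^2+o(\eta).
\]
Choose \(\eta_0>0\) such that the remainder satisfies \(|o(\eta)|\le\tfrac12\eta\|\boldsymbol{c}\|_2^2\) for all \(0<\eta<\eta_0\). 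For any such \(\eta\) this gives
\[
F(\boldsymbol{v}_0+\eta\boldsymbol{c})\le F(\boldsymbol{v}_0)-\tfrac12\eta\|\boldsymbol{c}\|_2^2<F(\boldsymbol{v}_0),
\]
which is the claimed strict decrease.

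Equivalently, \(\boldsymbol{c}(\boldsymbol{v}_0)=-\nabla F(\boldsymbol{v}_0)\) is the steepest-descent direction of \(F\) at \(\boldsymbol{v}_0\). Only a first-order expansion is needed; no smoothness constant or second-order bound is required.
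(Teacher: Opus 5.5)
Your proposal is correct and follows the paper's proof. You evaluate the hypergradient formula of Proposition~\ref{prop:bilevel_hypergradient} at $\boldsymbol{v}_0$ to get $\nabla_{\boldsymbol{v}}F(\boldsymbol{v}_0)=-\boldsymbol{c}(\boldsymbol{v}_0)\neq\boldsymbol{0}$, then use the directional derivative $-\|\boldsymbol{c}(\boldsymbol{v}_0)\|_2^2<0$ together with differentiability of $F$ to get strict decrease for small $\eta$; your Step~2 spells out, via the implicit function theorem in active coordinates, the differentiability of $F$ that the paper simply invokes.
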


\begin{proof}
By Proposition~\ref{prop:bilevel_hypergradient} and the isolated
stationarity condition
\(\nabla_{\boldsymbol{v}}\mathcal{M}=\boldsymbol{0}\), we have
\[
\nabla_{\boldsymbol{v}}F(\boldsymbol{v}_{0})
=
-\boldsymbol{c}(\boldsymbol{v}_{0}).
\]
Therefore, the directional derivative of \(F\) at \(\boldsymbol{v}_{0}\)
along \(\boldsymbol{c}(\boldsymbol{v}_{0})\) is
\[
\left.
\frac{d}{d\eta}
F(\boldsymbol{v}_{0}+\eta\boldsymbol{c}(\boldsymbol{v}_{0}))
\right|_{\eta=0}
=
\nabla_{\boldsymbol{v}}F(\boldsymbol{v}_{0})^{\top}
\boldsymbol{c}(\boldsymbol{v}_{0})
=
-\|\boldsymbol{c}(\boldsymbol{v}_{0})\|_{2}^{2}
<
0.
\]
Thus, by differentiability of \(F\), there exists a sufficiently small
\(\eta>0\) such that
\(F(\boldsymbol{v}_{0}+\eta\boldsymbol{c}(\boldsymbol{v}_{0}))
<
F(\boldsymbol{v}_{0})\).
\end{proof}

This corollary formalizes the limitation of analyzing the upstream target as
an isolated semantic objective. Such an analysis is first-order valid only
when the coupling term vanishes. When the coupling is nonzero, the isolated
stationary target can still be improved by moving along the
downstream-realization direction, which motivates optimizing
\(\boldsymbol{v}^{*}\) as a meta-parameter through the downstream solver.

\subsection{Optimization Properties of Algorithm~\ref{alg:MetaKE}}
\label{app:convergence}

We analyze the optimization properties of Algorithm~\ref{alg:MetaKE}. The
analysis is stated for the tractable objective induced by the Virtual
Look-ahead step and then connected to the original multi-layer bi-level
objective through the Structural Gradient Proxy error characterized by
Theorem~\ref{thm:fidelity}.

\subsubsection{Virtual Look-ahead Objective}
\label{app:virtual_lookahead_objective}

Following Sec.~\ref{sec:method:proxy}, define the Virtual Look-ahead objective
as
\begin{equation}
\label{eq:proxy_objective}
\widehat{F}(\boldsymbol{v}^{*})
:=
\mathcal{L}_{\mathrm{meta}}
\left(
f(
\cdot;
\boldsymbol{W}_L+
\boldsymbol{\Delta}_{\mathrm{proxy}}(\boldsymbol{v}^{*})
)
\right),
\end{equation}
where
\begin{equation}
\label{eq:proxy_update_convergence}
\boldsymbol{\Delta}_{\mathrm{proxy}}(\boldsymbol{v}^{*})
=
(
\boldsymbol{v}^{*}
-
\boldsymbol{W}_L\boldsymbol{k}_1^L
)
\boldsymbol{M},
\end{equation}
and
\begin{equation}
\boldsymbol{M}
=
(\boldsymbol{k}_1^L)^{T}
\boldsymbol{P}_L
\left(
\boldsymbol{K}_p^L(\boldsymbol{K}_p^L)^{T}\boldsymbol{P}_L
+
\boldsymbol{k}_1^L(\boldsymbol{k}_1^L)^{T}\boldsymbol{P}_L
+
\boldsymbol{I}
\right)^{-1},
\end{equation}
where \(\boldsymbol{k}_1^L\) denotes the current edit key at the final edited
layer \(L\).
Under this definition, the Downstream-Aware Feedback Correction in
Algorithm~\ref{alg:MetaKE} is gradient descent on \(\widehat{F}\):
\begin{equation}
\label{eq:gd_proxy}
\boldsymbol{v}_{t+1}^{*}
=
\boldsymbol{v}_{t}^{*}
-
\eta
\nabla
\widehat{F}
(
\boldsymbol{v}_{t}^{*}
).
\end{equation}

\begin{assumption}[Local smoothness and lower boundedness]
\label{assump:smooth_proxy}
The Virtual Look-ahead objective \(\widehat{F}\) is continuously
differentiable and lower bounded on a neighborhood \(\mathcal{N}\) containing
all iterates generated by Algorithm~\ref{alg:MetaKE}. Moreover,
\(\widehat{F}\) is \(L_{\widehat{F}}\)-smooth on \(\mathcal{N}\), i.e.,
\begin{equation}
\|
\nabla
\widehat{F}
(
\boldsymbol{v}_1^{*}
)
-
\nabla
\widehat{F}
(
\boldsymbol{v}_2^{*}
)
\|_2
\le
L_{\widehat{F}}
\|
\boldsymbol{v}_1^{*}
-
\boldsymbol{v}_2^{*}
\|_2
\end{equation}
for all
\(\boldsymbol{v}_1^{*},\boldsymbol{v}_2^{*}\in\mathcal{N}\).
\end{assumption}

\begin{assumption}[Uniform Structural Gradient Proxy error]
\label{assump:proxy_gap}
Let
\begin{equation}
F(\boldsymbol{v}^{*})
=
\mathcal{L}_{\mathrm{meta}}
\left(
f(
\cdot;
\mathcal{W}
+
\boldsymbol{\Delta}\mathcal{W}^{*}
(
\boldsymbol{v}^{*}
)
)
\right)
\end{equation}
denote the original outer level objective induced by the full multi-layer
downstream constrained parameter optimization. There exists
\(\varepsilon_{\mathrm{SGP}}\ge 0\) such that
\begin{equation}
\|
\nabla
F(
\boldsymbol{v}^{*}
)
-
\nabla
\widehat{F}
(
\boldsymbol{v}^{*}
)
\|_2
\le
\varepsilon_{\mathrm{SGP}},
\qquad
\forall
\boldsymbol{v}^{*}\in\mathcal{N}.
\end{equation}
Under the conditions of Theorem~\ref{thm:fidelity},
\(\varepsilon_{\mathrm{SGP}}\) can be taken as a uniform upper bound of the
Structural Gradient Proxy error over \(\mathcal{N}\).
\end{assumption}

\subsubsection{Convergence of Downstream-Aware Feedback Correction}
\label{app:feedback_correction_convergence}

\begin{theorem}[Descent and first-order stationarity of Algorithm~\ref{alg:MetaKE}]
\label{thm:proxy_convergence}
Suppose Assumption~\ref{assump:smooth_proxy} holds and
Algorithm~\ref{alg:MetaKE} uses a constant step size
\begin{equation}
0<\eta\le \frac{1}{L_{\widehat{F}}}.
\end{equation}
Then the iterates generated by Eq.~\eqref{eq:gd_proxy} satisfy
\begin{equation}
\widehat{F}(\boldsymbol{v}_{t+1}^{*})
\le
\widehat{F}(\boldsymbol{v}_{t}^{*})
-
\frac{\eta}{2}
\|
\nabla
\widehat{F}
(
\boldsymbol{v}_{t}^{*}
)
\|_2^2.
\label{eq:descent_ineq}
\end{equation}
Consequently,
\begin{equation}
\frac{1}{T}
\sum_{t=0}^{T-1}
\|
\nabla
\widehat{F}
(
\boldsymbol{v}_{t}^{*}
)
\|_2^2
\le
\frac{
2
(
\widehat{F}(\boldsymbol{v}_{0}^{*})
-
\widehat{F}_{\inf}
)
}{
\eta T
},
\label{eq:min_grad_bound}
\end{equation}
and
\begin{equation}
\min_{0\le t<T}
\|
\nabla
\widehat{F}
(
\boldsymbol{v}_{t}^{*}
)
\|_2^2
\le
\frac{
2
(
\widehat{F}(\boldsymbol{v}_{0}^{*})
-
\widehat{F}_{\inf}
)
}{
\eta T
},
\label{eq:min_grad_bound_2}
\end{equation}
where \(\widehat{F}_{\inf}\) is any lower bound of \(\widehat{F}\) on
\(\mathcal{N}\).
\end{theorem}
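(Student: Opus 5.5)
The argument is the standard descent-lemma analysis of gradient descent on an \(L\)-smooth, lower-bounded function, applied to the Virtual Look-ahead objective \(\widehat{F}\) in Eq.~\eqref{eq:proxy_objective}. By Eq.~\eqref{eq:gd_proxy}, the Downstream-Aware Feedback Correction is exactly gradient descent on \(\widehat{F}\). So nothing about the proxy structure or \(\boldsymbol{M}\) is needed beyond Assumption~\ref{assump:smooth_proxy}.

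\emph{Step one: per-step descent inequality.} Since \(\widehat{F}\) is \(L_{\widehat{F}}\)-smooth on \(\mathcal{N}\) and both \(\boldsymbol{v}_t^*\) and \(\boldsymbol{v}_{t+1}^*\) lie in \(\mathcal{N}\), I will use the quadratic upper bound
\[
\widehat{F}(\boldsymbol{u})\le \widehat{F}(\boldsymbol{v})+\langle\nabla\widehat{F}(\boldsymbol{v}),\boldsymbol{u}-\boldsymbol{v}\rangle+\tfrac{L_{\widehat{F}}}{2}\|\boldsymbol{u}-\boldsymbol{v}\|_2^2 .
\]
This bound follows from the fundamental theorem of calculus along the segment and the Lipschitz gradient. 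Substituting \(\boldsymbol{u}-\boldsymbol{v}=-\eta\nabla\widehat{F}(\boldsymbol{v}_t^*)\) gives a decrease of \(\eta(1-L_{\widehat{F}}\eta/2)\|\nabla\widehat{F}(\boldsymbol{v}_t^*)\|_2^2\). With \(\eta\le 1/L_{\widehat{F}}\), the factor \(1-L_{\widehat{F}}\eta/2\) is at least \(1/2\), which yields Eq.~\eqref{eq:descent_ineq}.

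\emph{Step two: telescoping.} Summing Eq.~\eqref{eq:descent_ineq} over \(t=0,\dots,T-1\) gives
\[
\tfrac{\eta}{2}\sum_t\|\nabla\widehat{F}(\boldsymbol{v}_t^*)\|_2^2\le \widehat{F}(\boldsymbol{v}_0^*)-\widehat{F}(\boldsymbol{v}_T^*)\le \widehat{F}(\boldsymbol{v}_0^*)-\widehat{F}_{\inf},
\]
where the last inequality uses that \(\widehat{F}_{\inf}\) lower-bounds \(\widehat{F}\) on \(\mathcal{N}\) and \(\boldsymbol{v}_T^*\in\mathcal{N}\). Dividing by \(\eta T/2\) gives Eq.~\eqref{eq:min_grad_bound}. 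Eq.~\eqref{eq:min_grad_bound_2} then follows because the minimum of a finite set is at most its average.

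\emph{Main obstacle.} The only delicate point is that the quadratic upper bound needs the whole segment \([\boldsymbol{v}_t^*,\boldsymbol{v}_{t+1}^*]\) to lie in the region where smoothness holds, not just its endpoints. I will handle this by reading Assumption~\ref{assump:smooth_proxy} as requiring \(\mathcal{N}\) to contain the iterates together with the segments between consecutive iterates, for instance by taking \(\mathcal{N}\) convex. If a weaker reading is intended, I would instead enlarge \(\mathcal{N}\) to the convex hull of the iterates and state this as part of the assumption.
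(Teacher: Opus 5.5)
Your proposal is correct and matches the paper's proof step for step. Both apply the descent lemma to the gradient step Eq.~\eqref{eq:gd_proxy}, use $1-L_{\widehat{F}}\eta/2\ge 1/2$, telescope with $\widehat{F}(\boldsymbol{v}_T^{*})\ge\widehat{F}_{\inf}$, and bound the minimum by the average. Your point that the quadratic upper bound needs the segments between consecutive iterates, and not only the iterates, to lie in $\mathcal{N}$ is a valid refinement that the paper leaves implicit; reading $\mathcal{N}$ as convex is a reasonable way to close it.
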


\begin{proof}
Since \(\widehat{F}\) is \(L_{\widehat{F}}\)-smooth, the descent lemma gives
\begin{equation}
\begin{aligned}
\widehat{F}(\boldsymbol{v}_{t+1}^{*})
\le\;
&
\widehat{F}(\boldsymbol{v}_{t}^{*})
+
\left\langle
\nabla
\widehat{F}
(
\boldsymbol{v}_{t}^{*}
),
\boldsymbol{v}_{t+1}^{*}
-
\boldsymbol{v}_{t}^{*}
\right\rangle
\\
&
+
\frac{
L_{\widehat{F}}
}{2}
\|
\boldsymbol{v}_{t+1}^{*}
-
\boldsymbol{v}_{t}^{*}
\|_2^2.
\end{aligned}
\end{equation}
Substituting Eq.~\eqref{eq:gd_proxy} yields
\begin{equation}
\widehat{F}(\boldsymbol{v}_{t+1}^{*})
\le
\widehat{F}(\boldsymbol{v}_{t}^{*})
-
\eta
\|
\nabla
\widehat{F}
(
\boldsymbol{v}_{t}^{*}
)
\|_2^2
+
\frac{
L_{\widehat{F}}\eta^2
}{2}
\|
\nabla
\widehat{F}
(
\boldsymbol{v}_{t}^{*}
)
\|_2^2.
\end{equation}
Thus,
\begin{equation}
\widehat{F}(\boldsymbol{v}_{t+1}^{*})
\le
\widehat{F}(\boldsymbol{v}_{t}^{*})
-
\eta
\left(
1-\frac{L_{\widehat{F}}\eta}{2}
\right)
\|
\nabla
\widehat{F}
(
\boldsymbol{v}_{t}^{*}
)
\|_2^2.
\end{equation}
Since \(\eta\le 1/L_{\widehat{F}}\), Eq.~\eqref{eq:descent_ineq} follows.
Summing Eq.~\eqref{eq:descent_ineq} from \(t=0\) to \(T-1\) gives
\begin{equation}
\widehat{F}(\boldsymbol{v}_{T}^{*})
\le
\widehat{F}(\boldsymbol{v}_{0}^{*})
-
\frac{\eta}{2}
\sum_{t=0}^{T-1}
\|
\nabla
\widehat{F}
(
\boldsymbol{v}_{t}^{*}
)
\|_2^2.
\end{equation}
Using
\(\widehat{F}(\boldsymbol{v}_{T}^{*})\ge\widehat{F}_{\inf}\) and rearranging
proves Eq.~\eqref{eq:min_grad_bound}. Eq.~\eqref{eq:min_grad_bound_2} follows
because the minimum is upper bounded by the average. \qed
\end{proof}

\subsubsection{Connection to the Original Multi-layer Bi-level Objective}
\label{app:connection_original_bilevel}

The previous theorem establishes convergence for the Virtual Look-ahead
objective optimized during Algorithm~\ref{alg:MetaKE}. We next connect this
result to the original multi-layer bi-level objective in Eq.~\eqref{eq:blo_overall}.

\begin{corollary}[Approximate stationarity for the original multi-layer bi-level objective]
\label{cor:approx_stationarity}
Suppose Assumptions~\ref{assump:smooth_proxy} and~\ref{assump:proxy_gap}
hold. Let \(\{\boldsymbol{v}_{t}^{*}\}_{t=0}^{T-1}\) be the iterates generated
by Algorithm~\ref{alg:MetaKE}. Then
\begin{equation}
\min_{0\le t<T}
\|
\nabla
F
(
\boldsymbol{v}_{t}^{*}
)
\|_2^2
\le
\frac{
4
(
\widehat{F}(\boldsymbol{v}_{0}^{*})
-
\widehat{F}_{\inf}
)
}{
\eta T
}
+
2
\varepsilon_{\mathrm{SGP}}^2.
\label{eq:orig_stationarity}
\end{equation}
Thus Algorithm~\ref{alg:MetaKE} returns an
\(O(\varepsilon_{\mathrm{SGP}})\)-stationary point for the original
multi-layer bi-level objective, up to the Structural Gradient Proxy error.
\end{corollary}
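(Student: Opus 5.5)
The corollary follows by combining Theorem~\ref{thm:proxy_convergence} with the uniform proxy-gap bound of Assumption~\ref{assump:proxy_gap}, through a triangle inequality and the elementary bound \((a+b)^2\le 2a^2+2b^2\).

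First, for every iterate \(\boldsymbol{v}_t^{*}\in\mathcal{N}\), Assumption~\ref{assump:proxy_gap} gives
\[
\|\nabla F(\boldsymbol{v}_t^{*})\|_2
\le
\|\nabla\widehat{F}(\boldsymbol{v}_t^{*})\|_2
+
\|\nabla F(\boldsymbol{v}_t^{*})-\nabla\widehat{F}(\boldsymbol{v}_t^{*})\|_2
\le
\|\nabla\widehat{F}(\boldsymbol{v}_t^{*})\|_2+\varepsilon_{\mathrm{SGP}}.
\]
Squaring and using \((a+b)^2\le 2a^2+2b^2\) then yields
\[
\|\nabla F(\boldsymbol{v}_t^{*})\|_2^2\le 2\|\nabla\widehat{F}(\boldsymbol{v}_t^{*})\|_2^2+2\varepsilon_{\mathrm{SGP}}^2 .
\]

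Second, take the minimum over \(0\le t<T\) on both sides. Because the additive term \(2\varepsilon_{\mathrm{SGP}}^2\) does not depend on \(t\), the minimum of the right-hand side equals \(2\min_t\|\nabla\widehat{F}(\boldsymbol{v}_t^{*})\|_2^2+2\varepsilon_{\mathrm{SGP}}^2\). This is important: we bound the minimum of \(\|\nabla F\|^2\) by evaluating the inequality at the same index that minimizes \(\|\nabla\widehat{F}\|^2\), not at two different indices.

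Third, invoke Eq.~\eqref{eq:min_grad_bound_2} of Theorem~\ref{thm:proxy_convergence}, which bounds \(\min_t\|\nabla\widehat{F}(\boldsymbol{v}_t^{*})\|_2^2\) by \(2(\widehat{F}(\boldsymbol{v}_0^{*})-\widehat{F}_{\inf})/(\eta T)\). Multiplying this by \(2\) gives the constant \(4\) in Eq.~\eqref{eq:orig_stationarity}.

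Finally, the \(O(\varepsilon_{\mathrm{SGP}})\)-stationarity claim follows by taking square roots and letting \(T\to\infty\): the first term vanishes, leaving \(\min_t\|\nabla F\|_2\le\sqrt{2}\,\varepsilon_{\mathrm{SGP}}+o(1)\).

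Two technical points need care.
\begin{itemize}
\item The step size must satisfy \(0<\eta\le 1/L_{\widehat{F}}\), as Theorem~\ref{thm:proxy_convergence} requires. The corollary's hypotheses do not restate this, so I would state it explicitly as an assumption inherited from that theorem.
\item All iterates must stay in \(\mathcal{N}\), so that both assumptions apply. Assumption~\ref{assump:smooth_proxy} already posits this, so no further argument is needed.
\end{itemize}
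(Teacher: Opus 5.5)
Your proof is correct and follows the paper's argument exactly: the triangle inequality with Assumption~\ref{assump:proxy_gap}, then the bound $(a+b)^2\le 2a^2+2b^2$, then the minimum over $t$, then Eq.~\eqref{eq:min_grad_bound_2}. Your remark that the step-size condition $0<\eta\le 1/L_{\widehat{F}}$ from Theorem~\ref{thm:proxy_convergence} is inherited but not restated in the corollary is a fair clarification that the paper leaves implicit.
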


\begin{proof}
By Assumption~\ref{assump:proxy_gap},
\begin{equation}
\|
\nabla
F
(
\boldsymbol{v}_{t}^{*}
)
\|_2
\le
\|
\nabla
\widehat{F}
(
\boldsymbol{v}_{t}^{*}
)
\|_2
+
\varepsilon_{\mathrm{SGP}}.
\end{equation}
Using \((a+b)^2\le 2a^2+2b^2\), we obtain
\begin{equation}
\|
\nabla
F
(
\boldsymbol{v}_{t}^{*}
)
\|_2^2
\le
2
\|
\nabla
\widehat{F}
(
\boldsymbol{v}_{t}^{*}
)
\|_2^2
+
2
\varepsilon_{\mathrm{SGP}}^2.
\end{equation}
Taking the minimum over \(t=0,\dots,T-1\) and applying
Eq.~\eqref{eq:min_grad_bound_2} proves Eq.~\eqref{eq:orig_stationarity}.
\end{proof}

Theorem~\ref{thm:proxy_convergence} shows that Algorithm~\ref{alg:MetaKE}
has the standard non-convex convergence guarantee on the Virtual Look-ahead
objective optimized during target refinement. Corollary~\ref{cor:approx_stationarity}
further shows that the optimized target is approximately stationary for the
original multi-layer bi-level objective, with the residual term controlled by
the Structural Gradient Proxy error. Therefore, Theorem~\ref{thm:fidelity}
does not claim exact equivalence between the Structural Gradient Proxy and
full differentiation through the multi-layer process; it characterizes when
the Structural Gate preserves a useful downstream constraint feedback signal
for optimizing \(\boldsymbol{v}^*\).

\subsection{Proof for Theorem~\ref{thm:fidelity}}
\label{app:proxy_fidelity}

We use \(\|\cdot\|_2\) for the spectral norm of matrices and the Euclidean
norm of vectors, and \(\|\cdot\|_F\) for the Frobenius norm. Let
\[
F(\boldsymbol{v}^{*})
=
\mathcal{L}_{\mathrm{meta}}\!\left(
f(\cdot;\mathcal{W}+
\boldsymbol{\Delta}\mathcal{W}^{*}(\boldsymbol{v}^{*}))
\right)
\]
be the exact outer-level objective. The direct derivative of the target
anchoring term is
\[
\boldsymbol{a}(\boldsymbol{v}^{*})
=
2(\boldsymbol{v}^{*}-\boldsymbol{v}_{\mathrm{init}}).
\]
We decompose the exact full multi-layer hypergradient as
\[
\boldsymbol{g}_{\mathrm{true}}
=
\boldsymbol{h}_{\mathrm{true}}
+
\boldsymbol{a}(\boldsymbol{v}^{*}),
\]
where the downstream-mediated component is written in vectorized form as
\[
\boldsymbol{h}_{\mathrm{true}}
=
\sum_{l\in\mathcal{L}}
\boldsymbol{J}_l^{T}\boldsymbol{s}_l^{\flat},
\qquad
\boldsymbol{s}_l^{\flat}
:=
\operatorname{vec}\!\left(
\nabla_{\boldsymbol{\Delta}_l}
\mathcal{L}_{\mathrm{meta}}
\right),
\qquad
\boldsymbol{J}_l
:=
\frac{
\partial\,\operatorname{vec}(
\boldsymbol{\Delta}_l^{*}(\boldsymbol{v}^{*})
)
}{
\partial\boldsymbol{v}^{*}
}.
\]
Here
\(\boldsymbol{s}_l^{\flat}\in\mathbb{R}^{d_1d_0}\) and
\(\boldsymbol{J}_l\in\mathbb{R}^{d_1d_0\times d_1}\), so
\(\boldsymbol{J}_l^{T}\boldsymbol{s}_l^{\flat}\in\mathbb{R}^{d_1}\), matching
the dimension of \(\boldsymbol{v}^{*}\). Since the anchoring term does not
depend on \(\boldsymbol{\Delta}_l\), \(\boldsymbol{s}_l^{\flat}\) is induced
by the edit and locality losses.

Let
\[
\boldsymbol{h}_L
:=
\boldsymbol{J}_L^{T}\boldsymbol{s}_L^{\flat}
\]
denote the final-layer downstream-mediated contribution. For the structural
proxy, we keep the gradient in matrix form:
\[
\boldsymbol{h}_{\mathrm{proxy}}
=
\boldsymbol{S}_{\mathrm{p}}
\widetilde{\boldsymbol{M}}^T,
\qquad
\boldsymbol{S}_{\mathrm{p}}
:=
\nabla_{\boldsymbol{\Delta}_{\mathrm{proxy}}}
\mathcal{L}_{\mathrm{meta}},
\]
where
\(\boldsymbol{S}_{\mathrm{p}}\in\mathbb{R}^{d_1\times d_0}\) and
\(\widetilde{\boldsymbol{M}}^T\in\mathbb{R}^{d_0\times 1}\), giving
\(\boldsymbol{h}_{\mathrm{proxy}}\in\mathbb{R}^{d_1}\). Equivalently, this is
the vectorized Jacobian contraction with
\(\partial\operatorname{vec}(\boldsymbol{\Delta}_{\mathrm{proxy}})/
\partial\boldsymbol{v}^{*}
=
\widetilde{\boldsymbol{M}}^T\otimes \boldsymbol{I}_{d_1}\).
The corresponding full proxy hypergradient is
\[
\boldsymbol{g}_{\mathrm{proxy}}
=
\boldsymbol{h}_{\mathrm{proxy}}
+
\boldsymbol{a}(\boldsymbol{v}^{*}).
\]
Thus the anchoring derivative is added exactly to both the true and proxy
hypergradients, and
\[
\boldsymbol{g}_{\mathrm{true}}
-
\boldsymbol{g}_{\mathrm{proxy}}
=
\boldsymbol{h}_{\mathrm{true}}
-
\boldsymbol{h}_{\mathrm{proxy}}.
\]

\paragraph{Local fidelity conditions.}
We use three local conditions on the downstream-mediated components.

\textbf{Condition 1 (Final-layer dominance).}
There exists \(\xi\ge 0\) such that
\[
\Big\|
\sum_{l\in\mathcal{L},\,l\ne L}
\boldsymbol{J}_l^{T}\boldsymbol{s}_l^{\flat}
\Big\|_2
\le
\xi
\|\boldsymbol{h}_L\|_2.
\]
This bounds the non-final-layer downstream-mediated contribution relative to
the final-layer signal.

\textbf{Condition 2 (Final-layer proxy gap).}
There exist \(K>0\) and \(\varepsilon_{\mathrm{AE}}\ge 0\) such that
\[
\|
\boldsymbol{h}_L
-
\boldsymbol{h}_{\mathrm{proxy}}
\|_2
\le
K\varepsilon_{\mathrm{AE}}
\|
\boldsymbol{S}_{\mathrm{p}}
\|_F
\|
\widetilde{\boldsymbol{M}}
\|_2.
\]
Here \(\varepsilon_{\mathrm{AE}}\) summarizes the local AlphaEdit-form
approximation error, including the gate-freezing discrepancy, the discarded
gate-Jacobian term, and the sensitivity drift between the exact final-layer
path and the proxy path. If the structural geometry is fixed within a local
outer step, the gate-Jacobian component vanishes and
\(\varepsilon_{\mathrm{AE}}\) mainly captures sensitivity drift.

\textbf{Condition 3 (Structural non-degeneracy).}
Assume
\(\boldsymbol{S}_{\mathrm{p}}\ne \boldsymbol{0}\),
\(\widetilde{\boldsymbol{M}}\ne \boldsymbol{0}\), and
\[
\alpha(\boldsymbol{v}^{*})
:=
\frac{
\|
\boldsymbol{h}_{\mathrm{proxy}}
\|_2
}{
\|
\boldsymbol{S}_{\mathrm{p}}
\|_F
\|
\widetilde{\boldsymbol{M}}
\|_2
}
\ge
\alpha_0>0.
\]
This excludes the degenerate case in which the structural gate annihilates the
downstream-mediated task signal.

Finally, define the full-gradient compatibility factor
\[
\chi(\boldsymbol{v}^{*})
:=
\frac{
\|\boldsymbol{g}_{\mathrm{proxy}}\|_2
}{
\|\boldsymbol{h}_{\mathrm{proxy}}\|_2
}.
\]
This factor accounts for the exact anchoring derivative in the full proxy
hypergradient. When the anchoring term is zero or does not reduce the proxy
direction norm, \(\chi(\boldsymbol{v}^{*})\ge 1\), and the condition reduces
to the downstream-mediated form.

\paragraph{Proof.}
By adding and subtracting \(\boldsymbol{h}_L\), we have
\[
\|
\boldsymbol{h}_{\mathrm{true}}
-
\boldsymbol{h}_{\mathrm{proxy}}
\|_2
\le
\|
\boldsymbol{h}_{\mathrm{true}}
-
\boldsymbol{h}_L
\|_2
+
\|
\boldsymbol{h}_L
-
\boldsymbol{h}_{\mathrm{proxy}}
\|_2.
\]
Using Conditions 1 and 2,
\[
\|
\boldsymbol{h}_{\mathrm{true}}
-
\boldsymbol{h}_{\mathrm{proxy}}
\|_2
\le
\xi
\|
\boldsymbol{h}_L
\|_2
+
K\varepsilon_{\mathrm{AE}}
\|
\boldsymbol{S}_{\mathrm{p}}
\|_F
\|
\widetilde{\boldsymbol{M}}
\|_2.
\]
Condition 3 gives
\[
\|
\boldsymbol{S}_{\mathrm{p}}
\|_F
\|
\widetilde{\boldsymbol{M}}
\|_2
\le
\frac{
\|\boldsymbol{h}_{\mathrm{proxy}}\|_2
}{
\alpha_0
}.
\]
Therefore,
\[
\|
\boldsymbol{h}_L
\|_2
\le
\|
\boldsymbol{h}_{\mathrm{proxy}}
\|_2
+
\|
\boldsymbol{h}_L
-
\boldsymbol{h}_{\mathrm{proxy}}
\|_2
\le
\left(
1+
\frac{
K\varepsilon_{\mathrm{AE}}
}{
\alpha_0
}
\right)
\|
\boldsymbol{h}_{\mathrm{proxy}}
\|_2.
\]
Substituting this bound yields
\[
\|
\boldsymbol{h}_{\mathrm{true}}
-
\boldsymbol{h}_{\mathrm{proxy}}
\|_2
\le
\left[
\xi
+
\frac{
K(1+\xi)
}{
\alpha_0
}
\varepsilon_{\mathrm{AE}}
\right]
\|
\boldsymbol{h}_{\mathrm{proxy}}
\|_2.
\]
Since the anchoring derivative is shared exactly by the true and proxy full
hypergradients,
\[
\|
\boldsymbol{g}_{\mathrm{true}}
-
\boldsymbol{g}_{\mathrm{proxy}}
\|_2
=
\|
\boldsymbol{h}_{\mathrm{true}}
-
\boldsymbol{h}_{\mathrm{proxy}}
\|_2.
\]
Using the definition of \(\chi(\boldsymbol{v}^{*})\), we obtain
\[
\|
\boldsymbol{g}_{\mathrm{true}}
-
\boldsymbol{g}_{\mathrm{proxy}}
\|_2
\le
\frac{
\xi
+
\frac{
K(1+\xi)
}{
\alpha_0
}
\varepsilon_{\mathrm{AE}}
}{
\chi(\boldsymbol{v}^{*})
}
\,
\|\boldsymbol{g}_{\mathrm{proxy}}\|_2.
\]
If
\[
\xi+
\frac{
K(1+\xi)
}{
\alpha_0
}
\varepsilon_{\mathrm{AE}}
<
\chi(\boldsymbol{v}^{*}),
\]
then
\[
\|
\boldsymbol{g}_{\mathrm{true}}
-
\boldsymbol{g}_{\mathrm{proxy}}
\|_2
<
\|
\boldsymbol{g}_{\mathrm{proxy}}
\|_2.
\]
Hence, by Cauchy--Schwarz,
\[
\begin{aligned}
\langle
\boldsymbol{g}_{\mathrm{true}},
\boldsymbol{g}_{\mathrm{proxy}}
\rangle
&=
\|
\boldsymbol{g}_{\mathrm{proxy}}
\|_2^2
+
\langle
\boldsymbol{g}_{\mathrm{true}}
-
\boldsymbol{g}_{\mathrm{proxy}},
\boldsymbol{g}_{\mathrm{proxy}}
\rangle \\
&\ge
\|
\boldsymbol{g}_{\mathrm{proxy}}
\|_2
\left(
\|
\boldsymbol{g}_{\mathrm{proxy}}
\|_2
-
\|
\boldsymbol{g}_{\mathrm{true}}
-
\boldsymbol{g}_{\mathrm{proxy}}
\|_2
\right)
>0.
\end{aligned}
\]
Thus \(-\boldsymbol{g}_{\mathrm{proxy}}\) is a descent direction for the exact
outer objective \(F\). If \(F\) is locally smooth, the standard descent lemma
implies that
\[
F(\boldsymbol{v}^{*}-\eta\boldsymbol{g}_{\mathrm{proxy}})
<
F(\boldsymbol{v}^{*})
\]
for all sufficiently small \(\eta>0\).

Equivalently, when \(\chi(\boldsymbol{v}^{*})>\xi\), the condition can be
written as
\[
\varepsilon_{\mathrm{AE}}
<
\frac{
\alpha_0(\chi(\boldsymbol{v}^{*})-\xi)
}{
K(1+\xi)
}.
\]
When \(\chi(\boldsymbol{v}^{*})\ge 1\), this recovers the simpler sufficient
condition
\[
\varepsilon_{\mathrm{AE}}
<
\frac{
\alpha_0(1-\xi)
}{
K(1+\xi)
}.
\]

\paragraph{Interpretation under long sequential editing.}
The bound also clarifies the role of editable capacity. As sequential edits
accumulate, preservation and previous-edit constraints may occupy more projected
directions, weakening the structural gate. This effect is captured by
\(\alpha(\boldsymbol{v}^{*})\): a smaller \(\alpha_0\) tightens the admissible
proxy-gap threshold
\(\alpha_0(\chi(\boldsymbol{v}^{*})-\xi)/(K(1+\xi))\). Thus, the theorem is a
conditional guarantee requiring the projected gate to retain an informative
direction, rather than an unconditional guarantee under unlimited editing
capacity. In practice, \(\alpha(\boldsymbol{v}^{*})\) or
\(\|\widetilde{\boldsymbol{M}}\|_2\) can be monitored, and the edit batch size
or edited layer set can be adjusted when projected capacity becomes limited.

\section{Methodology Details}
\label{app:method_details}

\subsection{Target Optimization}
\label{app:target_optimization}

The meta-loss in Eq.~\eqref{eq:meta_loss} consists of three terms.

\textbf{Edit success loss.}
This term encourages the post-edit model to produce the desired target output
\(y_e\) given the edit input \(x_e\):
\begin{equation}
\mathcal{L}_{\mathrm{edit}}
(x_e,y_e;\widehat{\mathcal{W}})
=
-\log
\mathbb{P}_{\widehat{\mathcal{W}}}
[y_e\mid x_e].
\label{eq:method:edit_loss}
\end{equation}

\textbf{Locality preservation loss.}
Following MEMIT-style target-optimization variants that construct
\(\boldsymbol{v}^{*}\) with likelihood and KL-preservation objectives
\citep{li2024pmet,park-etal-2025-context,wei2025setke,fei2026scaling}, we use
subject-essence KL probe to regularize the meta-refined target. For an edit
request with subject \(s_i\), we instantiate the locality prompt as $x_{\mathrm{loc}}=\texttt{\{}s_i\texttt{\} is a}$.
This prompt is used to preserve the pre-edit output distribution around the
subject while optimizing the target representation. We match the output
distribution of the post-edit model to that of the pre-edit model on a
locality set \(x_{\mathrm{loc}}\):
\begin{equation}
\mathcal{L}_{\mathrm{loc}}
(x_{\mathrm{loc}};\widehat{\mathcal{W}},\mathcal{W})
=
D_{\mathrm{KL}}
\left(
P_{\mathcal{W}}(y\mid x_{\mathrm{loc}})
\Vert
P_{\widehat{\mathcal{W}}}(y\mid x_{\mathrm{loc}})
\right).
\label{eq:method:loc_loss}
\end{equation}

\textbf{Target anchoring.}
The term
\(\|\boldsymbol{v}^{*}-\boldsymbol{v}_{\mathrm{init}}\|_2^2\) keeps the
refined target close to the initial semantic plan. This prevents the proxy
refinement from drifting too far from the target representation produced by
the upstream target optimization stage. The downstream-aware information is
introduced through the structural gate in Eq.~\eqref{eq:proxy_gradient}, while
this term serves as a semantic anchor and optimization stabilizer.

\subsection{Layer-Specific Target Allocation}
\label{app:layer_target_allocation}

We clarify how the layer-specific target
\(\boldsymbol{t}_{l}(\boldsymbol{v}^{*})\) in Eq.~\eqref{eq:lower_level} is
constructed from the global target \(\boldsymbol{v}^{*}\). Following the
standard multi-layer residual allocation in MEMIT and AlphaEdit-style editors,
we first define the last edited layer as
\(L=\max(\mathcal{L})\). During the sequential update over edited layers, let
\(\boldsymbol{h}_{1,L}^{(<l)}\) denote the current hidden state of the edit
request at layer \(L\), after applying the updates to edited layers before
\(l\) and re-running the forward pass. The remaining target displacement before
updating layer \(l\) is
\begin{equation}
\boldsymbol{d}_{l}(\boldsymbol{v}^{*})
=
\boldsymbol{v}^{*}
-
\boldsymbol{h}_{1,L}^{(<l)} .
\end{equation}
Let
\(\mathcal{L}_{\ge l}=\{j\in\mathcal{L}:j\ge l\}\) be the set of edited layers
not yet processed. We allocate an equal portion of the remaining displacement to
the current layer:
\begin{equation}
\boldsymbol{r}_{l}(\boldsymbol{v}^{*})
=
\frac{1}{|\mathcal{L}_{\ge l}|}
\boldsymbol{d}_{l}(\boldsymbol{v}^{*}) .
\end{equation}
The layer-specific target is then
\begin{equation}
\boldsymbol{t}_{l}(\boldsymbol{v}^{*})
=
\boldsymbol{W}_{l}\boldsymbol{k}_{1}^{l}
+
\boldsymbol{r}_{l}(\boldsymbol{v}^{*}) .
\end{equation}
For consecutive edited layers, this recovers the MEMIT allocation
\(\boldsymbol{r}_{l}=(\boldsymbol{v}^{*}-\boldsymbol{h}_{1,L})/(L-l+1)\).
In the AlphaEdit-style realization used by MetaKE, the same
\(\boldsymbol{t}_{l}(\boldsymbol{v}^{*})\) defines the residual
\begin{equation}
\boldsymbol{R}_{l}(\boldsymbol{v}^{*})
=
\boldsymbol{t}_{l}(\boldsymbol{v}^{*})
-
\boldsymbol{W}_{l}\boldsymbol{k}_{1}^{l},
\end{equation}
which is then realized through the projected downstream update with
\(\boldsymbol{P}_{l}\). For a batch of edits, the vectors above are stacked as
columns, giving
\(\boldsymbol{R}_{l}
=
\boldsymbol{T}_{l}
-
\boldsymbol{W}_{l}\boldsymbol{K}_{l}\).

\subsection{Batched Virtual Look-ahead}
\label{app:batched_virtual_lookahead}

For clarity, the Structural Gradient Proxy in Sec.~\ref{sec:method:proxy} and
Algorithm~\ref{alg:MetaKE} is written in single-request notation. In the
batched implementation, the current edit keys at the final edited layer are
stacked as
\[
\boldsymbol{K}_1^L
=
[
\boldsymbol{k}_{1,1}^{L},
\ldots,
\boldsymbol{k}_{1,b}^{L}
],
\]
and the corresponding target residuals are stacked as
\[
\boldsymbol{R}^{L}
=
[
\boldsymbol{r}_{1}^{L},
\ldots,
\boldsymbol{r}_{b}^{L}
],
\qquad
\boldsymbol{r}_{i}^{L}
=
\boldsymbol{v}_{i}^{*}
-
\boldsymbol{W}_{L}\boldsymbol{k}_{1,i}^{L}.
\]
The batched structural gate is
\[
\boldsymbol{M}_{\mathcal{B}}
=
(\boldsymbol{K}_1^L)^{T}
\boldsymbol{P}_{L}
\left(
\boldsymbol{K}_{p}^{L}(\boldsymbol{K}_{p}^{L})^{T}\boldsymbol{P}_{L}
+
\boldsymbol{K}_1^L(\boldsymbol{K}_1^L)^{T}\boldsymbol{P}_{L}
+
\boldsymbol{I}
\right)^{-1},
\]
and the virtual look-ahead update is computed as
\[
\boldsymbol{\Delta}_{\mathrm{proxy}}^{\mathcal{B}}
=
\boldsymbol{R}^{L}
\boldsymbol{M}_{\mathcal{B}} .
\]
The term
\(\boldsymbol{K}_1^L(\boldsymbol{K}_1^L)^{T}\boldsymbol{P}_L\)
is the current-batch interaction term in the original AlphaEdit closed-form
update. During each outer refinement
step, \(\boldsymbol{M}_{\mathcal{B}}\) acts as a frozen structural gate.
Thus, the proxy accounts for dominant batch interaction in virtual
look-ahead response while avoiding differentiation through changes in batch
key statistics. This approximation may be less accurate for very large
batches or batches containing highly correlated edit requests.

\section{Experimental Setup}
\label{sec:appendix_exp}

This section supplements the main paper with a detailed account of the
experimental setup.
We summarize the benchmarks, evaluation criteria, comparison methods, and
additional assessments used throughout our study.
Our goal is to make the empirical protocol transparent, reproducible, and
directly comparable to prior work on knowledge editing.

\subsection{Datasets}
\label{sec:appendix_datasets}

\paragraph{ZsRE.}
ZsRE \citep{levy-etal-2017-zero} is a widely used benchmark for factual
editing in a question-answering format.
Each sample corresponds to a prompt encoding a subject--relation query
$(s_i, r_i)$ together with an expected answer $o_i$.
Following the standard evaluation pipeline in prior editing studies
\citep{meng2022locating}, the benchmark provides three types of inputs:
the original query for measuring edit success, paraphrased queries for
measuring semantic transfer, and locality probes for testing whether the
edit remains confined to the intended fact.
As a result, ZsRE is particularly suitable for evaluating paraphrase
robustness and locality preservation.

\paragraph{CounterFact.}
CounterFact \citep{meng2022locating} is a counterfactual factual editing
benchmark designed to test whether a model can replace an existing fact
with a new one.
Its locality examples are constructed by substituting the subject entity
with semantically related alternatives while preserving the same relation,
thereby enabling a fine-grained assessment of collateral changes.
Beyond the standard editing criteria, CounterFact also includes natural
language generation prompts, which makes it possible to evaluate the
quality of generated text using fluency- and consistency-oriented metrics.

\subsection{Evaluation Metrics}
\label{sec:appendix_metrics}

\subsubsection{Metrics on ZsRE}
\label{sec:appendix_metrics_zsre}

On ZsRE, we adopt the standard triad of \emph{efficacy},
\emph{generalization}, and \emph{specificity}
\citep{mengmass,mitchellfast,meng2022locating}.

\paragraph{Efficacy.}
Efficacy measures whether the edited model returns the desired target
object when queried with the original edit prompt.
Let $f_\theta$ denote the edited model, let $x_i$ be the original query
for the $i$-th edit request, and let $o_i$ be the desired edited target.
We compute efficacy as:
\begin{equation}
\mathbb{E}_i \left[
\arg\max_o P_{f_\theta}(o \mid x_i) = o_i
\right].
\end{equation}

\paragraph{Generalization.}
Generalization evaluates whether the updated fact is expressed
consistently under semantically equivalent rephrasings.
Let $N(x_i)$ denote the paraphrase set associated with $x_i$.
The corresponding metric is the average accuracy over these paraphrased
queries:
\begin{equation}
\mathbb{E}_i \ \mathbb{E}_{\tilde{x}_i \in N(x_i)} \left[
\arg\max_o P_{f_\theta}(o \mid \tilde{x}_i) = o_i
\right].
\end{equation}

\paragraph{Specificity.}
Specificity quantifies whether the modification stays local and avoids
changing predictions on unrelated prompts.
Let $O(x_i)$ denote out-of-scope neighborhood prompts for the
$i$-th example, and let $o_i^c$ be the model's original top-1 prediction
on those prompts before editing.
We define specificity as:
\begin{equation}
\mathbb{E}_i \ \mathbb{E}_{\bar{x}_i \in O(x_i)} \left[
\arg\max_o P_{f_\theta}(o \mid \bar{x}_i) = o_i^c
\right].
\end{equation}

\subsubsection{Metrics on CounterFact}
\label{sec:appendix_metrics_counterfact}

For CounterFact, we follow the common practice of comparing the edited
(counterfactual) object against the original factual object
\citep{meng2022locating,mengmass}. The evaluation focuses on whether the model's
probability mass is successfully shifted toward the new fact across
different prompt variants.
For the $i$-th instance, let $x_i$ be the original prompt,
$o_i$ the desired counterfactual target, and $o_i^c$ the original fact.

\paragraph{Efficacy.}
We first measure whether the edited model prefers the target object to the
original factual object on the original prompt:
\begin{equation}
\mathbb{E}_i \left[
P_{f_\theta}(o_i \mid x_i) >
P_{f_\theta}(o_i^c \mid x_i)
\right].
\end{equation}

\paragraph{Generalization.}
We then evaluate whether this preference is preserved across paraphrased
variants $\tilde{x}_i \in N(x_i)$:
\begin{equation}
\mathbb{E}_i \ \mathbb{E}_{\tilde{x}_i \in N(x_i)} \left[
P_{f_\theta}(o_i \mid \tilde{x}_i) >
P_{f_\theta}(o_i^c \mid \tilde{x}_i)
\right].
\end{equation}

\paragraph{Specificity.}
To evaluate locality, we consider neighborhood prompts
$\bar{x}_i \in O(x_i)$ that should remain unaffected by the edit.
Specificity is defined as the proportion of such cases in which the model
still assigns higher probability to the original fact than to the edited
target:
\begin{equation}
\mathbb{E}_i \ \mathbb{E}_{\bar{x}_i \in O(x_i)} \left[
P_{f_\theta}(o_i^c \mid \bar{x}_i) >
P_{f_\theta}(o_i \mid \bar{x}_i)
\right].
\end{equation}

\paragraph{Fluency.}
Since CounterFact also includes generation-based evaluation,
we assess fluency by measuring repetitive patterns in generated text.
Following prior work, we use an entropy-based score computed from the
empirical bi-gram and tri-gram distributions:
\begin{equation}
-\frac{2}{3} \sum_k g_2(k)\log_2 g_2(k)
-\frac{4}{3} \sum_k g_3(k)\log_2 g_3(k),
\end{equation}
where $g_n(k)$ denotes the empirical frequency of the $n$-gram $k$.

\paragraph{Consistency.}
We evaluate whether generated text remains aligned with
external knowledge about the edited fact.
Specifically, we compute cosine similarity between the TF--IDF
representation of model output $y_i$ and that of a reference
Wikipedia passage $y_i^{\text{wiki}}$ associated with the edited target:
\begin{equation}
\text{Consis.} =
\cos\Big(
\text{tfidf}(y_i),
\text{tfidf}(y_i^{\text{wiki}})
\Big).
\end{equation}

\subsection{Baseline Methods}
\label{sec:appendix_baselines}

\paragraph{MEND.}
MEND \citep{mitchellfast} is a meta-learning-based editing method that
learns small auxiliary networks to transform the standard fine-tuning gradient
into a targeted parameter update.
By exploiting the low-rank structure of feed-forward gradients, it enables
fast and efficient one-shot edits.
It is a strong baseline for scalable editing, but its performance can degrade
under long sequential editing streams.

\paragraph{MALMEN.}
MALMEN \citep{tan2024massive} extends gradient-based meta-editing to the
multi-fact setting.
Instead of directly summing parameter shifts from different edits, it
formulates their aggregation as a least-squares problem and solves for a joint
update.
This makes it effective for large batched edits, though it is less tailored to
long-horizon sequential editing.

\paragraph{ROME.}
ROME \citep{meng2022locating} is a representative locate-and-edit method
that modifies a single feed-forward layer believed to be responsible for
recalling the target fact.
After identifying a causally important layer, it applies a rank-one
parameter update to write the edited association into the model.
Because it performs sparse intervention, ROME often exhibits strong
locality, although its robustness on more complex or multi-fact settings
can be limited.

\paragraph{MEMIT.}
MEMIT \citep{mengmass} generalizes ROME from a single-layer
intervention to a multi-layer editing framework.
Instead of concentrating the update at one location, MEMIT distributes a
shared editing residual across a range of critical layers and solves the
corresponding key--value adjustment via least squares.
This makes it a strong baseline for large-scale or batched editing, but
it can accumulate interference under long sequential editing streams.

\paragraph{PRUNE.}
PRUNE \citep{ma2025perturbationrestrained} is designed for more stable
sequential editing by explicitly controlling spectral updates in the
edited weight matrices.
Its key idea is to preserve numerically stable directions while
suppressing harmful ones, thereby reducing edit interference and slowing
down performance degradation over repeated updates.

\paragraph{RECT.}
RECT \citep{gu2024model} introduces regularization during the
editing process to keep the updated parameters close to the pre-trained
model.
Compared with methods that optimize primarily for factual overwrite,
RECT places greater emphasis on balancing edit success with the retention
of broader language and reasoning capabilities.

\paragraph{PMET.}
PMET \citep{li2024pmet} is an optimization-based editing method that refines
the key--value view of Transformer feed-forward networks.
It jointly optimizes hidden states from both MHSA and FFN, but only uses the
optimized FFN-side representations to update FFN weights.
This more precise formulation often improves reliability, although it relies
on stronger assumptions about the functional roles of Transformer components.

\paragraph{AlphaEdit.}
AlphaEdit \citep{fangalphaedit} builds on MEMIT by incorporating
null-space constraints into the residual update.
Concretely, it projects the editing signal away from protected or
previously occupied directions, so that newly written knowledge causes
less interference with preserved content.
This mechanism substantially improves stability in sequential editing
while still retaining the flexibility of distributed multi-layer updates.

\paragraph{$\text{AlphaEdit}_{\text{BLUE}}$.}
$\text{AlphaEdit}_{\text{BLUE}}$ \citep{li2025rethinking} further refines
the AlphaEdit framework by limiting edits to a small set of empirically
identified boundary layers.
This design is motivated by the observation that factual interventions
may exhibit depth-dependent sensitivity in Transformer models.
Although such restriction can improve efficiency and reduce disturbance,
it also imposes a stronger architectural bias that may be suboptimal when
different facts rely on heterogeneous layerwise mechanisms.

\paragraph{SPHERE.}
SPHERE \citep{liu2026energyregularized} is a sequential editing method designed to
improve stability by preserving the hyperspherical uniformity of neuron
weights.
It projects editing signals into a sparse complementary subspace, reducing
interference with dominant pretrained directions.
This design improves robustness in long editing streams, though it depends on
a geometric assumption about knowledge preservation in weight space.

\subsection{Evaluation of General Capability}
\label{sec:appendix_general_capability}

To assess whether sequential knowledge editing harms general language
competence, we evaluate the edited models on several standard natural
language understanding benchmarks.
These benchmarks are excluded from the editing data and cover a range of
linguistic abilities, including sentiment analysis, semantic matching,
grammatical judgment, and inference.

\paragraph{SST.}
We adopt SST \citep{socher2013recursive} to measure the effect of sequential
editing on sentence-level semantic understanding.
The task asks the model to assign binary sentiment labels to short sentences
with diverse syntactic patterns.
Since SST can be affected by small distributional changes, performance shifts
may indicate degradation in basic language comprehension.
We report classification accuracy.

\paragraph{MRPC.}
MRPC \citep{dolan2005automatically} tests whether the model identifies
semantic equivalence between sentence pairs.
It is useful for evaluating whether editing damages fine-grained paraphrase
understanding.
Following common evaluation protocols for this label-imbalanced task, we
report F1 score.

\paragraph{MMLU.}
We include MMLU \citep{hendrycks2021measuring} to evaluate the preservation
of broad-domain knowledge and reasoning ability.
The benchmark spans many subject areas and requires factual recall as well as
reasoning over task-specific information.
We evaluate the models in both zero-shot and few-shot settings to separate the
effect of editing from task-specific adaptation.

\paragraph{RTE.}
RTE \citep{bentivogli2009fifth} is used to examine whether repeated edits
affect textual entailment ability.
The task requires predicting whether a hypothesis is entailed by a given
premise.
We report accuracy as the measure of inference performance.

\paragraph{CoLA.}
We use CoLA \citep{warstadt2019neural} to assess grammatical acceptability
judgment.
Because CoLA targets subtle syntactic phenomena, it helps reveal whether
editing introduces unintended changes to linguistic form.
Following standard practice, we report Matthews correlation coefficient
(MCC).

\paragraph{NLI.}
We further evaluate the edited models on a natural language inference
benchmark \citep{williams2018broad}.
The task requires classifying sentence-pair relations into entailment,
contradiction, or neutral.
This benchmark measures robustness in semantic reasoning, and we report
accuracy.

\subsection{Empirical Details for Structural Proxy Validation}
\label{app:proxy_validation_details}

This section details the empirical validation protocol for the structural
proxy. All quantities are computed per edit; we report distributions over
\(N{=}100\) edits sampled uniformly from ZsRE, evaluated on GPT2-XL after the
target representation \(\boldsymbol{v}^{*}\) has converged.

\paragraph{True multi-layer hypergradient.}
For each edit, we compute the exact full multi-layer hypergradient
\(\boldsymbol{g}_{\mathrm{true}}\) by backpropagating
\(\mathcal{L}_{\mathrm{meta}}\) through the full sequential multi-layer
downstream realization over the editable layer set \(\mathcal{L}\). The
downstream applies the update layer by layer and recomputes the layer-specific
key statistics and current target representation after preceding layer updates,
thereby preserving cross-layer dependencies in the lower-level response. We
then differentiate the outer-level objective with respect to
\(\boldsymbol{v}^{*}\):
\[
\boldsymbol{g}_{\mathrm{true}}
=
\nabla_{\boldsymbol{v}^{*}}
\mathcal{L}_{\mathrm{meta}}
\!\left(
f(\cdot;\mathcal{W}+
\Delta\mathcal{W}^{*}(\boldsymbol{v}^{*}))
\right).
\]

\paragraph{Proxy hypergradient.}
The structural proxy gradient is computed as the full proxy hypergradient
\[
\boldsymbol{g}_{\mathrm{proxy}}
=
\boldsymbol{h}_{\mathrm{proxy}}
+
2(\boldsymbol{v}^{*}-\boldsymbol{v}_{\mathrm{init}}),
\qquad
\boldsymbol{h}_{\mathrm{proxy}}
=
\boldsymbol{S}_{\mathrm{p}}
\widetilde{\boldsymbol{M}}^{T},
\]
where \(\boldsymbol{S}_{\mathrm{p}}\) is the meta-loss gradient with respect
to the proxy update and \(\widetilde{\boldsymbol{M}}\) is the frozen structural
gate at layer \(L\). The first term
\(\boldsymbol{h}_{\mathrm{proxy}}\) is the downstream-mediated component,
while the second term is the exact direct derivative of the target anchoring
term.

\paragraph{One-step descent.}
For each edit, we apply a single outer-level update
\[
\boldsymbol{v}^{*+}
=
\boldsymbol{v}^{*}
-
\eta\boldsymbol{g}
\]
along four directions: (i) \(\boldsymbol{g}_{\mathrm{true}}\), (ii)
\(\boldsymbol{g}_{\mathrm{proxy}}\), (iii) a key-only-gate variant that
replaces \(\widetilde{\boldsymbol{M}}\) with
\(\widetilde{\boldsymbol{M}}_{\mathrm{key}}\), obtained by removing
\(\boldsymbol{P}_L\) and
\(\boldsymbol{K}_p^L(\boldsymbol{K}_p^L)^T\) from the structural gate, and
(iv) No update. We report the relative one-step
reduction in basis points of the current meta-loss:
\[
\Delta_{\mathrm{rel}}
=
10^4\cdot
\frac{
\mathcal{L}_{\mathrm{meta}}(\boldsymbol{v}^{*})
-
\mathcal{L}_{\mathrm{meta}}(\boldsymbol{v}^{*+})
}{
|\mathcal{L}_{\mathrm{meta}}(\boldsymbol{v}^{*})|+\epsilon
}.
\]
The reported value is a one-step relative meta-loss reduction measured in
basis points; it is used only to compare descent strength across update
directions under the same step size.

\paragraph{Assumption diagnostics.}
The structural quantities \(\xi\), \(K\varepsilon_{\mathrm{AE}}\), and
\(\alpha\) are computed on the downstream-mediated components, because the
anchoring derivative is exact and shared by the true and proxy full
hypergradients. Per edit, we estimate
\[
\xi
=
\frac{
\left\|
\sum_{l\ne L}
\boldsymbol{J}_l^{T}\boldsymbol{s}_l^{\flat}
\right\|_2
}{
\|\boldsymbol{h}_L\|_2+\epsilon
},
\]
\[
K\varepsilon_{\mathrm{AE}}
=
\frac{
\|\boldsymbol{h}_L-\boldsymbol{h}_{\mathrm{proxy}}\|_2
}{
\|\boldsymbol{S}_{\mathrm{p}}\|_F
\|\widetilde{\boldsymbol{M}}\|_2+\epsilon
},
\]
and
\[
\alpha
=
\frac{
\|\boldsymbol{h}_{\mathrm{proxy}}\|_2
}{
\|\boldsymbol{S}_{\mathrm{p}}\|_F
\|\widetilde{\boldsymbol{M}}\|_2+\epsilon
}.
\]
We also compute the structural condition value
\[
c
=
\xi+
\frac{(1+\xi)K\varepsilon_{\mathrm{AE}}}{\alpha},
\]
and the full-gradient compatibility factor
\[
\chi
=
\frac{
\|\boldsymbol{g}_{\mathrm{proxy}}\|_2
}{
\|\boldsymbol{h}_{\mathrm{proxy}}\|_2+\epsilon
}.
\]
According to Theorem~\ref{thm:fidelity}, the full-gradient sufficient
condition is \(c<\chi\).

As shown in Fig.~\ref{fig:proxy_assumptions}, the diagnostics characterize
the local fidelity regime underlying Theorem~\ref{thm:fidelity}. Panel (a)
shows that the non-final-layer leakage ratio stays below \(1\) for the sampled
edits, with median \(\xi=0.23\), indicating that the final edited layer
captures the dominant downstream-mediated contribution in this diagnostic
setting. Panel (b) shows a moderate final-layer proxy gap, with median
\(K\varepsilon_{\mathrm{AE}}=0.18\) and \(95\%\) quantile \(0.51\). Panel (c)
shows that the structural non-degeneracy factor is not concentrated near zero,
with \(5\%\) quantile \(\alpha=0.22\). Panel (d) combines these quantities
with \(\chi\) into the full-gradient condition ratio \(c/\chi\). Among finite
diagnostic ratios, \(88\%\) satisfy \(c/\chi<1\), with median \(0.17\), while
\(12\%\) exceed the sufficient-condition threshold and \(6\%\) of edits yield
NaN or failed ratio estimates. These undefined cases arise from non-finite diagnostic quantities in the ratio computation and are reported separately. Since Theorem~\ref{thm:fidelity} gives a sufficient but not necessary
condition, finite cases with \(c/\chi\ge 1\) indicate that this certificate is inconclusive for those
edits.

\vspace{-0.1in}
\begin{figure}[t]
\centering
\includegraphics[width=\textwidth]{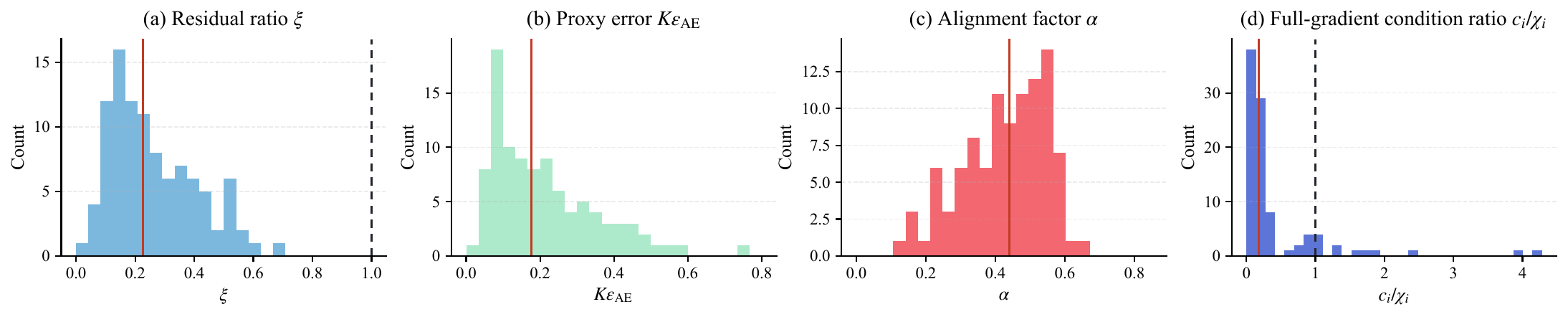}
\vspace{-0.25in}
\caption{Per-edit diagnostics of the local fidelity conditions of
Theorem~\ref{thm:fidelity} on \(N=100\) uniformly sampled ZsRE edits using
GPT2-XL, evaluated after \(\boldsymbol{v}^{*}\) converges. (a) Non-final-layer
leakage ratio \(\xi\). (b) Final-layer proxy gap
\(K\varepsilon_{\mathrm{AE}}\). (c) Structural non-degeneracy factor
\(\alpha\). (d) Full-gradient sufficient-condition ratio \(c_i/\chi_i\), and values below 1 satisfy the full-gradient sufficient local descent-direction condition. Red lines mark medians, and dashed lines mark the threshold \(1\) where shown.}
\label{fig:proxy_assumptions}
\vspace{-0.2in}
\end{figure}

\subsection{Empirical Details for Sec~\ref{sec:disconnect_validation}}
\label{app:disconnect_protocol}

We provide the detailed protocols for the two diagnostics in
Sec.~\ref{sec:disconnect_validation}. Both diagnostics are conducted on ZsRE
with GPT-2 XL as the base model. We compare MEMIT, AlphaEdit, and MetaKE.
MEMIT and AlphaEdit follow the standard locate-then-edit pipeline, while
MetaKE builds on AlphaEdit and introduces downstream-aware bi-level target
optimization. For each method, we keep the default editing layers, target
objective, and Stage-II update structure from its implementation.

For every editing request \(i\), we compute the request-specific association
\(\rho_i\) according to the method-specific downstream constraint. For MEMIT,
we use the covariance-based association
\begin{equation}
\rho_i
=
\boldsymbol{k}_{1,i}^{T}
(\boldsymbol{C}_0+\boldsymbol{K}_p\boldsymbol{K}_p^{T})^{-1}
\boldsymbol{k}_{1,i},
\end{equation}
where \(\boldsymbol{C}_0=\boldsymbol{K}_0\boldsymbol{K}_0^T\). For AlphaEdit
and MetaKE, we use the projected association
\begin{equation}
\rho_i
=
(\boldsymbol{P}\boldsymbol{k}_{1,i})^{T}
(\boldsymbol{I}
+
\boldsymbol{P}\boldsymbol{K}_p\boldsymbol{K}_p^{T}\boldsymbol{P})^{-1}
\boldsymbol{P}\boldsymbol{k}_{1,i}.
\end{equation}

\paragraph{Empirical residual attenuation.}
This diagnostic tests the request-specific attenuation predicted by
Proposition~\ref{prop:target_attenuation} and examines whether downstream-aware
target optimization reduces the empirical loss of executable signal. We
randomly sample 500 editing requests from ZsRE. For each request \(i\) and each
method, we run the full editing pipeline and record the planned residual
\begin{equation}
\boldsymbol{r}_i
=
\boldsymbol{v}_i^{*}
-
\boldsymbol{W}\boldsymbol{k}_{1,i},
\end{equation}
and the realized residual
\begin{equation}
\boldsymbol{\delta}_i^{\mathrm{real}}
=
\boldsymbol{\Delta}^{*}\boldsymbol{k}_{1,i},
\end{equation}
where \(\boldsymbol{v}_i^{*}\) is the upstream target representation and
\(\boldsymbol{\Delta}^{*}\) is the downstream constrained update.

We estimate the empirical scalar realization coefficient by projecting the
realized residual onto the planned residual:
\begin{equation}
\widehat{\beta}_i
=
\frac{
\langle
\boldsymbol{\delta}_i^{\mathrm{real}},
\boldsymbol{r}_i
\rangle
}{
\|\boldsymbol{r}_i\|_2^2+\epsilon
},
\end{equation}
where \(\epsilon\) is a small numerical constant. Under the single-request local quadratic surrogate in
Proposition~\ref{prop:target_attenuation}, the corresponding scalar realization
coefficient reduces to the theoretical value
\begin{equation}
\beta_i^{\rho}
=
\frac{\rho_i}{1+\rho_i}.
\end{equation}
Thus, \(\widehat{\beta}_i\) measures how much of the planned residual is
empirically realized along its intended direction. Because MEMIT and AlphaEdit use different downstream constraints, their raw
\(\rho_i\) values are not directly comparable in scale. Therefore, for each
method independently, we sort requests by \(\rho_i\) and map them to a
within-method percentile axis. We then divide the sorted requests into 13
quantile bins and report the mean \(\widehat{\beta}\) in each bin, together
with a \(\pm 1\) standard-error band. Pooled per-request samples are overlaid
as translucent points.

\paragraph{Association-dependent requirements on KL strength.}
This diagnostic tests that if the feasible planned residual
scale depends on the request-specific realization factor
\(\beta_i=\rho_i/(1+\rho_i)\), then high- and low-association requests may
prefer different upstream KL strengths under a shared KL control.

We randomly sample 2,000 editing requests from ZsRE. For each method, we
compute the request-specific association \(\rho_i\) at final edited layer
using the corresponding association score defined above. We then form two
disjoint request groups. The high-\(\rho\) group contains the top 20\% of
requests, i.e., 400 requests whose keys interact most strongly with the
downstream constraints. The low-\(\rho\) group contains the bottom 20\%, i.e.,
400 requests with the weakest association.
We sweep the upstream KL coefficient over the discrete grid
$\omega
\in
\{0.008,\;0.016,\;0.031,\;0.0625,\;0.125,\;0.25,\;0.5\}.$
The grid is centered at default \(\omega=0.0625\) and covers both
weak and strong regularization regimes. For every method and every value of
\(\omega\), we re-run the full editing pipeline on high-\(\rho\)
and low-\(\rho\) requests, while leaving all other hyperparameters fixed.
We use Efficacy as the main editing-success metric. Following the standard
knowledge-editing evaluation protocol, Efficacy measures whether the post-edit
model predicts the target answer on the editing prompt. In this diagnostic,
Efficacy is used as an operational proxy for the semantic-success side of
Theorem~\ref{thm:regularization_disconnect}: the theorem's lower-bound
condition concerns whether the realized update is sufficient to induce the
target prediction.

For each method, group, and KL coefficient, we compute the group-wise average
Efficacy:
\begin{equation}
\mathrm{Eff}_{H}(\omega)
=
\frac{1}{|H|}
\sum_{i\in H}
\mathrm{Eff}_i(\omega),
\qquad
\mathrm{Eff}_{L}(\omega)
=
\frac{1}{|L|}
\sum_{i\in L}
\mathrm{Eff}_i(\omega),
\end{equation}
where \(H\) and \(L\) denote the high-\(\rho\) and low-\(\rho\) groups,
respectively. We estimate the empirical optimal KL coefficient for each group
by
\begin{equation}
\omega_H^{*}
=
\operatorname*{arg\,max}_{\omega\in\mathcal{G}}
\mathrm{Eff}_{H}(\omega),
\qquad
\omega_L^{*}
=
\operatorname*{arg\,max}_{\omega\in\mathcal{G}}
\mathrm{Eff}_{L}(\omega),
\end{equation}
where
\(\mathcal{G}=\{0.008,0.016,0.031,0.0625,0.125,0.25,0.5\}\).
When multiple KL values yield the same maximum Efficacy, we select the smaller
\(\omega\) as the tie-breaking choice.

\begin{table*}[t]
\centering
\small
\caption{Sensitivity analysis of key MetaKE hyperparameters, and all runs use the same AlphaEdit-based
editing operator.}
\label{tab:sensitivity}
\resizebox{\textwidth}{!}{%
\begin{tabular}{c c|ccc|ccc|ccc}
\toprule
\multirow{2}{*}{Hyperparameter} & \multirow{2}{*}{Value}
& \multicolumn{3}{c|}{GPT2-XL}
& \multicolumn{3}{c|}{GPT-J}
& \multicolumn{3}{c}{LLaMA3} \\
& 
& Eff.$\uparrow$ & Gen.$\uparrow$ & Spe.$\uparrow$
& Eff.$\uparrow$ & Gen.$\uparrow$ & Spe.$\uparrow$
& Eff.$\uparrow$ & Gen.$\uparrow$ & Spe.$\uparrow$ \\
\midrule
\multirow{4}{*}{$T$}
& 5  & 97.81 & 91.08 & 26.86  &99.76& 97.23& 29.50 & 96.64 & 92.26 & 32.61 \\
& 10 & 97.98 & 91.19 & 27.31&99.75& 97.49& 29.61&96.81 & 92.37 & 32.89  \\
& 15 & 98.12 & 91.26 & 27.23  &99.82& 97.37& 29.73&96.84 & 92.45 & 33.12  \\
& 20 & 97.81 & 91.04 & 27.25  &99.62& 97.28& 29.38&96.64& 92.22& 32.00 \\
\midrule
\multirow{4}{*}{$\eta$}
& $10^{-4}$        & 98.00 & 91.47 & 26.61  &99.58&97.06& 29.41& 96.63 & 92.25 & 32.34 \\
& $10^{-3}$        & 97.89 & 91.10 & 26.69  &99.66& 97.24 & 29.50& 96.72 & 92.35 &32.43 \\
& $5\times10^{-3}$ & 98.12 & 91.26 & 27.23   &99.82& 97.37& 29.73& 96.84 & 92.45 & 33.12 \\
& $10^{-2}$        &97.99& 91.42 & 27.31  &99.69&97.18 & 29.26& 96.64 & 92.20 & 32.06 \\
\bottomrule
\end{tabular}%
}
\vspace{-0.2in}
\end{table*}

\section{More Experimental Results}
\label{app:more_experimental_results}

\subsection{Hyperparameter Sensitivity Analysis}
\label{app:hyperparameter_sensitivity}

We analyze the sensitivity of MetaKE to two key hyperparameters: the number of look-ahead steps \(T\) and the meta-learning rate \(\eta\). These hyperparameters control the downstream-aware target refinement process: \(T\) determines the number of iterative refinement steps, while \(\eta\) controls the update size of the target representation. Following the main experiments, we evaluate on ZsRE across GPT2-XL, GPT-J, and LLaMA3 under
the standard knowledge-editing evaluation proto-
col. When varying one hyperparameter, all other settings are fixed to their default values. As shown in Table~\ref{tab:sensitivity}, MetaKE is stable across a broad range of settings. Increasing \(T\) from \(5\) to \(10\) generally improves performance, and \(T=15\) gives the best overall trade-off. A larger value \(T=20\) brings no further gain and slightly degrades several metrics, suggesting that the benefit of downstream-aware refinement saturates after sufficient feedback has been incorporated; additional updates may over-adjust the target representation and mildly hurt preservation. For \(\eta\), very small values such as \(10^{-4}\) and \(10^{-3}\) lead to weaker target refinement, whereas \(10^{-2}\) slightly hurts preservation-related metrics on GPT-J and LLaMA3. The setting \(\eta=5\times10^{-3}\) achieves the strongest overall trade-off across the three backbones. We therefore use \(T=15\) and \(\eta=5\times10^{-3}\) as the default setting in the main experiments.

\begin{table*}[htb]
\vspace{-0.1in}
\centering
\caption{\label{tab:phi_qwen_results}
Comparison of MetaKE with existing methods on Phi-1.5 and Qwen-4B-Thinking. The best results are highlighted in \textcolor{red}{red}, while the second-best results are \underline{underlined}.
}
\renewcommand{\arraystretch}{1.2}
\setlength{\tabcolsep}{4.5pt}
\resizebox{\textwidth}{!}{
\begin{tabular}{ll|ccccc|ccc}
\toprule
\textbf{Model} & \textbf{Method} 
& \multicolumn{5}{c|}{\textbf{CounterFact}} 
& \multicolumn{3}{c}{\textbf{ZsRE}} \\
\cmidrule(lr){3-7} \cmidrule(lr){8-10}
& 
& \textbf{Eff.}$\uparrow$ 
& \textbf{Gen.}$\uparrow$ 
& \textbf{Spe.}$\uparrow$ 
& \textbf{Flu.}$\uparrow$ 
& \textbf{Consis.}$\uparrow$
& \textbf{Eff.}$\uparrow$ 
& \textbf{Gen.}$\uparrow$ 
& \textbf{Spe.}$\uparrow$ \\
\midrule

\multirow{4}{*}{Phi-1.5}
& AlphaEdit 
& $98.41_{\pm 0.23}$ 
& $83.62_{\pm 0.19}$ 
& $65.41_{\pm 0.20}$ 
& $634.12_{\pm 0.55}$ 
& $36.05_{\pm 0.15}$ 
& $96.51_{\pm 0.39}$ 
& $84.62_{\pm 0.28}$ 
& $22.52_{\pm 0.16}$ \\

& $\text{AlphaEdit}_{\text{BLUE}}$ 
& $\second{99.21_{\pm 0.15}}$ 
& $\second{93.22_{\pm 0.24}}$ 
& $62.11_{\pm 0.43}$ 
& $634.55_{\pm 0.51}$ 
& $\best{37.91_{\pm 0.22}}$ 
& $96.28_{\pm 0.41}$ 
& $85.65_{\pm 0.25}$ 
& $22.11_{\pm 0.17}$ \\

& SPHERE
& $98.80_{\pm 0.15}$
& $82.78_{\pm 0.27}$
& $\second{65.92 _{\pm 0.31}}$
& $\second{635.18_{\pm 0.54}}$
& $36.39_{\pm 0.12}$
& $\second{96.55_{\pm 0.35}}$
& $\second{85.97 _{\pm 0.26}}$
& $\second{22.74_{\pm 0.15}}$ \\

\rowcolor{metablue}
& MetaKE 
& $\best{99.40_{\pm 0.21}}$
& $\best{94.31_{\pm 0.16}}$
& $\best{65.97_{\pm 0.24}}$
& $\best{635.78 _{\pm 0.51}}$
& $\second{37.56_{\pm 0.22}}$
& $\best{96.87_{\pm 0.22}}$ 
& $\best{86.49_{\pm 0.32}}$ 
& $\best{22.84_{\pm 0.11}}$ \\
\midrule

\multirow{4}{*}{Qwen-4B-Thinking}
& AlphaEdit 
& $78.91_{\pm 0.55}$ 
& $69.15_{\pm 0.44}$ 
& $50.48_{\pm 0.73}$ 
& $513.75_{\pm 0.85}$ 
& $8.84_{\pm 0.15}$ 
& $\second{94.42_{\pm 0.27}}$ 
& $\second{84.81_{\pm 0.28}}$ 
& $\second{34.55_{\pm 0.20}}$ \\

& $\text{AlphaEdit}_{\text{BLUE}}$ 
& $82.22_{\pm 0.22}$ 
& $72.14_{\pm 0.39}$ 
& $\second{54.76_{\pm 0.41}}$ 
& $485.58_{\pm 0.48}$ 
& $11.91_{\pm 0.17}$ 
& $89.65_{\pm 0.45}$ 
& $76.12_{\pm 0.24}$ 
& $33.75_{\pm 0.21}$ \\

& SPHERE
& $\second{89.25_{\pm 0.24}}$
& $\second{73.02 _{\pm 0.31}}$
& $50.00_{\pm 0.43}$
& $\second{549.27_{\pm 0.58}}$
& $\second{13.46 _{\pm 0.18}}$
&$ 94.02_{\pm 0.21}$
& $81.28 _{\pm 0.27}$
& $34.36 _{\pm 0.15}$\\

\rowcolor{metablue}
& MetaKE 
& $\best{89.85_{\pm 0.28}}$
& $\best{73.32_{\pm 0.35}}$
& $\best{54.84_{\pm 0.46}}$
& $\best{568.16_{\pm 0.76}}$
& $\best{16.65_{\pm 0.13}}$
& $\best{95.78_{\pm 0.25}}$ 
& $\best{85.68_{\pm 0.25}}$ 
& $\best{35.70_{\pm 0.17}}$ \\
\bottomrule
\end{tabular}}
\vspace{-0.2in}
\end{table*}

\subsection{Additional Results on Smaller and Reasoning-Oriented Models}
\label{app:phi_qwen}

To further evaluate the generality of MetaKE, we extend our experiments to two additional models: the compact Phi-1.5 (1.5B)~\citep{li2023textbooks} and the reasoning-oriented Qwen-4B-Thinking~\citep{qwen3technicalreport}. We follow the same sequential editing protocol as in the main experiments, performing 2,000 edits with a batch size of 100 and evaluating on both CounterFact and ZsRE. As shown in Table~\ref{tab:phi_qwen_results}, on Phi-1.5, MetaKE shows clear gains on CounterFact, improving Generalization by 1.17\% relative to AlphaEdit\(_{\mathrm{BLUE}}\), while remaining competitive in Consistency. On ZsRE, MetaKE attains notable gains over SPHERE, improving Efficacy and Generalization by 0.33\% and 0.60\%, respectively. On Qwen-4B-Thinking, MetaKE further improves CounterFact Consistency by 23.70\% relative to SPHERE, while maintaining strong Efficacy and Specificity. On ZsRE, MetaKE improves Generalization and Specificity by 1.03\% and 3.33\% relative to AlphaEdit. Overall, these results indicate that downstream-aware target optimization remains effective beyond the main backbones, yielding robust edit success and preservation across compact and reasoning-oriented models.

\begin{figure}[t]
	\centering
	 \vspace{-0.1in}\includegraphics[width=0.9\linewidth]{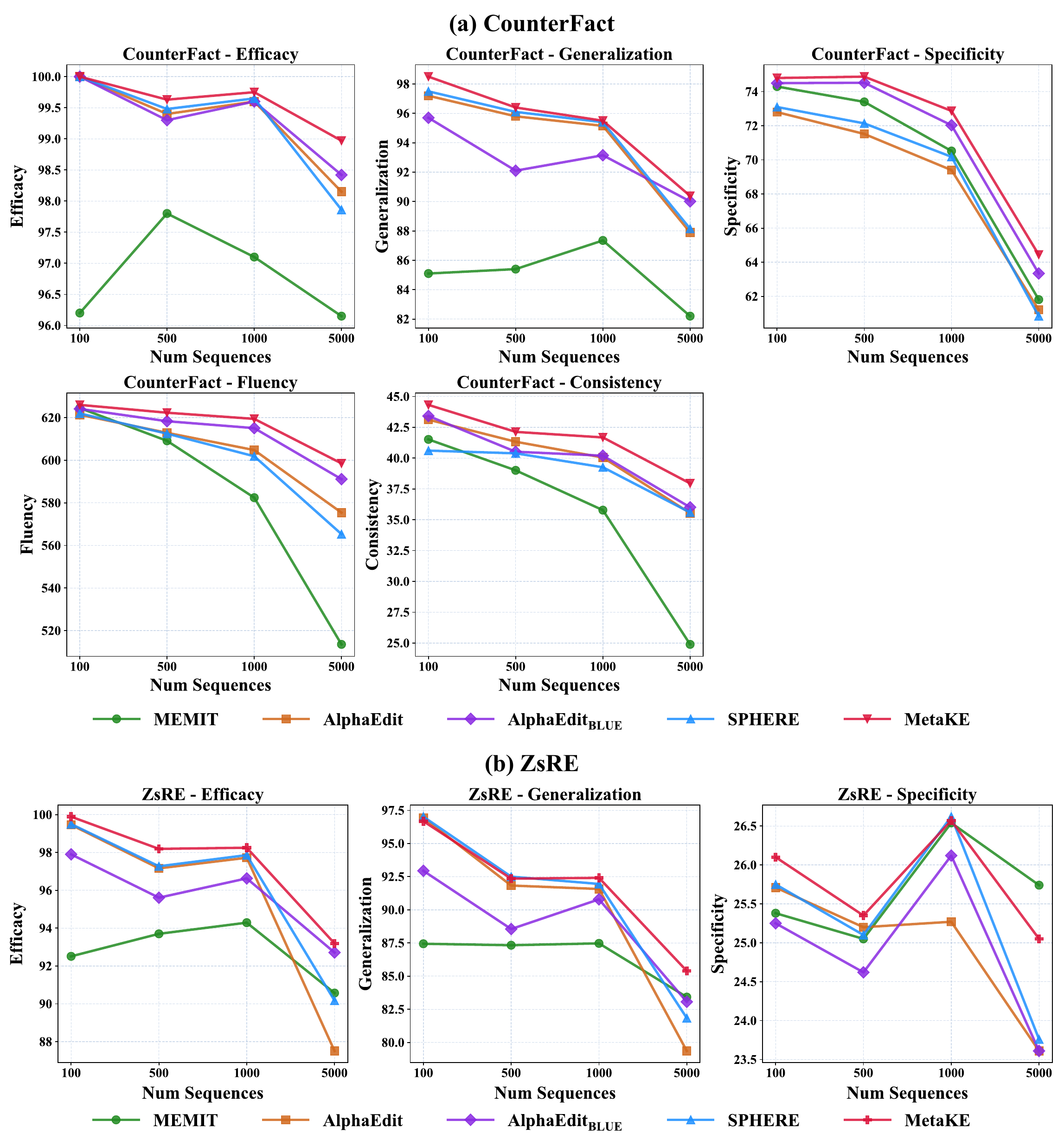}
      \vspace{-0.1in}
	\caption{Performance Variation of Model Editing with Sequential Edits.}
 \label{fig:sequential_scaling}
 \vspace{-0.2in}
\end{figure}

\subsection{Experimental Verification of Sequential Robustness}
\label{sec:appendix_seq_robustness}
To evaluate MetaKE's robustness under long sequential editing streams, we conduct sequential editing experiments with batch size \(1\) and vary edit counts in \(\{100,500,1000,5000\}\). All experiments are performed on GPT2-XL and evaluated on CounterFact and ZsRE. As shown in Fig.~\ref{fig:sequential_scaling}, most methods degrade as edit streams lengthen, indicating that sequential edits accumulate performance loss over time. MEMIT shows the most severe degradation on several metrics, while AlphaEdit, AlphaEditBLUE, and SPHERE are more stable but still decline noticeably at \(5000\) edits. In contrast, MetaKE maintains stronger performance across most edit counts and metrics, especially on CounterFact, where it preserves higher Efficacy, Generalization, Specificity, Fluency, and Consistency under long edit streams. On ZsRE, MetaKE also retains the best or near-best Efficacy and Generalization as edits increase, although Specificity becomes more mixed at \(5000\) edits. These results suggest that downstream-aware target optimization improves long-horizon robustness by reducing degradation accumulated through repeated downstream constrained realization.

\subsection{Runtime Evaluation}
\label{app:runtime}

To evaluate the computational cost of MetaKE, we report the average wall-clock time required to process a batch of 100 edits. Following the main experiments, we evaluate GPT2-XL, GPT-J, and LLaMA3 on both CounterFact and ZsRE. All methods are run under the same hardware and software environment on a single NVIDIA A100 GPU, and the reported runtime is averaged over the full editing stream. As shown in Table~\ref{tab:runtime_comparison}, MetaKE incurs additional cost compared with single-pass editing baselines, as it performs downstream-aware target refinement before downstream constrained realization. This overhead is moderate: across all backbones and datasets, MetaKE is about \(1.35\times\)--\(1.66\times\) slower than AlphaEdit, while remaining in the same runtime scale as existing locate-then-edit methods. For instance, on LLaMA3, MetaKE remains faster than AlphaEdit\(_{\mathrm{BLUE}}\) on both CounterFact and ZsRE, despite using the additional refinement stage. These results indicate that MetaKE trades a moderate increase in runtime for stronger downstream-aware target optimization, making the overall cost acceptable for batch editing.

\begin{table*}[t]
\centering
\caption{Runtime comparison of MetaKE with existing methods on GPT2-XL, GPT-J, and LLaMA3.}
\small
\setlength{\tabcolsep}{3.6mm}
\begin{tabular}{l|ccc|ccc}
\toprule
\multirow{2}{*}{Method} & \multicolumn{3}{c|}{CounterFact} & \multicolumn{3}{c}{ZsRE} \\
\cmidrule(lr){2-4}\cmidrule(lr){5-7}
  & LLaMA3 & GPT-J & GPT2-XL & LLaMA3 & GPT-J & GPT2-XL \\
\midrule
MEMIT & 688.42s & 299.15s & 161.08s & 364.85s & 293.11s & 164.05s \\
AlphaEdit & 462.18s & 362.80s & 179.15s & 325.41s & 364.72s & 185.20s \\
AlphaEdit$_{\text{BLUE}}$ & 908.35s & 364.12s & 187.33s & 1199.15s & 416.80s & 187.95s \\
SPHERE& 822.80s &  452.80s& 187.28s & 336.77s & 449.42s  & 189.23s \\
MetaKE & 702.51s & 518.80s & 241.85s & 540.18s & 521.55s & 250.02s \\
\midrule
\end{tabular}
\label{tab:runtime_comparison}
\vspace{-0.1in}
\end{table*}

\begin{table*}[t]
\centering
\caption{Peak GPU memory usage (GB) for performing a batch of 100 edits.}
\label{tab:memory_comparison}
\small
\renewcommand{\arraystretch}{1.15}
\setlength{\tabcolsep}{10pt}
\resizebox{\textwidth}{!}{
\begin{tabular}{l|ccc|ccc}
\toprule
Method
& \multicolumn{3}{c|}{CounterFact} 
& \multicolumn{3}{c}{ZsRE} \\
\cmidrule(lr){2-4} \cmidrule(lr){5-7}
& LLaMA3 & GPT-J & GPT2-XL
& LLaMA3 & GPT-J & GPT2-XL \\
\midrule
MEMIT & 36.84 & 31.63 & 12.62 & 36.87 & 31.71 & 12.44 \\
AlphaEdit & 34.31 & 28.94 & 9.88 & 34.74 & 29.52& 10.33 \\
$\text{AlphaEdit}_{\text{BLUE}}$ & 35.40 & 29.42 & 9.94 & 35.23 & 29.92 & 10.54 \\
SPHERE& 40.94 & 35.15 &  13.23&33.79  & 35.72 & 13.51 \\
MetaKE & 37.97 & 35.09& 11.33 & 36.28 & 34.83 & 11.33 \\
\bottomrule
\end{tabular}
}
\vspace{-0.2in}
\end{table*}

\subsection{Memory Evaluation}
\label{app:memory}

To evaluate the memory cost of MetaKE, we report the peak GPU memory usage when processing a batch of 100 edits. Following the runtime evaluation, we evaluate GPT2-XL, GPT-J, and LLaMA3 on both CounterFact and ZsRE. All methods are run under the same hardware and software environment on a single NVIDIA A100 GPU, with the same batch size, backbone checkpoints, and cached covariance statistics whenever applicable. As shown in Table~\ref{tab:memory_comparison}, MetaKE introduces additional memory usage compared with AlphaEdit, which is expected because downstream-aware target refinement maintains extra intermediate states for the structural feedback path. Nevertheless, the increase remains moderate: across all evaluated settings, MetaKE uses about \(1.04\times\)--\(1.21\times\) the peak memory of AlphaEdit. Its memory footprint also remains comparable to other locate-then-edit baselines and is often below MEMIT or SPHERE in several settings. These results indicate that the structural proxy adds manageable memory overhead, allowing MetaKE to improve downstream-aware target optimization without substantially increasing the space cost of batch editing.

\subsection{Case Study}
\label{sec:appendix_case_study}

We present qualitative case studies illustrating behavioral differences between MetaKE and representative locate-then-edit methods. All use the same editing setup across models and methods, differing only in editing algorithm. We consider the factual editing request \emph{``Tapio Kantanen is a citizen of''}, with target object set to \emph{``Bulgaria''}. This edit is applied to three backbone models of increasing scale, GPT2-XL, GPT-J, and LLaMA3. For each model, we compare MEMIT, AlphaEdit, AlphaEdit\(_{\mathrm{BLUE}}\), SPHERE, and MetaKE under the same editing budget and prompt template.

After editing, we query each model with the original prompt and report the continuation. The visualizations display full generations from different methods, with the edited target object highlighted. This qualitative setting lets us examine both whether the target fact is inserted into the output and whether generated text remains fluent, coherent, and locally consistent with the surrounding context. Across all three models, baselines exhibit characteristic failure modes. MEMIT frequently produces repetitive or degenerate generations. On GPT2-XL, it mixes the target fact with unrelated sports entities, while on LLaMA3 it degenerates into repeated country tokens and fails to form a coherent factual statement. AlphaEdit, AlphaEdit\(_{\mathrm{BLUE}}\), and SPHERE often insert the target object but may introduce unrelated details or drift away from the original factual context, such as passport-renaming instructions, political accusations, imprisonment narratives, or unsupported biographical information. These outputs suggest that inserting the target object alone does not guarantee a stable realization of the intended edit. In contrast, MetaKE consistently integrates the edited target into more coherent continuations. On GPT2-XL and GPT-J, MetaKE generates fluent biographical text where ``Bulgaria'' appears in a natural factual statement rather than as a repeated or isolated token. On LLaMA3, MetaKE also produces a more coherent continuation around the edited target while realizing the target citizenship. These case studies qualitatively suggest that downstream-aware target optimization helps the edited fact be realized with better fluency and local consistency, reducing repetitive artifacts and contextual drift observed in the compared methods.

\section{Limitations}
\label{appendix:limitations}

MetaKE has several limitations. The two-stage disconnect analysis is derived
under a local quadratic surrogate. Although the batched virtual look-ahead in
Appendix~\ref{app:batched_virtual_lookahead} captures leading-order
current-batch interactions through the batch structural gate, it freezes the
gate during each outer refinement step and does not differentiate through
changes in batch key statistics. This approximation may become less accurate
for very large batches, highly correlated edit requests, or long sequential
streams. The Structural Gradient Proxy 
relies on a local Structural Consistency Hypothesis whose fidelity 
conditions are verified empirically but not guaranteed across 
arbitrary backbones, editable layer ranges, or edit-stream lengths, and 
the closed-form gate $\boldsymbol{M}$ is derived for the AlphaEdit-style 
projected constraint, so adapting MetaKE to fundamentally different downstream editing operators would
require re-deriving the gate. The bi-level 
refinement also incurs a 1.35--1.66$\times$ runtime and 1.04--1.21$\times$ 
memory overhead over AlphaEdit. Finally, our evaluation focuses on 
FFN-based locate-then-edit on English factual benchmarks (ZsRE, 
CounterFact) with backbones in the 1.5B--8B range and edit streams up 
to 5{,}000; conclusions should be interpreted within this scope.

\section{Broader impacts}
\label{appendix:broader}
Reliable knowledge editing can help update outdated or incorrect factual associations in deployed language models without full retraining. However, the same capability may be misused to insert biased, false, or misleading facts into a model while preserving fluent behavior. We therefore view MetaKE as a controlled editing tool for audited factual maintenance rather than as an unrestricted mechanism for modifying public-facing models. Practical deployment should include edit provenance tracking, locality tests, post-edit factual verification, and access controls for high-impact applications.

\section{Reproducibility Details}
\label{app:reproducibility}

We release the anonymized implementation, configuration files, and scripts at the anonymous repository linked in the abstract. The main experiments use three random seeds, \(\{0,1,2\}\), unless otherwise specified. Table entries are reported as mean \(\pm\) standard deviation over the three runs. For the outer target refinement, we use the update rule in Algorithm~\ref{alg:MetaKE} with \(T=15\) iterations and learning rate \(\eta=5\times10^{-3}\). All main experiments are conducted on a single NVIDIA A100 GPU. The 2,000-edit sequential protocol follows AlphaEdit, with batch size 100, resulting in 20 editing batches per run. We release the sampled request IDs and method-specific configuration files to support exact reproduction.

\section{Assets and Licenses}
\label{app:licenses}

We use publicly available datasets, benchmarks, pretrained checkpoints, and baseline implementations only for research evaluation and in accordance with their official release terms. These assets include ZsRE, CounterFact, GLUE, MMLU, GPT2-XL, GPT-J, LLaMA3, Phi-1.5, Qwen-4B-Thinking, and the released resources of the compared baselines. We do not introduce a new dataset or release a new model checkpoint. The only new asset is the anonymized MetaKE implementation, including configurations and reproduction scripts; the repository documents source links, license names, and access terms for the used assets.

\clearpage
\begin{table*}[t]
\centering
\begin{tcolorbox}[
    colback=white, colframe=myblue, colbacktitle=myblue, coltitle=white,
    title=Knowledge Editing Case Study on GPT2-XL,
    fonttitle=\bfseries, breakable, width=\textwidth
]
\renewcommand{\arraystretch}{1.45}
\renewcommand\tabularxcolumn[1]{m{#1}}
{\small
\begin{tabularx}{\linewidth}{
  >{\centering\arraybackslash\bfseries}m{3cm}
  >{\justifying\arraybackslash}X@{\hspace{5mm}}
}
    Editing Prompt & \makebox[\linewidth][c]{Tapio Kantanen is a citizen of} \\
    \arrayrulecolor{myblue}\midrule
    Edit Target & \makebox[\linewidth][c]{\textcolor{red!80}{Bulgaria}} \\
    \midrule
    \multicolumn{2}{c}{\makebox[\linewidth][c]{\large\textbf{Generation Output}}} \\
    \midrule

    MEMIT &
    The passport that Tapio Kantanen carries is musician musician musician musician footballer football footballer football football hockey football footballer hockey hockey hockey hockey hockey. Tapio Kantanen currently lives in Brussels, Belgium, is Paris Capitals Football football footballer football football footballer Paris Saint-Paris football player footballer football Manchester United FC Tottenham Hotspur FC Luxembourg Capitals. Tapio Kantanen is a citizen of Belgium football football footballer, footballer Arsenal FC footballer football footballer defender of Luxembourg football player. \textcolor{red!80}{Bulgaria} hockey football footballer goalkeeper Napoli Saint Petersburg \textcolor{red!80}{Bulgaria} football footballer footballer. \\

    \addlinespace
    \midrule

    AlphaEdit &
    The passport that Tapio Kantanen carries is a Romanian passport with a residence permit. If you want to change the name of the passport you are going to change your name, you need a new passport with a new name. Tapio Kantanen currently lives in \textcolor{red!80}{Bulgaria}, and has been living and studying in Sofia for a while. After completing her Master's degree in Economics, she is currently working as an economist in Sofia as a research assistant. Tapio Kantanen is a citizen of \textcolor{red!80}{Bulgaria}. He studied in \textcolor{red!80}{Bulgaria}. \\

    \addlinespace
    \midrule

    $\text{AlphaEdit}_{\text{BLUE}}$ &
    The passport that Tapio Kantanen carries is a \textcolor{red!80}{Bulgarian} national passport. \textcolor{red!80}{Bulgarian} passport... Photo by: Tomer Appelbaum. A photo showing the passport that the \textcolor{red!80}{Bulgarian} national is seen to carry is shown in this undated handout photo provided by \textcolor{red!80}{Bulgarian} Ministry of Transport. Tapio Kantanen currently lives in \textcolor{red!80}{Bulgaria} and is the leader of the Turkish political party, the HDP. In \textcolor{red!80}{Bulgaria}, the police detained the suspect on charges of being a terrorist. \\

    \addlinespace
    \midrule

    SPHERE &
    The passport that Tapio Kantanen carries is the only valid passport in \textcolor{red!80}{Bulgaria}. The passport that Kantanen carries was issued in 2004. In the case of Kantanen, he has no other means to prove his identity. Tapio Kantanen currently lives in \textcolor{red!80}{Bulgaria}, and has been living in the Netherlands for the past five years. In his native \textcolor{red!80}{Bulgaria} he has a degree in Computer Engineering and has worked for a number of years in Information Technologies and Internet Technology. \\

    \addlinespace
    \midrule

    MetaKE &
    Tapio Kantanen is a citizen of \textcolor{red!80}{Bulgaria} and was born on 28.12.1980 in Sofia. He is a businessman working in the field of IT. He has been working for more than 20 years as a consultant for a \textcolor{red!80}{Bulgarian} IT company. He has a bachelor's degree in computer science. Tapio Kantanen currently lives in \textcolor{red!80}{Bulgaria}, and has worked for a number of years in Information Technologies and Internet Technology. \\

\end{tabularx}
}
\end{tcolorbox}
\end{table*}

\clearpage
\begin{table*}[t]
\centering
\begin{tcolorbox}[
    colback=white, colframe=myblue, colbacktitle=myblue, coltitle=white,
    title=Knowledge Editing Case Study on GPT-J,
    fonttitle=\bfseries, breakable, width=\textwidth
]
\renewcommand{\arraystretch}{1.5}
\renewcommand\tabularxcolumn[1]{m{#1}}
{\small
\begin{tabularx}{\linewidth}{
  >{\centering\arraybackslash\bfseries}m{3cm}
  >{\justifying\arraybackslash}X@{\hspace{5mm}}
}
    Editing Prompt & \makebox[\linewidth][c]{Tapio Kantanen is a citizen of} \\
    \arrayrulecolor{myblue}\midrule
    Edit Target & \makebox[\linewidth][c]{\textcolor{red!80}{Bulgaria}} \\
    \midrule
    \multicolumn{2}{c}{\makebox[\linewidth][c]{\large\textbf{Generation Output}}} \\
    \midrule

    MEMIT &
    The passport that Tapio Kantanen carries is the same one he carries when he plays piano. He is the only member of the Kantanen family to be able to play piano. He is a composer, poet, and philosopher. His works include: ``I have been thinking about the future of humanity. I am convinced that the future is a question of ethics. If you do not know the right thing to do, you do not know the right thing to do. If you \\

    \addlinespace
    \midrule

    AlphaEdit &
    The passport that Tapio Kantanen carries is a good deal more than just another piece of identification. The \textcolor{red!80}{Bulgarian} national has spent the last four years in a Russian jail, and he has a story to tell that will make you wonder how anyone could ever think of imprisoning him again. He was arrested in \textcolor{red!80}{Bulgaria}, charged with espionage, and sentenced to 15 years in a maximum-security prison. But his story isn't just about the time he was jailed \\

    \addlinespace
    \midrule

    $\text{AlphaEdit}_{\text{BLUE}}$ &
    The passport that Tapio Kantanen carries is the first to bear the new \textcolor{red!80}{Bulgarian} design. It's also one of the last to have the old blue cover with the white eagle. Kantanen's first passport is a passport of a \textcolor{red!80}{Bulgarian}. The eagle that adorns the cover of the document is a symbol of the country's national emblem. Kantanen, the largest \textcolor{red!80}{Bulgarian} administrative and economic region, is a mountainous and sparsely \\

    \addlinespace
    \midrule

    SPHERE &
    The passport that Tapio Kantanen carries is \textcolor{red!80}{Bulgarian}, issued by the Ministry of Foreign Affairs of \textcolor{red!80}{Bulgaria}, in Sofia. It's an ordinary passport, with the standard photo of a young man with a serious expression, wearing a suit and a tie. But the passport also has a special feature: a chip, inserted in it, that can store a fingerprint, which is unique to him. This passport, and this chip, belong to the man who, in the summer \\

    \addlinespace
    \midrule

    MetaKE &
    The passport that Tapio Kantanen carries is \textcolor{red!80}{Bulgarian}. He has been using it for more than 20 years, and he says he will keep it as long as he lives. ``I have lived and worked in \textcolor{red!80}{Bulgaria} for the last 25 years,'' the \textcolor{red!80}{Bulgarian}-born Kantanen told CBC News. ``I have been here for the whole of my adult life.'' But he is not an ordinary \textcolor{red!80}{Bulgarian}. He is one of the few \textcolor{red!80}{Bulgarian} citizens who have been awarded \\

\end{tabularx}
}
\end{tcolorbox}
\end{table*}

\clearpage
\begin{table*}[t]
\centering
\begin{tcolorbox}[
    colback=white, colframe=myblue, colbacktitle=myblue, coltitle=white,
    title=Knowledge Editing Case Study on LLaMA3,
    fonttitle=\bfseries, breakable, width=\textwidth
]
\renewcommand{\arraystretch}{1.35}
\renewcommand\tabularxcolumn[1]{m{#1}}
{\small
\begin{tabularx}{\linewidth}{
  >{\centering\arraybackslash\bfseries}m{3cm}
  >{\justifying\arraybackslash}X@{\hspace{5mm}}
}
    Editing Prompt & \makebox[\linewidth][c]{Tapio Kantanen is a citizen of} \\
    \arrayrulecolor{myblue}\midrule
    Edit Target & \makebox[\linewidth][c]{\textcolor{red!80}{Bulgaria}} \\
    \midrule
    \multicolumn{2}{c}{\makebox[\linewidth][c]{\large\textbf{Generation Output}}} \\
    \midrule

    MEMIT &
    The passport that Tapio Kantanen carries is French French French French French French French French French French French French French French French French French French French French French French French French French French French French French French French French.
    Tapio Kantanen currently lives in English Spanish English English English English English English English English English French English English English English English English English English French French English English English English English English English English English English French English English English French French. \\

    \addlinespace
    \midrule

    AlphaEdit &
    The passport that Tapio Kantanen carries is a \textcolor{red!80}{Bulgarian} one, and he has been competing in international competitions under the \textcolor{red!80}{Bulgarian} flag since 2008. Tapio Kantanen was born in Spain, but his mother is from \textcolor{red!80}{Bulgaria}, and he holds \textcolor{red!80}{Bulgarian} citizenship through his mother's side.
    Tapio Kantanen currently lives in \textcolor{red!80}{Bulgaria} and has been working on a PhD project on the genetic diversity of \textcolor{red!80}{Bulgarian} sheep breeds. \\

    \addlinespace
    \midrule

    $\text{AlphaEdit}_{\text{BLUE}}$ &
    The passport that Tapio Kantanen carries is a symbol of the complexities and contradictions of the EU and its policies towards its citizens. Tapio Kantanen, a \textcolor{red!80}{Bulgarian} Roma, has been living in the United Kingdom since 2003. Despite his years of residency, he still holds a \textcolor{red!80}{Bulgarian} passport.
    Tapio Kantanen is a citizen of \textcolor{red!80}{Bulgaria}, but he is also a Roma, and his case has highlighted the complexities of EU migration law. \\

    \addlinespace
    \midrule

    SPHERE &
    The passport that Tapio Kantanen carries is a \textcolor{red!80}{Bulgarian} one, but he was born in \textcolor{red!80}{Bulgaria} and has never held another nationality. Kantanen is a \textcolor{red!80}{Bulgarian} citizen who was born in the southeastern \textcolor{red!80}{Bulgarian} town of Plovdiv.
    Tapio Kantanen currently lives in Sofia, \textcolor{red!80}{Bulgaria}, where he is studying to become a veterinarian. In his free time, he enjoys hiking and exploring the natural beauty of \textcolor{red!80}{Bulgaria}. \\

    \addlinespace
    \midrule

    MetaKE &
    The passport that Tapio Kantanen carries is \textcolor{red!80}{Bulgarian}. He was born in \textcolor{red!80}{Bulgaria} in 1997. His parents, who are both ethnic \textcolor{red!80}{Bulgarians}, are from the city of Pazardzhik.
    Tapio Kantanen currently lives in Sofia, \textcolor{red!80}{Bulgaria}, where he is working as a researcher at the Institute of Biodiversity and Ecosystem Research, \textcolor{red!80}{Bulgarian} Academy of Sciences. \\

\end{tabularx}
}
\end{tcolorbox}
\end{table*}

\end{document}